\documentclass{article}

\usepackage[preprint]{neurips_2026}

\usepackage[utf8]{inputenc} 
\usepackage[T1]{fontenc}    
\usepackage{hyperref}       
\usepackage{url}            
\usepackage{booktabs}       
\usepackage{amsfonts}       
\usepackage{nicefrac}       
\usepackage{microtype}      
\usepackage{xcolor}         

\usepackage{amsmath}
\usepackage{amssymb}
\usepackage{mathtools}
\usepackage{amsthm}
\usepackage{bbm}
\usepackage{amsfonts}
\usepackage{tabularx}
\usepackage{array}
\usepackage{capt-of}
\usepackage[textsize=tiny]{todonotes}
\usepackage{subcaption}
\usepackage{float} 
\usepackage[capitalize,noabbrev]{cleveref}

\theoremstyle{plain}
\newtheorem{theorem}{Theorem}[section]
\newtheorem{proposition}[theorem]{Proposition}
\newtheorem{lemma}[theorem]{Lemma}
\newtheorem{corollary}[theorem]{Corollary}
\theoremstyle{definition}
\newtheorem{definition}[theorem]{Definition}

\theoremstyle{remark}
\newtheorem{remark}[theorem]{Remark}
\newtheorem{example}[theorem]{Example}

\newcommand{\E}{\mathbb{E}}  
\newcommand{\R}{\mathbb{R}}
\DeclareMathOperator{\Tr}{Tr}

\title{A Basin-Selection Perspective on Grokking via Singular Learning Theory}

\author{%
  Ben Cullen\\
  Department of Computer Science\\
  University of Pisa\\
  Italy \\
  \And
  Sergio Estan-Ruiz\\
  Department of Mathematics\\
  Imperial College London\\
  United Kingdom \\
  \And
  Riya Danait\\
  Mathematical Institute\\
  University of Oxford\\
  United Kingdom \\
  \And
  Jiayi Li\\
  Section of Mathematics and Artificial Intelligence \\
  Max Planck Institute of Molecular Cell Biology and Genetics \\
  Center for Systems Biology Dresden \\
  Faculty of Mathematics\\ 
  TU Dresden \\
  Germany \\
}

\begin{document}
\maketitle
\begin{abstract}
  Grokking, the abrupt transition from memorization to generalisation after extended training, suggests the presence of competing solution basins with distinct statistical properties. We study this phenomenon through the lens of Singular Learning Theory (SLT), a Bayesian framework that characterizes the geometry of the loss landscape. The key measure is the local learning coefficient (LLC) which quantifies the local degeneracy of the loss surface. SLT links lower-LLC basins to higher posterior mass concentration and lower expected generalisation error. Leveraging SLT, we develop a basin-selection perspective on grokking in quadratic networks: LLC ranks competing near-zero-loss basins by statistical preference, while the training-time transition between them is governed by optimisation dynamics. In this view, grokking corresponds to a transition from a higher-LLC (memorising) basin to a lower-LLC (generalising) basin that dominates the posterior. To support this, we derive analytic formulas for the LLC in shallow quadratic networks under both lazy and feature learning regimes. Empirically, we demonstrate that LLC trajectories estimated from training data track the onset of generalisation and provide an informative probe of the optimisation path.
\end{abstract}

\section{Introduction}\label{sec::introduction}
Grokking refers to a training phenomenon in which a model attains near-zero empirical loss early, yet generalises poorly for a long period, followed by an abrupt improvement in test performance after continued optimisation. 
This behaviour is prominent on algorithmic tasks such as modular arithmetic \cite{power2022grokking,progressMeasuresNanda23,liu2023omnigrok,miller2024grokking} and suggests the coexistence of multiple near-zero-loss solution basins with sharply different generalisation, making grokking a concrete instance of the broader basin-selection problem in non-convex optimisation. 
Two complementary questions therefore arise:\ when several basins fit the training data, which basin is statistically preferred, and how does optimisation move between such basins over training time?

Singular learning theory~\cite{watanabe2009slt} (SLT) provides a principled answer to the first question. 
In modern deep learning, two closely related hypotheses are often invoked to explain empirical success: that solutions associated with “flatter” regions of the loss landscape generalise better, and that SGD-based optimisations exhibit inductive biases that favour such regions \cite{hochreiter1997flatminima, keskar2017gengap}. 
While substantial empirical evidence supports these ideas \cite{li2018visneuralnets,jastrzebski2018threefactors,foret2021}, the theoretical foundations remain incomplete. 
For singular models such as neural networks, SLT associates to each minimising basin a local learning coefficient (LLC) $\lambda$, a reparametrisation-invariant measure of local statistical complexity. 
This makes precise, in a coordinate-invariant way, that broader or less complex effective basins are statistically preferred. 
In the local free-energy expansion, $\lambda$ appears as the coefficient of the $\log n$ term, so among near-zero-loss basins with comparable training loss, smaller-LLC basins receive asymptotically larger posterior mass. 
The same quantity also controls the asymptotic Bayes generalisation error.

We use SLT to analyse the geometry of fixed-data loss landscapes, not to identify sample size with training time. 
Classical SLT concerns Bayesian free-energy asymptotics as the sample size grows, whereas grokking occurs at fixed dataset $D$ as SGD moves between competing near-zero-loss basins of the empirical loss $L_D(\theta)$.
The Bayesian lens remains relevant because noisy constant-step SGD can often be approximated by Langevin dynamics or a tempered posterior \citep{mandt2017sgd}, and the LLC quantifies local basin volume and degeneracy. 
Prior work has used LLCs to study Bayesian/SGD phase structure in toy models \citep{chen2023dynamicalversusbayesianphase} and emergent structure in small language models \citep{hoogland2025loss}, but does not derive grokking’s delayed fixed-data training-time transition. 
We therefore separate the statistical question of which basin is preferred in the Bayesian/SLT sense from the dynamical question of when SGD moves between basins, addressing the former theoretically via closed-form analytic formulas and the latter empirically via LLC trajectories.

\paragraph{Contributions.} 

\begin{itemize}
    \item In \cref{sec::LLC in quadractic networks}, we derive closed-form expressions for the local learning coefficients of shallow quadratic networks in both lazy and feature-learning regimes.
    \item In \cref{sec::theory}, we apply these formulas to compare memorising and generalising basins, yielding a precise quantitative distinction between basins that differ in generalisation behaviours. 
    \item In \cref{sec::experiments}, we show that the LLC trajectories correlate with hyperparameter-dependent grokking severity and track the onset of generalisation throughout training, providing empirical evidence for the competing-basin picture underlying the observed transition.
\end{itemize}

\section{Background}\label{sec::background}
\subsection{The Local Learning Coefficient}

\textbf{Local parameter degeneracy and the LLC.}
Deep neural networks are typically \emph{singular}: symmetries, redundant parameterisation, and scaling invariances induce non-identifiability and lead to a rank-deficient Fisher information matrix at local optima. 
Consequently, the local loss landscape is not generically quadratic, the posterior concentration need not be Gaussian, and dimension-based Laplace-penalties can fail.
Singular learning theory~\cite{watanabe2009slt, watanabe2022slt} instead quantifies this local degeneracy via a reparametrisation-invariant measure: the \textit{local learning coefficient}.

Let $(p(x\mid w),q(x),\varphi(w))$ be a model--truth--prior triple, with $w\in W\subseteq\mathbb{R}^d$, and let $L(w)$ be the population loss at $w$.
Given a minimiser $w^*$ with a neighbourhood $U$, define
\begin{equation*} \label{eq: sec2, volume}
    V(\varepsilon):=\int_{U\cap\{w:\,L(w)-L(w^*)\,\le \,\varepsilon\}} \varphi(w)\, dw.
\end{equation*}
As $\varepsilon \to 0$, SLT provides the expansion
\begin{equation*} \label{eq: sec2, free energy}
    V(\varepsilon)
    = c\,\varepsilon^{\lambda(w^\star)}\bigl(-\log\varepsilon\bigr)^{m(w^\star)-1}\\
    +o\left(\varepsilon^{\lambda(w^\star)}(-\log\varepsilon)^{m(w^\star)-1}\right),
\end{equation*}
where $\lambda(w^\star)$ is the LLC and $m(w^\star)$ is a \textit{local multiplicity}. 
In regular models, $L$ is locally quadratic at $w^\star$ and $V(\varepsilon)\asymp \varepsilon^{d/2}$.
Hence $\lambda(w^\star)=d/2$, the classical parameter count notion of complexity of a model.
In singular models, $\lambda(w^\star) < d/2$: a smaller $\lambda$ indicates that many parameter settings near $w^\star$ either minimise the loss exactly or achieve nearly minimal loss. Equivalently, the local effective dimension, as measured by $2\lambda(w^\star)$, is smaller than the ambient parameter dimension $d$ because of parameter degeneracies.
A detailed account of SLT can be found in \cref{appendix: SLT}.

\textbf{Basin selection and the LLC. } 
In Bayesian learning, for a neighbourhood $U$ of $w^*$ and sample negative log-likelihood $L_{n}$,
the LLC controls the logarithmic correction to a basin's \textit{local free energy}:
\begin{equation*}
    F_{n}(U) := - \log \int_{U} \exp\left\{-n L_{n}(w)\right\} \, \varphi(w) \, dw=  n L_{n}(w^*) + \lambda(w^*) \log n \\
    + O(\log \log n).
\end{equation*}
Given two competing solutions $a$ and $b$ with comparable training loss, their local free-energy gap is asymptotically dominated by $(\lambda_a-\lambda_b)\log n$.
Hence, as $n$ increases, the basin with smaller LLC eventually attains lower free energy and therefore greater posterior mass \cite{watanabe2022slt}.
This competition can produce a sharp switch at a critical sample size, corresponding to a first-order Bayesian phase transition from one basin to another~\cite{chen2023dynamicalversusbayesianphase}.

\subsection{Related work}

\textbf{Empirical discovery and mechanistic accounts of grokking.} 
On modular arithmetic, \citet{progressMeasuresNanda23} reverse-engineer a Fourier-feature circuit in a one-layer Transformer and propose progress measures that track circuit formation.
Subsequent work broadens the phenomenon beyond the original algorithmic setting, relating grokking to weight-norm dynamics across images, language, and molecules~\citet{liu2023omnigrok}, and observing analogous behaviour in Gaussian processes, linear regression, and Bayesian neural networks \cite{miller2024grokking}.
A complementary line of work interprets grokking through phase-transition phenomena: 
\citet{rubin2024grokking} map grokking to a first-order transition in two-layer teacher-student models, while \citet{vzunkovivc2024grokking} provide solvable grokking models with analytic critical exponents and time-to-grok distributions.
These results motivate a phase-based view of delayed generalisation, but do not by themselves provide a singular-geometric criterion for comparing competing low-loss basins.

\textbf{Flatness of loss landscape.}
A classical hypothesis in ML is that flatter minima generalise better than sharper ones \cite{Hochreiter1994_flatminima}. 
Many measures of flatness are based on the local curvature of the loss surface, captured by studying the eigenvalues of the Hessian \cite{Kaur2023_maxEigenvalueFlatness}. 
However, these measures are not invariant under general reparametrisations \cite{Dinh_sharp_minima, zhang2021flatnessdoesdoescorrelate}, and modifications to address this problem tend to be limited, e.g. layer-wise invariance \cite{petzka2019reparameterizationinvariantflatnessmeasuredeep}. On the contrary, the LLC is a diffeomorphism-invariant measure of degeneracy \cite{lau2025local}, with strong theoretical and empirical links to Bayesian generalisation \cite{watanabe2009slt}. 
In Appendix \ref{appendix: reparam thm} we show it is invariant under very general re-parametrisations.

\textbf{Singular learning theory and LLC.} SLT analyses singular models (including neural nets) via the Bayes free‑energy expansion with the real log‑canonical threshold (RLCT) as the $\log n$ coefficient; \citet{watanabe2013widely} turns this into practical evidence criteria in singular settings. 
Recent work by \citet{lau2025local} provides scalable posterior estimators of LLC -- a local RLCT capturing the singularity class of a specific basin -- applied to modern neural networks. 
Other recent work applies SLT to interpretability. 
For example, \citet{hoogland2025loss} look at how changes in LLC curves during the training of small transformer-based language models correspond to emerging linguistic capabilities. 
They also employ these curves to understand when certain circuits start forming.

\textbf{Closed-form computations of RLCTs.} Closed-form learning coefficients and their bounds have been derived for a limited number of statistical models where the singular geometry of the loss surface can be resolved explicitly.
Examples include: reduced rank regression~\cite{aoyagi2005stochastic}, where the RLCT depends on the input/output dimension of the model and the rank of the underlying regression matrix; non-negative matrix factorisation (NMF)~\cite{hayashi2017upper}, where bounds depend on the matrix dimensions and the effective factorisation rank; and deep architectures with linear activations~\cite{aoyagi2024consideration,lehalleur2024geometry}, where learning coefficients have been shown to be bounded as network depth increases.
To the best of our knowledge, no closed-form analytic formula of any neural networks with non-linear activations have been previously established.

\section{Problem Set-up}\label{sec::problem set-up}

\subsection{Modular arithmetic task}
Let $p$ be prime and $a,b,c\in\mathbb{Z}_{p}$. 
The modular addition function $f: \mathbb{Z}_{p}\times \mathbb{Z}_{p}\rightarrow \mathbb{Z}_{p}$ is given by $f(a,b)=a+b \mod p$, and is fully described by the collection of triples $\mathcal{D}=\{(a,b,c)\in\mathbb{Z}_{p}^{3}:  c=f(a,b)\}$, where $|\mathcal{D}|=p^2$. 
Viewed as a classification task, a training subset $\mathcal{D}_{\text{N}}:=\{(a_{i},b_{i},c_{i})\}_{i=1}^{N}$ is sampled uniformly without replacement.
Each example-target pair is then encoded as $\mathbf{x}_{i}=[e_{a_{i}};e_{b_{i}}]^{\top}\in \mathbb{R}^{2p}$ and $\mathbf{y}_{i}=e_{c_{i}}\in \mathbb{R}^{p}$, where $e_{i}$ is the standard $i$th basis vector of $\mathbb{R}^{p}$.
The data and target matrices are then $X_{N}=[\mathbf{x}_{1} \dots \mathbf{x}_{N}]\in\mathbb{R}^{2p\times N}$ and $Y_{N}=[\mathbf{y}_{1} \dots \mathbf{y}_{N}] \in \mathbb{R}^{p \times N}$, respectively.

\subsection{Model architecture}

We train a 2-layer quadratic network $f_{\theta}$ with parameters $\theta=(W,V)$, hidden width $K$, and no bias terms such that the predicted labels are given by:
\begin{equation}\label{eq: quadratic model}
    \widehat{Y}_{N}:=f_{\theta}(X_{N})=V\sigma(W^{\top}X_{N}), \quad W\in \mathbb{R}^{d\times K}, \quad V\in\mathbb{R}^{p\times K},
\end{equation}
where $d=2p$ and $\sigma(x)=x^2$ is a quadratic activation. 
We use a regression style $\ell^{2}$ loss function:
\begin{equation*}
    \min_{V,W} \frac{1}{2}\|(Y_{N}-\widehat{Y}_{N})P_{1}^{\perp}\|_{F}^{2},
\end{equation*}
where $P_{\perp}:=I_{N}-\mathbf{11}^{\top}/N$ is the zero-mean projection matrix along the sample dimension and $\|.\|_{F}$ is the Frobenius norm. 
This projection eliminates trivial constant-bias fitting, forcing the model to learn the task's structure and making feature emergence (in the sense of \citet{tian2025provablescalinglawsfeature}) the dominant route to generalisation.

\section{The LLC for Quadratic Networks} \label{sec::LLC in quadractic networks}

In order to compute the LLC in closed form for a QNN, we do not work directly with the parameterisation $\theta=(W,V)$ as it is often not unique.
Since the LLC depends on the local geometry of the loss as a function of $f_{\theta}$, it is convenient to rewrite the network in the following way:
\begin{equation*}
    f_{\theta,k}(x) 
    = x^{\top} \Big( \sum_{j=1}^{K} v_{kj} w_{j} w_{j}^{\top} \Big) x
    = x^{\top} Q_{k} x, \quad x \in \mathbb{R}^{2p}
\end{equation*}
where $Q_{k}\in\textrm{Sym}(\mathbb{R}^{d \times d})$ determines the $k$-th output of the QNN directly.
Equivalently, $f_{\theta}$ induces a parameter map 
\begin{equation*}
    \Phi_{p}(\theta) = \sum_{j=1}^K v_{:j}\otimes (w_j w_j^\top)= (Q_{1},\dots,Q_{p}) \, \in \mathcal{Y},
\end{equation*}
where $\mathcal{Y}:=\mathbb{R}^p\otimes \operatorname{Sym}(\mathbb{R}^{d\times d})$ is the identifiable parameter space.
This naturally defines the affine variety of one-neuron atoms
    \begin{equation*}
        \widehat{\mathcal{X}}
        =
        \{v\otimes(ww^\top) : v\in \mathbb{R}^{p},\; w\in \mathbb{R}^{d}\}
        \subset \mathcal{Y}.       
    \end{equation*}
Hence, a width-$K$ network output is the sum of $K$ elements in $\mathcal{X}$.

At a local optimum $\theta^{*}$, only those perturbations of $\theta$ that change $(Q_{1},\dots, Q_{p})$ can alter the output of the model and hence its loss.
By analysing the image of the Jacobian of $\Phi_{p}$, we identify the locally distinguishable directions of the model and relate them to the secant geometry of the one-neuron model class.
Combined with the reparametrisation theorem for MSE (\Cref{thm: reparam}), this yields the following generic formula for the LLC.
\begin{theorem}\label{thm: generic secant dimension thm (paper)}
    Let $f_\theta$ be a quadratic network with architecture triple $(d,p,K)$ that realises the parameter map $\Phi_{p}(\theta)$ for $\theta=(W,V)$.
    Let $\mathcal{X}=\mathbb{P}(\widehat{\mathcal X}) \subset \mathbb{\mathbb{P}(\mathcal{Y})}$
    denote the projective variety of one-hidden-neuron atoms, and $\widehat{\sigma_K(\mathcal X)}$ denote the affine cone over its $K$-th secant variety.
    Assume the following:
    (1) $w_{j} \not = 0$ and $v_{:j} \not =0$ for all $j=1,\dots, K$;
    (2) the atoms ${x}_{j}:=v_{:j}\otimes(w_{j}w_{j}^{\top}) \in \widehat{\mathcal{X}}$ are in a general position and $y=\sum_{j=1}^{K} x_{j}$ is a generic smooth point on $\widehat{\sigma_{K}(\mathcal X)}$, and;
    (3) the assumptions of the reparametrisation theorem for MSE (\Cref{thm: reparam}) hold at $y=\Phi_{p}(\theta)$.
    Then $\lambda = \frac{1}{2} \dim \widehat{\sigma_{K}(\mathcal X)}.$
\end{theorem}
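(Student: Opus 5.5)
The argument factors the model as $\theta\mapsto \Phi_p(\theta)\mapsto$ loss. The loss depends on $\theta$ only through $y=\Phi_p(\theta)\in\mathcal Y$, and the reparametrisation theorem for MSE (\Cref{thm: reparam}) transfers the LLC computation to the image. So the plan is to show that, near $\theta^*$, $\Phi_p$ is a submersion onto a smooth manifold, namely a neighbourhood of $y$ in $\widehat{\sigma_K(\mathcal X)}$. Once this holds, the local geometry of the loss in $\theta$ is that of a quadratic (MSE) function on a manifold of dimension $D:=\dim\widehat{\sigma_K(\mathcal X)}$, with the fibre directions exactly flat. The LLC of a regular model of dimension $D$ is $D/2$, which gives the claim.

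\textbf{Step 1 (image of $\Phi_p$).} By definition, $\Phi_p(\mathbb R^{d\times K}\times\mathbb R^{p\times K})$ consists of sums of $K$ elements of $\widehat{\mathcal X}$. Its closure is therefore $\widehat{\sigma_K(\mathcal X)}$, the affine cone over the $K$-th secant variety. A neighbourhood of the generic point $y$ lies in the image and is smooth by assumption (2).

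\textbf{Step 2 (rank of the Jacobian).} I would compute $d\Phi_p$ at $\theta^*$. Perturbing $v_{:j}$ gives $\dot v\otimes w_jw_j^\top$, and perturbing $w_j$ gives $v_{:j}\otimes(\dot w w_j^\top+w_j\dot w^\top)$. By assumption (1) these span precisely the affine tangent space $\widehat T_{x_j}\widehat{\mathcal X}$ at the smooth point $x_j$. Hence $\operatorname{im} d\Phi_p=\sum_j \widehat T_{x_j}\widehat{\mathcal X}$. Terracini's lemma, applied with the $x_j$ in general position and $y$ a generic point of the secant, says this sum equals $\widehat T_y\widehat{\sigma_K(\mathcal X)}$. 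The rank of $d\Phi_p$ at $\theta^*$ is therefore $D$. This is also the generic rank, so the rank is locally constant near $\theta^*$. The constant rank theorem then gives coordinates $(u,z)\in\mathbb R^D\times\mathbb R^{(d+p)K-D}$ in which $\Phi_p(u,z)=\psi(u)$, with $\psi$ a local diffeomorphism onto $\widehat{\sigma_K(\mathcal X)}$ near $y$.

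\textbf{Step 3 (apply the reparametrisation theorem).} In these coordinates, $L(\theta)-L(\theta^*)=G(\psi(u))-G(y)$, where $G$ is the MSE loss on $\mathcal Y$. Under the hypotheses of \Cref{thm: reparam} (assumption (3)), the LLC is invariant under this change of variables and under adding the $z$-directions, which contribute a bounded factor to $V(\varepsilon)$. It is computed from the pullback of $G$ to the $D$-dimensional manifold. At a minimiser, MSE restricted to the model manifold is a nondegenerate quadratic in $u$, which I expect is the content of the theorem's hypotheses (the data make the manifold directions identifiable). This gives $V(\varepsilon)\asymp\varepsilon^{D/2}$ and hence $\lambda=D/2$.

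The main obstacle is Step 2. Terracini's lemma is a statement about generic points, so one must make sure that the genericity in (2) really yields equality of $\operatorname{im} d\Phi_p$ with the tangent space to the secant, and that the rank is locally constant. Without local constancy one cannot use the constant-rank theorem, and nearby nongeneric points could cause trouble. I would handle this via lower semicontinuity of rank: the rank is at most $D$ everywhere, since the image lies in a $D$-dimensional variety, and it equals $D$ at $\theta^*$, so it equals $D$ on a neighbourhood.
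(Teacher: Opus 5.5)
Your proposal is correct and follows essentially the same route as the paper: identify $\operatorname{Im} d\Phi_p$ with $\sum_j T_{x_j}\widehat{\mathcal X}$, use Terracini's lemma under the genericity hypothesis (2) to equate this with $T_y\widehat{\sigma_K(\mathcal X)}$, and then invoke the reparametrisation theorem (Theorem~\ref{thm: reparam} with Corollary~\ref{cor: reparam LLC}) to read off $\lambda$ as half the identifiable dimension. Two of your additions are extra care on points the paper leaves implicit rather than a different route: the lower-semicontinuity and constant-rank step, which justifies the local submersion that Theorem~\ref{thm: reparam} needs, and your remark that nondegeneracy of the MSE on the secant manifold has to be supplied by hypothesis (3). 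The paper instead quotients out the scaling symmetry $w_j\mapsto\alpha w_j$, $v_{:j}\mapsto\alpha^{-2}v_{:j}$ before taking the image, which gives the same span.
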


Determining the LLC therefore reduces to determining the secant dimension of the one-neuron model class.
The ambient identifiable space has dimension $\dim \mathcal{Y} = pD$, where $D := {d(d+1)}/{2}$, while each hidden neuron contributes $d+p-1$ directions in $\operatorname{Im} J \Phi_{p}$ after removal of scaling symmetries.
Therefore, the expected generic secant dimension is $r_{\textrm{exp}}:= \min \, \left(K(p+d-1), \, pD \right)$.
We say that the architecture $(d,p,K)$ is non-defective if $\dim \widehat{\sigma_{K}(\mathcal{X})}= r_{\textrm{exp}}.$ 
Under this assumption, the LLC admits the following explicit form.

\begin{corollary}\label{cor: main LLC corollary (paper)}
    Under the assumptions of \Cref{thm: generic secant dimension thm (paper)}, and assuming that $\widehat{\sigma_{K}(\mathcal{X})}$ is non-defective, the LLC is given by
    \begin{equation*}
        \lambda=
        \begin{cases}
        \dfrac{K(d+p-1)}{2},
        &
        K(d+p-1)<p\,\dfrac{d(d+1)}{2},
        \\[0.8em]
        p\,\dfrac{d(d+1)}{4},
        &
        K(d+p-1)\ge p\,\dfrac{d(d+1)}{2}.
        \end{cases}        
    \end{equation*}
\end{corollary}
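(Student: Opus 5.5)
This corollary follows from \Cref{thm: generic secant dimension thm (paper)} once $\dim \widehat{\sigma_{K}(\mathcal X)}$ is computed. Under the non-defectivity hypothesis, that dimension equals $r_{\textrm{exp}}=\min\bigl(K(p+d-1),\,pD\bigr)$ with $D=d(d+1)/2$. The plan is therefore to justify the two ingredients of $r_{\textrm{exp}}$, namely the per-atom count $d+p-1$ and the ambient bound $pD$, and then split into the two cases.

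\textbf{Per-atom dimension.} First verify that the affine cone $\widehat{\mathcal X}$ has dimension $d+p-1$. It is the image of $(v,w)\mapsto v\otimes(ww^\top)$ from $\mathbb{R}^{p}\times\mathbb{R}^{d}$. The only positive-dimensional fibre symmetry at a point with $v,w\neq 0$ is the rescaling $(v,w)\mapsto(t^{-2}v,tw)$ for $t\neq0$; the sign $w\mapsto -w$ is discrete. Assumption (1) of the theorem guarantees $v,w\neq0$, so these are exactly the points of interest.

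To confirm the generic fibre is one-dimensional, compute the differential directly. It is
\[
(\dot v,\dot w)\mapsto \dot v\otimes ww^\top + v\otimes(\dot w w^\top + w\dot w^\top).
\]
Its kernel consists of $\dot v=\alpha v$ and $\dot w=-\tfrac{\alpha}{2}w$, because $\dot v\otimes ww^\top$ and $v\otimes(\dot w w^\top+w\dot w^\top)$ can only cancel along these directions. This requires a short argument: contract the $\mathbb{R}^p$ factor against a vector orthogonal to $v$, and use that $\dot w w^\top + w\dot w^\top$ is rank $\le 2$. Hence $\dim\widehat{\mathcal X}=p+d-1$.

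\textbf{Terracini count.} By Terracini's lemma, the tangent space to $\widehat{\sigma_K(\mathcal X)}$ at a generic point $\sum_j x_j$ is the span of the $K$ tangent spaces $T_{x_j}\widehat{\mathcal X}$. Its dimension is therefore at most $K(p+d-1)$. It is also at most $\dim\mathcal Y=p\,\dim\operatorname{Sym}(\mathbb{R}^{d\times d})=pD$. This gives $\dim\widehat{\sigma_K(\mathcal X)}\le r_{\textrm{exp}}$, and non-defectivity is by definition the statement that equality holds. No further work is needed there; the bound only serves to show the hypothesis is consistent.

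\textbf{Case split.} Substitute $\dim\widehat{\sigma_K(\mathcal X)}=r_{\textrm{exp}}$ into $\lambda=\tfrac12\dim\widehat{\sigma_K(\mathcal X)}$.
\begin{itemize}
\item If $K(d+p-1)<pD$, the minimum is $K(d+p-1)$, so $\lambda=K(d+p-1)/2$.
\item Otherwise the secant variety fills $\mathcal Y$ and $\lambda=pD/2=p\,d(d+1)/4$.
\end{itemize}
In the second case we are invoking the theorem at a point where $\widehat{\sigma_K(\mathcal X)}$ is all of $\mathcal Y$, so every point is smooth. Assumption (2) is then automatic, and the theorem still applies.

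\textbf{Main obstacle.} The only genuinely nontrivial step is the kernel computation showing that the per-atom count is exactly $d+p-1$ and not smaller. I would handle it via the contraction argument above. Everything else is bookkeeping given \Cref{thm: generic secant dimension thm (paper)} and the non-defectivity assumption.
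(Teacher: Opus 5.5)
Your proposal is correct and follows the paper's route: the corollary is the generic secant-dimension theorem plus the definition of non-defectivity ($\dim\widehat{\sigma_K(\mathcal X)}=r_{\textrm{exp}}$) and a case split on the minimum, and your one-dimensional-kernel computation of the per-atom count $d+p-1$ is equivalent to the paper's decomposition $\mathcal{C}_j=\mathcal{A}_j\oplus\mathcal{B}_j$ with $\dim\mathcal{A}_j=p$ and $\dim\mathcal{B}_j=d-1$. One small overstatement: in the superabundant case only the smoothness of $y$ is automatic, not the general position of the atoms (which is what Terracini's equality actually needs), but this is harmless because the corollary assumes the theorem's hypotheses anyway.
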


\begin{remark}\label{rmk: cor 4.3}
\Cref{cor: main LLC corollary (paper)} is a generic secant-dimension formula.
A special solution, however, may lie on a non-generic stratum of the secant variety where it is represented by only $K_{\textrm{eff.}} < K$ distinct non-zero atoms $v \otimes(ww^{\top})$.
Then the same secant variety argument applies with $K$ replaced by $K_{\textrm{eff.}}$, provided that the reduced representation is itself generic and non-defective and that the solution is locally smooth.
If atoms are non-zero but satisfy non-generic algebraic relations, the local image dimension may be smaller than the generic secant dimension.
Hence, the corollary should be considered as a generic local-complexity reference and not as a classification of all solutions.
Proofs and further geometric discussion are deferred to \Cref{appendix: LLC for QNNs}.
\end{remark}

\section{From Lazy to Feature Learning}
\label{sec::theory}

\begin{table*}[t]
\caption{Summary of derived LLC formulas.}
\label{tab:llc-regimes}
\centering
\small
\setlength{\tabcolsep}{4pt}
\renewcommand{\arraystretch}{1.2}
\begin{tabularx}{\textwidth}{@{}>{\raggedright\arraybackslash}p{0.12\textwidth}
                              >{\raggedright\arraybackslash}p{0.18\textwidth}
                              >{\raggedright\arraybackslash}p{0.16\textwidth}
                              >{\raggedright\arraybackslash}X@{}}
\toprule

\textbf{Regime} & \textbf{Solution type} & \textbf{Result} & \textbf{Role} \\
\midrule
General
& True solution
& \Cref{cor: main LLC corollary (paper)}
& Architecture-level reference: provides generic local complexity near a true solution, independently of the specific optimisation regime. \\
Early
& NTK linearised
& \Cref{thm:main}
& Early-stage baseline when training remains close to initialisation and the network is well approximated by a kernel model. \\
Early
& Lazy-feature
& \Cref{thm: lazy regime}
& Early-stage fixed-representation regime for modular addition, where the hidden features are effectively frozen. \\
Early
& Memorisation
& \Cref{cor:lazy_quad_bounds}
& Early-stage memorising-basin prediction obtained from the lazy-feature analysis. \\
Late
& Feature learning
& \Cref{thm:stage2_llc}
& Late-stage structured-basin prediction for the final trained model; this is the formula tested in Section~6 through its dependence on $p$ and $K$. \\

\bottomrule
\end{tabularx}
\end{table*}

To obtain closed-form predictions on modular addition, we compare the LLC at distinct training equilibria that arise in different regimes of optimisation. 
Following the complementary perspectives of the neural tangent kernel (NTK) approximation~\citep{NTK_jacot} and the staged feature-learning picture of \citet{tian2025provablescalinglawsfeature}, we organize this section into two parts: early-stage fixed-representation approximations, and late-stage feature-learning approximations. 
This organisation mirrors recent theoretical accounts of grokking that emphasize a delayed transition from kernel-/lazy-like dynamics to a rich feature-learning regime~\citep{kumar2023grokking,mohamadi2024you,lyu2023dichotomy,rubin2024grokking}.

\paragraph{Setup.}
Throughout, we use the two-layer architecture $\widehat Y \;=\; F V$ with $F := \sigma(XW)$ as seen in Section \ref{sec::LLC in quadractic networks}, and trained using the loss
\begin{equation*}
J(W,V)
\;:=\;
\frac12\big\|P_1^\perp\big(Y - FV\big)\big\|_F^2
\;+\;\frac{\eta}{2}\big(\|W\|_F^2+\|V\|_F^2\big).
\label{eq:loss_centered}
\end{equation*}
Define the centered quantities $\widetilde Y := P_1^\perp Y$ and $\widetilde F := P_1^\perp F$.
A key object in Tian's analysis is the \emph{backpropagated gradient to the hidden representation},
\begin{equation}
G_F
\;:=\;
-\frac{\partial J}{\partial F}
\;=\;
P_1^\perp (Y - FV)\,V^\top
\;=\;
(\widetilde Y - \widetilde F V)\,V^\top.
\label{eq:GF_exact}
\end{equation}
Intuitively, the transition from memorisation to feature learning is visible in the \emph{structure} of $G_F(t)$ over training: early on it is dominated by noise / idiosyncratic fitting through $V$ (Stage I), and later it develops a task-aligned component that drives coherent updates of $W$ (Stage II).

\paragraph{Standard assumptions.}
Unless otherwise stated, all LLC statements are understood under the same regularity assumptions of~\citet[Theorem~7.1]{watanabe2022slt} as used elsewhere in the paper (i.i.d.\ sampling, appropriate smoothness/analyticity in a neighbourhood of the solution, and a proper prior density that is positive and continuous at the parameter point of interest). 

\subsection{Early-stage approximations: NTK and lazy learning}
Early in training, the neural network can be approximated by two fixed-representation models. First, the Neural Tangent Kernel (NTK) approximation (linearisation in parameters) holds when $\|\theta_t-\theta_0\|$ is small and hence $f_{\theta_t}$ is well-approximated by its first-order Taylor expansion around initialisation~\citep{NTK_jacot}. Second, the Lazy Regime described in~\citet{tian2025provablescalinglawsfeature} is an extended Stage-I phase which occurs where $V$ fits quickly while $W$ changes negligibly. In this case, $F=\sigma(XW)$ is treated as fixed random features and $V$ converges to a centred ridge solution. In this section, we present LLC results under each regime.

\subsubsection*{The NTK regime}

We consider a general model $f_\theta(x)$ and its NTK linearisation around initialisation $\theta_0$. The following result provides an early-stage LLC baseline that applies beyond shallow quadratic networks.

Let
\[
\phi(x)
\;:=\;
\nabla_\theta f_\theta(x)\big|_{\theta=\theta_0}
\in\R^{\tilde d},
\]
where $\tilde d=\dim(\theta)$, and write the linearised model as
\[
f_\theta(x)=f_{\theta_0}(x)+\phi(x)^\top \theta,\qquad \theta\in\R^{\tilde d}.
\]
Assume the true regression function is $f_\star(x)=f_{\theta_\star}(x)$ for some $\theta_\star$.
Let $\ell(y,f)=-\log p(y\mid f)$ be twice continuously differentiable in $f$ and define the population excess risk
\[
\mathcal K(\theta):=\E\!\left[\ell(Y,f_\theta(X))-\ell(Y,f_{\theta_\star}(X))\right]\ge 0,\quad \mathcal K(\theta_\star)=0.
\]
Let
\begin{equation*}
    w(x) :=\E\!\left[\frac{\partial^2 \ell(Y,f)}{\partial f^2}\,\Big|\,X=x,\ f=f_\star(x)\right] \ (>0), \quad
    \mathcal I :=\E\!\left[w(X)\,\phi(X)\phi(X)^\top\right]
\end{equation*}

Write $r:=\mathrm{rank}(\mathcal I)\le \tilde d$.

\begin{theorem}[LLC of NTK models]\label{thm:main}
Assume: (i) $\ell$ is $C^2$ in $f$ and $w(\cdot)$ is bounded away from $0$ and $\infty$ on the support of $P_X$;
(ii) there exists a proper prior density $\varphi$ on $\theta$ with $\varphi(\theta^*)>0$; and (iii) data $(X_i,Y_i)_{i=1}^n$ are i.i.d.\ from $P_\star$ with $f_\star$ as above.
Then the local learning coefficient (real log canonical threshold) of the NTK model at $\theta_\star$ equals
\[
\lambda=\frac{r}{2}.
\]
In particular, if $\mathcal I$ is full rank ($r=\tilde d$), the model is regular and $\lambda=\tilde d/2$; if $r<\tilde d$, the model is singular but still has $\lambda=r/2$.
\end{theorem}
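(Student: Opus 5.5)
We reduce the problem to the RLCT of a quadratic form, whose value is standard. Because the model is linear in $\theta$, write $\Delta := \theta - \theta_\star$. Then $f_\theta(x) - f_\star(x) = \phi(x)^\top \Delta$ exactly, with no remainder term. Taylor-expanding $\ell(Y, f)$ in $f$ around $f_\star(X)$ gives
\[
\mathcal K(\theta) = \E\!\left[g(X)\,\phi(X)^\top\Delta\right] + \tfrac12 \E\!\left[\tilde w(X,\Delta)\,(\phi(X)^\top\Delta)^2\right].
\]
Here $g(x)$ is the conditional mean of $\partial_f \ell$ at $f_\star$. The weight $\tilde w$ is the curvature evaluated at an intermediate point, and $\tilde w(x,\Delta) \to w(x)$ as $\Delta \to 0$.

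\textbf{First-order term.} Since $\theta_\star$ is a minimiser of $\mathcal K \ge 0$ (the truth is realisable), the first-order term must vanish along every direction $\Delta$. Otherwise $\mathcal K$ would take negative values on one side of $\theta_\star$. If one prefers, this also follows directly from $g \equiv 0$, which holds for well-specified likelihoods.

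\textbf{Two-sided quadratic bound.} By assumption (i), $w$ is bounded between $0 < c \le w \le C$. Continuity of the second derivative then gives, on a small neighbourhood $U$ of $\theta_\star$,
\[
c'\,\Delta^\top \mathcal I\,\Delta \;\le\; \mathcal K(\theta) \;\le\; C'\,\Delta^\top \mathcal I\,\Delta .
\]
Here we use $\E[(\phi^\top\Delta)^2] \asymp \Delta^\top \mathcal I \Delta$, which holds because the weights are bounded above and below.

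\textbf{Main obstacle.} The hardest step is making this bound uniform in $x$. It requires some uniformity of $\partial_f^2 \ell$ near $f_\star(x)$ over the support of $P_X$, for instance bounded support or integrable envelopes. I would impose this as part of the standing regularity assumptions of \citet[Theorem~7.1]{watanabe2022slt}, which are invoked throughout.

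\textbf{Comparison principle for the RLCT.} Two nonnegative analytic functions that are mutually bounded by constant multiples of each other near a point have the same RLCT. This holds because the volume functions satisfy
\[
V_{\mathcal K}(\varepsilon) \asymp V_{q}(\varepsilon),
\qquad q(\Delta) := \Delta^\top \mathcal I \Delta,
\]
after rescaling $\varepsilon$ by constants. A constant rescaling of $\varepsilon$ cannot change the exponent $\lambda$ in the expansion of $V(\varepsilon)$ recalled in Section 2. The positive, continuous prior from (ii) only contributes bounded multiplicative constants on $U$, so it does not affect the exponent either.

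\textbf{RLCT of the quadratic form.} Diagonalise $\mathcal I = O^\top \operatorname{diag}(\mu_1,\dots,\mu_r,0,\dots,0)\,O$ with every $\mu_i > 0$. In the rotated coordinates $u = O\Delta$, the form becomes $q = \sum_{i \le r} \mu_i u_i^2$. The region $\{q \le \varepsilon\} \cap U$ is then, up to constants, an ellipsoid of radius $\sqrt{\varepsilon}$ in the first $r$ coordinates, times a fixed box in the remaining $\tilde d - r$ coordinates. Its volume is therefore $\asymp \varepsilon^{r/2}$, which gives $\lambda = r/2$ with multiplicity $m = 1$.

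\textbf{Conclusion.} When $r = \tilde d$, the matrix $\mathcal I$ is the (weighted) Fisher information and is positive definite, so the model is regular and $\lambda = \tilde d/2$. When $r < \tilde d$, the set of true parameters is the affine subspace $\theta_\star + \ker \mathcal I$, which has positive dimension. The Fisher information is then degenerate, so the model is singular, yet $\lambda = r/2$ still holds by the computation above.
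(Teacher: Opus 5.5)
Your argument is correct. It shares the paper's skeleton: a second-order expansion in $f$, a vanishing first-order term, Hessian $\mathcal I$, and diagonalisation into $r$ range and $\tilde d-r$ null directions. It closes the argument by a different route, however.

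\textbf{The paper's route.} The paper works with the marginal likelihood. It writes $\mathcal K=\tfrac12 z^\top\Lambda_r z+o(\|z\|^2)$ in eigen-coordinates and asserts that $\mathcal K$ does not depend on the null-space coordinates. This is true here, since $w>0$ forces $\phi(X)^\top\Delta=0$ a.s.\ for $\Delta\in\ker\mathcal I$, but the paper does not spell it out. It then integrates the null coordinates out against the proper prior to get a finite weight $m(z)$. A Gaussian integral gives $Z_n=C\,n^{-r/2}(1+o(1))$, that is, $\log Z_n=-\tfrac r2\log n+O(1)$.

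\textbf{Your route.} You keep the exact mean-value form $\mathcal K=\tfrac12\E[\tilde w\,(\phi^\top\Delta)^2]$. You pinch $\tilde w$ between constants to obtain $c'q\le\mathcal K\le C'q$. You then read off $\lambda=r/2$, with $m=1$, from the volume function via the comparison principle.

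\textbf{What your route buys.} It is rigorous on the delicate step. Because the remainder is multiplicative in $(\phi^\top\Delta)^2$, it is controlled relative to $q$ rather than to $\|\Delta\|^2$, which is what a degenerate form requires. An additive $o(\|\Delta\|^2)$ remainder alone cannot fix the RLCT: $z^2$ and $z^2+u^4$ share their quadratic part but have RLCTs $\tfrac12$ and $\tfrac34$. Your route also proves, rather than asserts, that $\mathcal K$ vanishes on $\theta_\star+\ker\mathcal I$. It is purely local, needing only positivity and continuity of $\varphi$ on $U$.

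\textbf{What the paper's route buys.} It gives a direct statement about the free energy, namely the $\log n$ coefficient that carries the statistical meaning. It also gives the explicit leading constant $m(0)(2\pi)^{r/2}(\det\Lambda_r)^{-1/2}$. The cost is an informal Laplace step and reliance on global integrability of the prior along null fibres.

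The uniformity issue you flag is present in the paper as well, unstated, in its interchange of expectation with the Taylor remainder.
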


\begin{proof}
A detailed proof can be found in Appendix \ref{Appendix: NTK Theorem}.
\end{proof}
\subsubsection*{Lazy Learning Regime}
We now return to the two-layer model $\widehat Y=FV$ with $F=\sigma(XW)$.
In \citet{tian2025provablescalinglawsfeature}'s Stage-I picture, $F$ behaves like a random representation at initialisation, so the top layer $V$ can fit the training labels quickly while $W$ changes negligibly. 
In this phase, the backpropagated signal $G_F$ in~\eqref{eq:GF_exact} is dominated by noise-like components and does not yet provide a clean learning direction for $W$. 
This is aligned with recent mechanistic explanations of grokking in which the model first fits a near-kernel/random-feature solution before late-time feature learning identifies a generalising solution~\citep{kumar2023grokking,mohamadi2024you}.

If we freeze $W$ and optimize only over $V$ with ridge parameter $\eta$, we obtain the centred ridge solution
\begin{equation}\label{eq:Vridge}
    V_{\mathrm{ridge}}
    \;=\;
    \big(\widetilde F_{\mathrm{train}}^\top \widetilde F_{\mathrm{train}} + \eta I\big)^{-1}\widetilde F_{\mathrm{train}}^\top \widetilde Y_{\mathrm{train}},
\end{equation}
where $\widetilde F_{\mathrm{train}}:=P_1^\perp\sigma(X_{\mathrm{train}}W)$ and $\widetilde Y_{\mathrm{train}}:=P_1^\perp Y_{\mathrm{train}}$ are the centered features/labels on the training set. Write $W=[w_1|\cdots|w_K]$ and $F=[f_1|\cdots|f_K]$ with $f_j=\sigma(Xw_j)$. Under a continuous initialisation for $w_j$, the columns $f_j$ are i.i.d.\ with some law $\mu$ on $\R^n$. Let $L:=\mathrm{span}\big(\mathrm{supp}(\mu)\big)$ and $l:=\dim(L),$ so $l$ is the intrinsic dimension of the feature subspace that the random columns $f_j$ can explore.

\begin{theorem}[LLC in the lazy (random-feature) memorisation regime]
\label{thm: lazy regime}
In the above setting, we work in the interpolation limit $\eta \to 0^+$, in which $V_{\mathrm{ridge}}$ approaches a minimiser of the unregularised loss and the excess empirical-loss contribution to the local free energy becomes negligible. With squared loss and $(X,Y)$ uniformly distributed over a finite dataset, assume the initialisation distribution of each $w_j$ has a density absolutely continuous w.r.t.\ Lebesgue measure. Then the local learning coefficient at the ridge memorisation solution satisfies $\lambda \;=\; \frac{1}{2}p\,\min\{l, K\}$.
\end{theorem}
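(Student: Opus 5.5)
With $W$ frozen, the model is linear in $V$: $\widehat Y = FV$, where $F=\sigma(XW)$ is a fixed feature matrix whose columns are $f_1,\dots,f_K$. The plan is to reduce the LLC to that of a linear-Gaussian regression model and then apply \Cref{thm:main}. Since the model is already linear in its parameters, the NTK linearisation is exact. So $\lambda = r/2$, where $r=\operatorname{rank}(\mathcal I)$. With squared loss, $w(x)$ is a positive constant, so $\mathcal I$ is proportional to $\mathbb{E}[\phi(X)\phi(X)^\top]$.

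The first step is to identify $\phi$ and compute the rank. The parameter $V\in\R^{K\times p}$ has $Kp$ entries. For output coordinate $k$, the gradient of $f_k(x)$ with respect to the column $V_{:k}$ is the feature vector $F(x)\in\R^K$, and its gradient with respect to every other column vanishes. Hence
\[
\mathcal I \propto I_p\otimes \mathbb{E}\bigl[F(X)F(X)^\top\bigr] = \tfrac{1}{n}\,I_p\otimes F^\top F,
\]
using that $X$ is uniform over the finite dataset. The centring by $P_1^\perp$ replaces $F$ by $\widetilde F$ throughout, and I will track it only through the definition of $L$. It follows that $r = p\operatorname{rank}(F)$, which reduces the problem to showing $\operatorname{rank}(F)=\min\{l,K\}$ almost surely.

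The second step is that rank claim. The columns $f_j$ are i.i.d. with law $\mu$ supported in $L$, so $\operatorname{rank}(F)\le\min\{l,K\}$. For the reverse inequality, I argue inductively. Given $f_1,\dots,f_{j-1}$ spanning a proper subspace $S\subsetneq L$ with $j-1<l$, I need $\mu(S)=0$. Two facts give this. Since $\operatorname{span}(\operatorname{supp}\mu)=L$, the support is not contained in $S$. Because each $w_j$ has a Lebesgue density and $w\mapsto\sigma(Xw)$ is a polynomial map, the event $\{\sigma(Xw)\in S\}$ is the zero set of a nonzero polynomial in $w$ whenever it is not all of $\R^d$. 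Such a set has Lebesgue measure zero, so $\mu(S)=0$. Induction then gives $\operatorname{rank}(F)=\min\{l,K\}$ almost surely.

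This polynomial-zero-set argument is the main obstacle. It requires that the polynomial is not identically zero, meaning that some $w$ maps outside $S$, which follows from $\operatorname{supp}\mu\not\subset S$. It also requires some care with the centred $\widetilde F$ if $L$ is defined from the uncentred columns.

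The last step handles the limit $\eta\to0^+$. The ridge solution converges to the minimum-norm interpolant, which is a true parameter of the realisable linear model. All zero-loss points form an affine subspace of codimension $p\,\operatorname{rank}(F)$, so the loss is locally a nondegenerate quadratic in the transverse directions. Then $V(\varepsilon)\asymp\varepsilon^{p\min\{l,K\}/2}$, which gives $\lambda=\tfrac12 p\min\{l,K\}$ and agrees with \Cref{thm:main}.
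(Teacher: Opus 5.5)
Your proposal is correct and follows essentially the paper's route: treat the frozen-$W$ model as a linear regression in $V$, obtain $\lambda=\tfrac12\operatorname{rank}(I_p\otimes F^\top F)=\tfrac12 p\operatorname{rank}(F)$ from the NTK-style quadratic expansion, and then prove $\operatorname{rank}(F)=\min\{l,K\}$ almost surely by induction, using that a non-trivial polynomial (or analytic) function has a Lebesgue-null zero set. The only difference is cosmetic: you run the zero-set argument directly in $w$-space via the polynomial map $w\mapsto\sigma(Xw)$, whereas the paper first pushes $w_j$ forward to an absolutely continuous $z_j=Xw_j$ on $\operatorname{Im}(X)$ and then uses real-analyticity of $\sigma$ there.
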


\begin{proof}
See Appendix~\ref{appendix: lazy regime}.
\end{proof}

\begin{corollary}[Modular arithmetic with quadratic activation]\label{cor:lazy_quad_bounds}
    Let $\sigma(t)=t^2$ and assume we are in the experimentally relevant regime where $2p-1 <K << p^2$. Then, $\tfrac12 p (2p-1) \leq \lambda \leq \tfrac12 pK.$
\end{corollary}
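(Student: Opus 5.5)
The plan is to apply \Cref{thm: lazy regime}, which gives $\lambda=\tfrac12 p\min\{l,K\}$. Everything then reduces to bounding the intrinsic dimension $l=\dim\operatorname{span}(\operatorname{supp}\mu)$ of the random quadratic features on the modular-addition inputs. The claimed inequalities follow once we show $l\ge 2p-1$. Indeed, combined with $K>2p-1$, this gives $\min\{l,K\}\ge 2p-1$. The upper bound $\min\{l,K\}\le K$ is trivial. This yields $\tfrac12p(2p-1)\le\lambda\le\tfrac12pK$.

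\textbf{Structure of the features.} For an input $x=[e_a;e_b]$ and a neuron $w=[u;v]\in\R^{2p}$, we have $f_w(x)=(w^\top x)^2=(u_a+v_b)^2=u_a^2+v_b^2+2u_av_b$. So each feature column, viewed as a function on the training pairs $(a,b)$, lies in the span of three families: the functions depending only on $a$, those depending only on $b$, and the rank-one products $u_av_b$. Letting $w$ range over a set of positive Lebesgue measure, polarisation shows that $\operatorname{span}(\operatorname{supp}\mu)$ equals the span of the evaluations of all quadratic forms $x^\top Sx$, $S$ symmetric, on the training inputs. Concretely, this is the image of the linear map $S\mapsto (x_i^\top S x_i)_{i\le N}$. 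For one-hot pairs, $x^\top Sx=S_{aa}+S_{p+b,p+b}+2S_{a,p+b}$. Hence $l$ is the rank of the $N\times$(number of index pairs) incidence structure: the row space is spanned by the indicators $\mathbf 1[a_i=a]$, $\mathbf 1[b_i=b]$ and $\mathbf 1[(a_i,b_i)=(a,b)]$.

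\textbf{Lower bound on $l$.} The pair indicators $\mathbf 1[(a_i,b_i)=(a,b)]$ for the $N$ distinct training pairs already span all of $\R^N$, which would give $l=N$. In the regime considered we only need $l\ge 2p-1$. One route is to use only the additive part: the span of $\{\mathbf 1[a_i=a]\}_a\cup\{\mathbf 1[b_i=b]\}_b$ has dimension $2p-1$ whenever the bipartite graph on $\{a\}\sqcup\{b\}$ with edges the training pairs is connected and covers all vertices. The single relation among these indicators is $\sum_a\mathbf 1[a_i=a]=\sum_b\mathbf 1[b_i=b]$. This guarantees $l\ge 2p-1$ even without the cross terms. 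More simply, because the pair indicators give $l=N$, the bound reduces to $N\ge 2p-1$, which holds in the experimentally relevant training fractions.

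\textbf{Main obstacle.} The delicate step is making rigorous the identification of $\operatorname{span}(\operatorname{supp}\mu)$ with the quadratic-form evaluation span. The support of $\mu$ is the image of $\R^{2p}$ under $w\mapsto\sigma(Xw)$, which is a cone of rank-one evaluations, not a linear space. I would argue via polarisation: $(w+w')^\top x\,x^\top(w+w')-(w-w')^\top x\,x^\top(w-w')=4\,w^\top xx^\top w'$, so the linear span of $\{(x_i^\top w)^2\}$ contains all bilinear evaluations, hence all $x_i^\top Sx_i$. Absolute continuity of the initialisation ensures that the support is the full image, so this span is attained. I also need to check that the centring $P_1^\perp$ in \Cref{thm: lazy regime} removes at most one dimension. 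If $l$ is measured after centring, the connectivity argument above still leaves at least $2p-1$ directions: the additive span has $2p$ generators, loses one to the relation and at most one more to centring, and the cross terms restore the lost one.
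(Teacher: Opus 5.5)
Your argument is correct, but it reaches $l\ge 2p-1$ by a different route from the paper.

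\textbf{How the two routes differ.} The paper argues abstractly. It invokes its monomial-activation proposition, $r\le l\le\min\{n,\binom{r+1}{2}\}$ with $r=\operatorname{rank}(X)$, together with $\operatorname{rank}(X)=2p-1$ for modular-addition inputs. It then combines $\lambda=\tfrac12 p\min\{l,K\}$ with $K>2p-1$ exactly as you do.

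You instead compute $L$ explicitly from the one-hot structure. By polarisation, $L$ is the image of $S\mapsto(x_i^\top Sx_i)_i$, and the entries $S_{a,p+b}$ produce the standard basis vectors. So $l$ equals the number of rows of $X$, i.e.\ $l=N$.

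Your additive-part alternative is the paper's bound in disguise. The indicators $\mathbf 1[a_i=a]$ and $\mathbf 1[b_i=b]$ are exactly the columns of $X$, and they lie in $L$ because $x_{ik}^2=x_{ik}$. Connectivity of the bipartite graph is precisely the condition $\operatorname{rank}(X)=2p-1$.

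\textbf{What each route buys.} The paper's route gives generality: any input matrix and any monomial activation. Note, however, that its proposition only proves the upper bound and merely asserts $r\le l$. That bound does hold, because $a_ia_i^\top$ are linearly independent for linearly independent rows $a_i$.

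Your route needs a weaker hypothesis ($N\ge 2p-1$ instead of connectivity) and gives a sharper conclusion. Since $l=N$, one has $\min\{l,K\}=K$ whenever $K\le N$. So in the stated regime $\lambda=\tfrac12 pK$ exactly. The corollary's upper bound is attained and its lower bound is far from tight.

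\textbf{Two small repairs.} First, absolute continuity does not make $\operatorname{supp}\mu$ the full image of $w\mapsto\sigma(Xw)$; a density supported on a ball is a counterexample. Also, polarisation needs $w\pm w'$ to stay in the set. The right argument is that if $c^\top\sigma(Xw)=0$ for all $w$ in a set of positive Lebesgue measure, then the polynomial $w\mapsto c^\top\sigma(Xw)$ vanishes identically. Hence the span over the support equals the span over all of $\R^{2p}$, and polarisation then applies.

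Second, the centring discussion is unnecessary, since the theorem defines $l$ through the uncentred columns $f_j=\sigma(Xw_j)$. If one did centre, $\mathbf 1\in L$ means exactly one dimension is lost. That leaves $N-1\ge 2p-1$ once $N\ge 2p$, with no need for the heuristic that ``the cross terms restore the lost one''.
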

\begin{proof}
See Corollary \ref{cor: quadratic activation} in Appendix \ref{appendix: lazy regime}.
\end{proof}

\subsection{Late-stage approximations: feature learning}

\begin{figure*}[t!]
    \centering
    \begin{subfigure}[t]{0.49\linewidth}
        \centering
        \includegraphics[width=\linewidth]{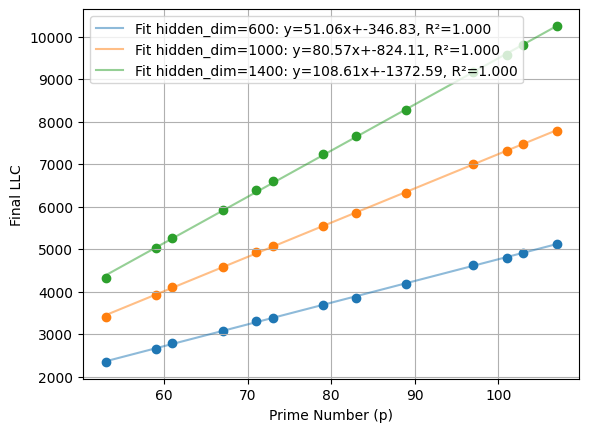}
        \caption{Linear relationship between $p$ and the final LLC of the trained model. Experiment repeated for several values of the hidden-layer dimension.}
        \label{fig:p_vs_llc}
    \end{subfigure}
    \hfill
    \begin{subfigure}[t]{0.49\linewidth}
        \centering
        \includegraphics[width=\linewidth]{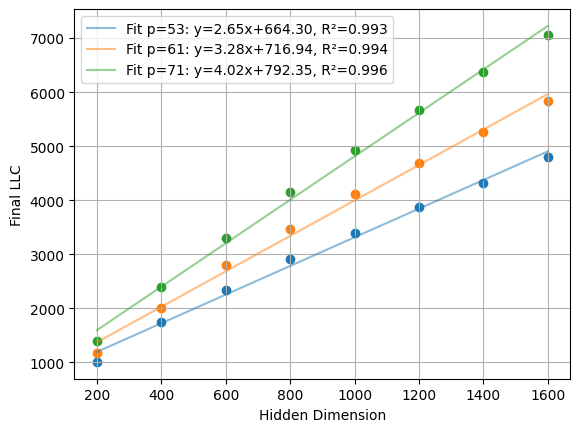}
        \caption{Linear relationship between the hidden-layer dimension and the final LLC of the trained model. Experiment repeated for several values of $p$.}
        \label{fig:hidden_dim_vs_llc}
    \end{subfigure}
    
    \caption{Empirical validation of the linear dependence of the final LLC on architectural parameters.}
    \label{fig:theory_validation_llc}
\end{figure*}

Stage II begins once the backpropagated signal to the hidden representation becomes \emph{task-aligned}. 
In recent grokking theories, this corresponds to ``escaping the kernel regime'' and entering a rich feature-learning phase in which the representation changes sufficiently to uncover a structured generalising solution~\citep{kumar2023grokking,mohamadi2024you,lyu2023dichotomy,rubin2024grokking}.
A consequence is that, for a period of training, neurons are approximately \emph{decoupled} (``independent feature learning''): writing $W=[w_1,\dots,w_K]$ and $F=[f_1,\dots,f_K]$, each neuron tends to climb an energy of the form
\[
\mathcal E(w_j)
\;=\;
\frac12\big\|\widetilde Y^\top f_j\big\|_2^2
\qquad
\Big(f_j=\sigma(Xw_j)\Big),
\]
so local maximizers of $\mathcal E$ correspond to emergent task-aligned features.

Stage II suggests that the relevant late-time solutions are no longer well described by fixed-representation approximations, but instead belong to a structured feature-learning basin of the quadratic network itself. 
We therefore model the late-stage solution directly at the level of the full width-$K$ architecture and apply the generic secant-dimension formula of Corollary~4.2. 
In the subabundant non-defective regime, this gives an explicit prediction for LLC of the late-stage basin.

\begin{corollary}[Stage-II LLC]\label{thm:stage2_llc}
    Assume that the late-stage feature learning solution is a generic true solution $\theta^*$ of the width-$K$ quadratic network where $K(d+p-1) < p \tfrac12 d(d+1)$, and $(d,p,K)$ is a non-defective architecture. Then, by \cref{cor: main LLC corollary (paper)}, the LLC for modular addition ($d=2p$) is $\lambda = \frac{1}{2}K(3p-1).$
\end{corollary}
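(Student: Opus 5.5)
The argument is a direct specialisation of \Cref{cor: main LLC corollary (paper)}, so I would begin by checking that its hypotheses hold. A generic true solution $\theta^*$ of the width-$K$ network gives exactly hypotheses (1)–(3) of \Cref{thm: generic secant dimension thm (paper)}. Genericity means every $w_j\neq 0$ and every $v_{:j}\neq 0$, the atoms $v_{:j}\otimes(w_jw_j^\top)$ are in general position, and $y=\Phi_p(\theta^*)$ is a generic smooth point of $\widehat{\sigma_K(\mathcal X)}$. The MSE reparametrisation assumptions hold under the standing regularity assumptions of \citet[Theorem~7.1]{watanabe2022slt}. Non-defectiveness of $(d,p,K)$ is assumed outright, so we have $\dim\widehat{\sigma_K(\mathcal X)} = r_{\exp}=\min\{K(d+p-1),\,pD\}$ with $D=d(d+1)/2$.

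Next I would decide which case of the corollary applies. The hypothesis $K(d+p-1)<p\,d(d+1)/2$ is precisely the subabundant branch, so the minimum equals $K(d+p-1)$ and \Cref{cor: main LLC corollary (paper)} gives $\lambda = K(d+p-1)/2$. Substituting $d=2p$ from the modular-addition encoding $\mathbf x_i=[e_{a_i};e_{b_i}]\in\R^{2p}$ yields $d+p-1=3p-1$, and therefore $\lambda=\tfrac12 K(3p-1)$.

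For completeness I would note what the hypothesis means concretely when $d=2p$. The subabundance condition becomes $K(3p-1)<p\cdot p(2p+1)=p^2(2p+1)$, which holds in the experimental regime $K\ll p^2$. This is a sanity check only; the statement simply assumes the condition.

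No step here is a genuine obstacle, since all the substantive work sits in \Cref{thm: generic secant dimension thm (paper)}. The one point that needs care is justifying that the late-stage solution is generic, as opposed to lying on a lower stratum with $K_{\mathrm{eff}}<K$ atoms (see \Cref{rmk: cor 4.3}). The statement takes genericity as a hypothesis, so the proof would just point to that remark to explain why the assumption is needed.
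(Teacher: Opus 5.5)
Your proposal is correct and is essentially the paper's own argument: take the subabundant branch of \Cref{cor: main LLC corollary (paper)} to get $\lambda = K(d+p-1)/2$, then substitute $d=2p$ so that $d+p-1=3p-1$. One caution: hypothesis (3) of \Cref{thm: generic secant dimension thm (paper)}, i.e.\ non-degenerate MSE curvature on $T_y\widehat{\sigma_K(\mathcal X)}$, does not follow from Watanabe's standing regularity conditions as you claim. With inputs $[e_a;e_b]$ each output only sees $(Q_k)_{aa}+(Q_k)_{p+b,p+b}+2(Q_k)_{a,p+b}$, so $\mathcal Y$ is not identifiable from the data, and (as the paper implicitly does) this condition must be taken as part of the ``generic true solution'' hypothesis rather than derived.
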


\begin{remark}
By \cref{rmk: cor 4.3}, the value $\frac{1}{2}K(3p-1)$ is a generic late-stage reference value.
If the trained modular-addition solution is effectively represented by $K_{\textrm{eff.}} < K$ distinct, non-zero atoms then the same argument provides $\lambda = \frac{1}{2}K_{\textrm{eff.}}(3p-1)$ for the subabundant, non-defective regime.
\end{remark}

\section{Experiments}
\label{sec::experiments}
We empirically validate closed-form scaling laws for the local learning coefficient in quadratic networks and demonstrate that LLC trajectories, computed solely from training data, track the emergence of generalisation. In Appendix \ref{appendix: subsection: grokking severity}, we also explain how optimisation hyperparameters modulate grokking severity. The code used for the experiments can be found in the following anonymised repository: \url{https://anonymous.4open.science/r/geom_phase_transitions-DF59/}. The code makes use of the DevInterp package \cite{devinterpcode}.

\subsection{Experimental validation of theoretical scaling laws}

\cref{thm:stage2_llc} predicts linear scaling laws of the LLC $\lambda$ in both $p$ and $K$.
In \cref{fig:p_vs_llc,fig:hidden_dim_vs_llc}, we validate these theoretical predictions for both $p$ and $K$ across a variety of experimental setups.
In~\cref{fig:hidden_dim_vs_llc}, we find that for fixed $p$ the final LLC increases proportionally with network width even though all widths eventually generalise.
This suggests that wider models are not simply ``the small model plus redundant neurons'' (the small solution does not embed as a subnetwork), which is inconsistent with a single width-invariant structured solution where the LLC would be the same across architectures. Detailed descriptions of our experimental setup, including model architecture, training procedures, and hyperparameters can be found in ~\Cref{app: experiments/setup}. 

From the geometric perspective of Section \ref{sec::LLC in quadractic networks}, this behaviour is consistent with the late-stage solution occupying a generic region of the secant variety $\widehat{\sigma_K(X)}$, whose dimension grows linearly with $K$ in the subabundant regime. In this view, increasing width expands the set of independent directions available to the model, so that the dimension of the solution (and hence the LLC) continues to grow with $K$ even after generalisation has been achieved.

\subsection{LLC tracks emergence of generalisation}
\begin{figure*}[t]
    \centering
    \includegraphics[width=\linewidth]{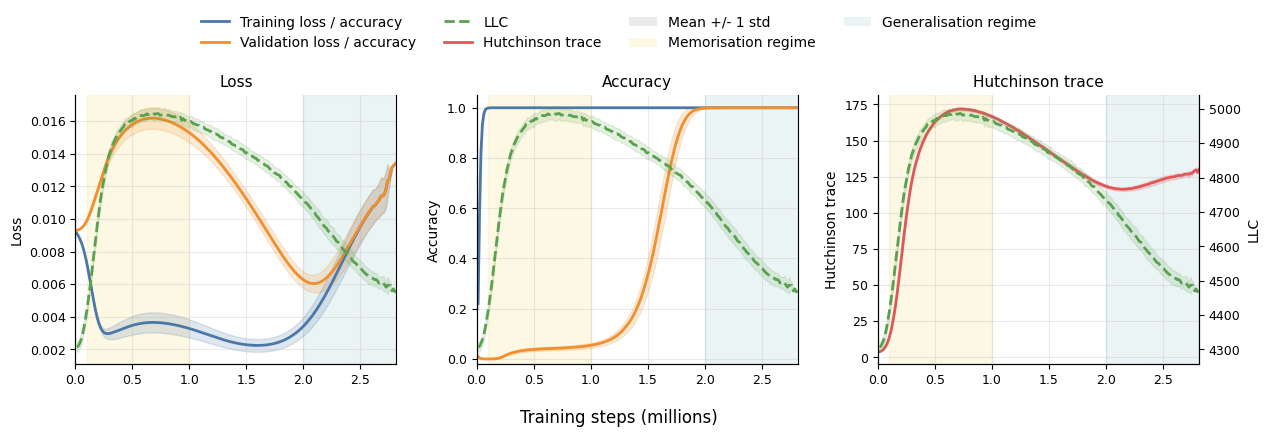}
    \caption{Three plots examining different aspects of the same experiment during training. Results are aggregated over 100 random seeds; solid lines denote the mean across seeds, while shaded regions indicate $\pm 1$ standard deviation. Dataset $p=53$, learning rate 0.0001, weight decay 0.00001, batch size 128 and hidden dimension 1024. The Hutchinson trace estimator is calculated using $m = 100$ random vectors. Left: LLC curve and training/validation loss curves. Centre: LLC curve and training/validation accuracy curves. Right: LLC curve and Hutchinson trace curve.
    }
    \label{fig: main figure}
\end{figure*}
Secondly, we track the training and validation losses, accuracies, and LLC throughout training for a model with standard hyperparameters. Despite the LLC being calculated exclusively from the training data, its evolution closely mirrors that of the validation loss as can be observed in Figure \ref{fig: main figure}. Similar results have been reported in \citet{grokking_slt_blog}. Moreover, in Appendix \ref{app: additional experiments} we show the statistical significance of this experiment by repeating it for different values of $p$, dimension of the hidden layer, and hyperparameters like the learning rate or weight decay.

This behaviour can be understood through the lens of the Bayesian free energy. During the early stages of training, the optimiser primarily reduces the empirical loss, moving rapidly toward regions of low training error. As seen in Figure \ref{fig: main figure}, this initial phase is marked by a rapid collapse in training loss and a near-immediate saturation of training accuracy. However, validation performance remains poor for a long period thereafter, indicating that the optimiser has reached a low-training-loss regime that still generalises poorly. We call this a \textit{memorisation solution.} Once the trajectory reaches a neighbourhood of local near-minimisers, further optimisation is no longer dominated by loss reduction. Instead, the dynamics become increasingly influenced by the local geometry of the loss landscape.

In the delayed-generalisation phase, the LLC rises to a peak and then declines, while training accuracy remains essentially unchanged and validation accuracy increases sharply. 
This is consistent with the optimiser moving within the low-loss landscape from a memorising, geometrically more complex region toward a geometrically simpler one which we call a \textit{generalising solution.} 
In this regime, stochastic optimisation is therefore consistent with a preference for regions of lower LLC, corresponding to more degenerate, higher-volume minima.
To compare this singular-geometric picture with a classical curvature-based diagnostic, we also plot the Hutchinson trace, which estimates $\operatorname{tr}(H)$ and hence the average second-order variation of the loss (see \cref{appen:hutchinson}).
In \cref{fig: main figure}, the Hutchinson trace and the LLC are closely aligned until the end of the grokking phase, with both peaking prior to the generalisation transition and subsequently decreasing.
However, once training and validation accuracies have saturated, the two probes diverge:
the LLC continues to decrease until reaching a plateau, whereas the Hutchinson trace begins to increase.
We interpret this as a post-generalisation regime in which continued optimisation, influenced by weight decay, further simplifies the internal representation while sharpening the remaining identifiable directions in parameter space, without changing the learned classifier. 
Thus, the model complexity drops in the SLT sense, yielding a lower LLC, while becoming sharper along active directions, yielding a higher Hessian trace.
This late-stage divergence shows that the Hutchinson is a useful curvature proxy during the grokking transition, but that curvature alone is not a complete measure of singular complexity.
Finally, although the LLC is estimated using only training data, it effectively captures geometric properties of the loss landscape that govern out-of-sample performance.

\section{Conclusion}\label{sec::conclusion}
In this work, we studied grokking as a phase transition between competing near-zero-loss basins with distinct LLCs. We showed theoretically for quadratic models how distinct solutions can exhibit different LLCs, and empirically used LLC to explain some features of grokking. More broadly, this work suggests that SLT-based quantities such as the local learning coefficient can serve as informative probes of training dynamics in over-parametrized models, linking loss-landscape geometry, implicit regularisation, and generalisation behaviour. Extending this perspective to other architectures (e.g. ReLU networks and transformers) and to a wider class of grokking tasks remains an important direction for future work.

\textbf{Limitations.} 
Our analysis is conducted in a Bayesian asymptotic setting and provides a characterisation of basin selection, rather than a direct analysis of SGD dynamics. 
While this perspective is supported by prior work ~\cite{mandt2017sgd,chen2023dynamicalversusbayesianphase} and our empirical results, a complete theoretical connection between posterior concentration and stochastic gradient-based optimisation remains open. 
In addition, our results are established for simplified model classes to permit explicit analysis; extending them to more complex architectures and training regimes would be a direction for future work.
Finally, the central result in \Cref{thm:stage2_llc} is to be interpreted as a generic reference value: trained endpoints may violate assumptions of genericity, or the architecture may be defective, leading to a lower LLC than predicted.

\bibliographystyle{plainnat}
\bibliography{references}

\appendix
\newpage
\section{Singular Learning Theory}\label{appendix: SLT}

\subsection{Introduction to Singular Learning Theory}
Singular Learning Theory (SLT) is at its heart the theory of singularities in the parameter space of parametric models. It blends algebraic geometry with an underlying Bayesian framework to try to understand some statistical phenomena of singular models. The aim of this section is to give a basic overview of the subject to make some of the concepts in the paper more accessible to readers who are first encountering the wonderful world of SLT. 
\begin{definition}[Model Triplet]
    Let $W$ be a compact space of parameters with a prior distribution $\varphi (w)$. Consider a parametric model with density $p(x|w)$ and a true data-generating mechanism $q(x)$. Then, 
    $$ (p(x|w), q(x), \varphi(w)),$$
    is called a \textbf{model-truth-prior triplet}.
\end{definition}
\begin{definition}[Fisher Information Matrix]
    For a given statistical model $p(x|w)$, the \textbf{Fisher information matrix} is given by $I(w)=[I_{jk}(w)]$, where
    $$ I_{jk}(w) := \int \bigg(\frac{\partial \log p(x|w)}{\partial w_j}\bigg)\bigg(\frac{\partial \log p(x|w)}{\partial w_k}\bigg)p(x|w)dx.$$
\end{definition}
\begin{definition}[Regular/Singular Model]
    A statistical model $p(x|w)$ is said to be \textbf{regular} if 
    \begin{itemize}
        \item[(i)] the Fisher information matrix $I(w)$ is positive definite, and
        \item[(ii)] the model is identifiable, that is, if the function $w \mapsto p(\cdot |w)$ is injective.
    \end{itemize}
    If a model is not regular, then it is strictly singular. Finally, the set of \textbf{singular models} comprises both regular and strictly singular models.
\end{definition}
\textbf{Remark.} Most of the results in classic statistical learning theory assume that the working model is regular. Results like the Cramer-Rao inequality, the asymptotic normality of the Bayes posterior distribution around the unique parameter $w_0$ such that $q(x)=p(x|w_0)$, or the quadratic expansion of the Kullback-Leibler divergence, are all properties of regular models. However, most modern architectures, including layered neural networks or mixture models, are not regular. Hence, all of these nice properties do not hold and the question is now what results can be adapted to singular models. Singular learning theory argues that some of the questions can be answered by understanding certain singularities in the space of parameters $W$.
\begin{definition}[Kullback-Leibler Divergence]
    The \textbf{Kullback-Leibler divergence} between two probability measures $q,p$ is given by 
    $$ \mathrm{KL}(q||p):= \mathbb{E}_{X\sim q}\bigg[\log \frac{q(X)}{p(X)}\bigg] = \int q(x) \log \frac{q(x)}{p(x)} dx.$$
\end{definition}
Intuitively, the Kullback-Leibler divergence is a measure of how different the two probability measures are. Notice that it is in general not symmetric. However, $\mathrm{KL}(q||p)\geq 0$ with $\mathrm{KL}(q||p)= 0$ if and only if $q(x)=p(x)$ almost surely with respect to the Lebesgue measure. In our setting we are interested in the divergence between the statistical model and the true data-generating density. That is, $K(w) := \mathrm{KL}(q(x) || p(x|w) )$. Then, we are interested in studying the set of \textbf{optimal parameters}
$$ W_0 := \{ w \in W : K(w) = K_0\} \ \ \text{with} \ \ K_0 = \inf_{w'\in W} K(w')$$
For regular models, $W_0 = \{w_0\}$, but in singular models the model is not identifiable so we have to treat $W_0$ as a more general analytic variety. In order to study this analytic variety, we look at the zeta function as defined below.
\begin{definition}[Zeta function and RLCT]
    The \textbf{zeta function} is defined for Re$(z) > 0$ by 
    $$ \zeta (z) := \int_W (K(w)-K_0)^z \varphi(w) dw.$$
    This function can be analytically continued to a meromorphic function on the complex plane. All of the poles are real, negative and rational. Let $-\lambda$ be the largest pole of $\zeta$ and $m$ its multiplicity. Then, the \textbf{Real Log Canonical Threshold (RLCT)} and  \textbf{multiplicity} of the model-truth-prior triple are precisely $\lambda$ and $m$. The RLCT is the name given to the quantity $\lambda$ in the algebraic geometry literature. It plays an important role there as a birational invariant. In the machine learning literature, it is called the \textbf{Learning Coefficient}.
\end{definition}
Watanabe proposes a method in his seminal work \cite{watanabe2009slt} to compute the RLCT via \textit{resolution of singularities}. Importantly, it is precisely this RLCT which determines the geometry of the analytic variety $W_0$, which in turn controls some interesting statistical phenomena of the statistical model. 

We can understand the learning coefficient from a geometric lens. Theorem 7.1 in  \citet{watanabe2009slt} relates the learning coefficient $\lambda$ with a notion of flatness of the parameter space. Indeed, define the volume function 
$$ V(\varepsilon) := \int_{\{w:K(w)-K_0<\varepsilon\}} \varphi(w)dw.$$

Then, the mentioned theorem gives the following asymptotic expansion of the prior mass of parameters whose loss lies within $\varepsilon$ of a minimum:

$$ V(\varepsilon) = c\varepsilon^\lambda  (-\log \varepsilon)^{m-1} + o(\varepsilon^\lambda  (-\log \varepsilon)^{m-1}),$$
as $\varepsilon \to 0$.

\begin{figure}[t]
    \centering
    \includegraphics[width=\linewidth]{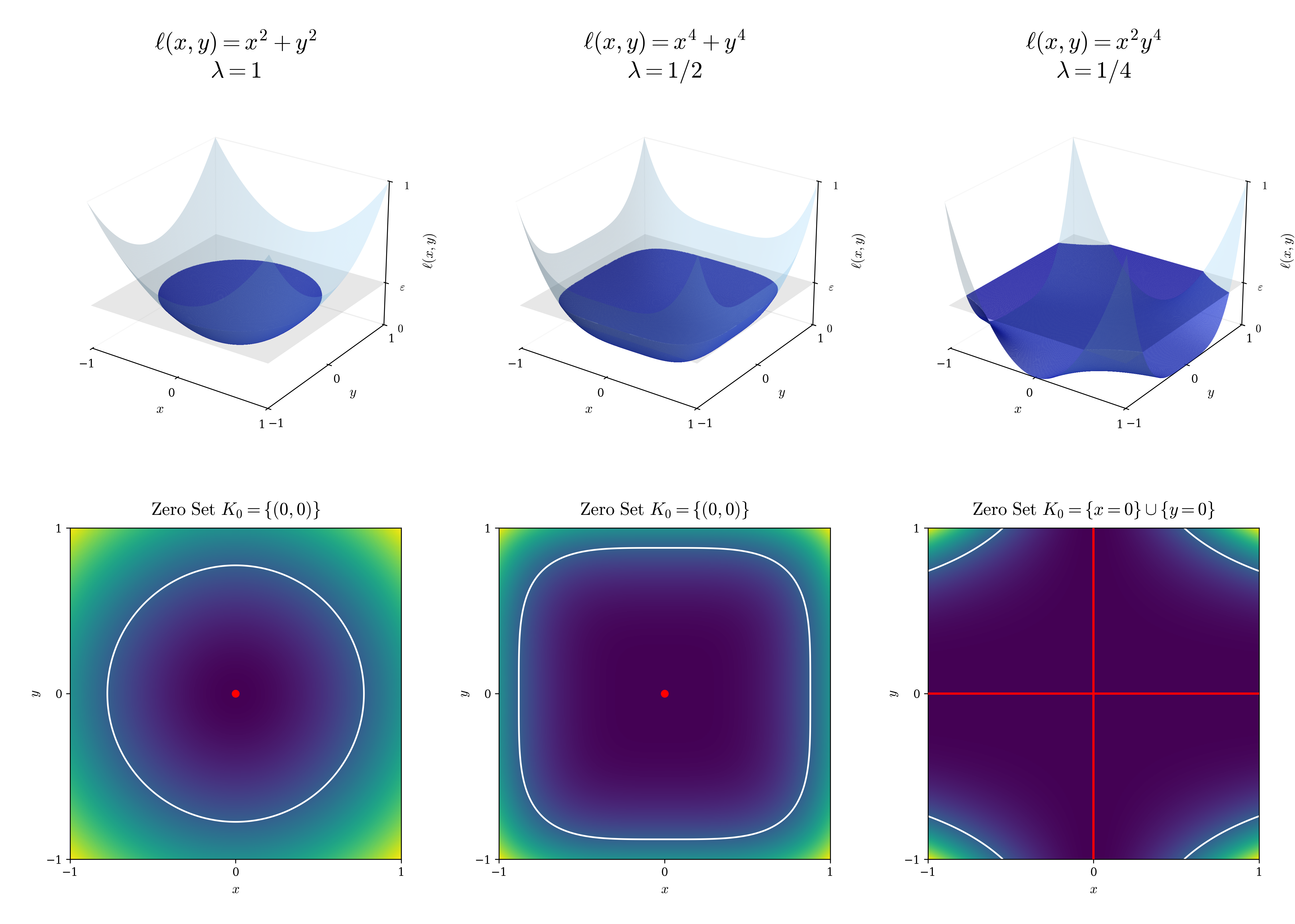}
    \caption{Three different loss landscapes are realised alongside their contour maps in the $(x,y)$-plane.
    Although the first two loss functions share the same zero-set, their volume-sets scale differently with $\varepsilon$ and, hence, their local learning coefficients are not equivalent.
    The final loss function shows that the zero-set need not be a single point.}
    \label{fig:placeholder}
\end{figure}

This gives a geometric intuition as to what the learning coefficient quantifies. It is a measure of the degeneracy of the loss landscape (which is equivalent in geometry to the KL landscape). Smaller values of $\lambda$ correspond to more degenerate, or flatter, regions. Again, this tells us that intuitively, $\lambda$ is a measure of the effective number of parameters the model has. This intuition can be made more rigorous by exploring other asymptotic results concerning the learning coefficient. To make this concrete, let us introduce some final terminology:
\begin{definition}
    For a given statistical model $p(x|w)$, define the following terms:
    \begin{itemize}
        \item Let $D:=\{x_i\}_{i=1}^n$ be i.i.d. samples from $q(x)$. This is our \textbf{dataset}.
        \item Let \textbf{sample negative log likelihood} be defined as 
        $$ L_n(w) = -\frac{1}{n}\sum_{i=1}^n \log p(x_i|w).$$
        This can be thought of as the \emph{training error} if we let $p(x_i|w) \propto e^{-l(x_i,w)}$, where $l(x_i,w)$ is the loss for data point $x_i$. We also define the \textbf{population loss} as $L(w) = \mathbb{E}_{X\sim q}[-\log p(X|w)]$. Notice then that the training error can be seen as an instance of an unbiased estimator of the population loss. 
        \item The \textbf{marginal likelihood} is defined as
        \begin{align*}
            Z_n & := \int_W \prod_{i=1}^n p(x_i|w) \varphi(w) dw \\ &= \int_W \exp \{-n L_n(w)\} \varphi (w)dw.
        \end{align*} 
        \item The \textbf{free energy}, or negative log marginal likelihood, is then just defined as $F_n = - \log Z_n$.
        \item The\textbf{ predictive distribution} $p(x|D_n)$ is defined as 
        $$ p(x|D_n)=\int_W p(x|w)p(w|D_n)dw.$$
    \end{itemize}
\end{definition}
It is worth making explicit here that mathematical results tend to be derived by analysing geometrically the variety $K(w)$, whereas many papers in the literature with a more empirical flavour discuss the loss landscape, or $L_n(w)$. Notice that 
\begin{align*}
    K(w) & = \mathbb{E}_{x\sim q} \left [ \log\frac{q(x)}{p(x|w)} \right ] \\ 
    & =  \mathbb{E}_{x \sim q} \left [ \log q(x)\right ] -\mathbb{E}_{x \sim q} \left [ \log p(x|w)\right ] \\
    &=  H(q) + L(w).
\end{align*}
So, the population loss only differs from the KL divergence by a constant, known as the \textbf{entropy}.

\subsubsection*{Statistical properties of models}
We now briefly explain some results about the statistical properties of the model. First of all, let us unpack what the free energy actually encodes. From Bayes' rule we know that:
\begin{align*}
    p(w|D_n)=\frac{p(D_n|w)\varphi (w)}{Z_n}
\end{align*}
Take logs and expectation over $w \sim p(w|D_n)$ now.
\begin{align*}
    F_n & = -\log Z_n \\
    & = \mathbb{E}_{w \sim p(w|D_n)} \left [ nL_n(w) +  \log p(w|D_n) - \log \varphi (w)\right] \\
    &  =  \mathbb{E}_{w \sim p(w|D_n)} \left [ nL_n(w)\right] + K(p(w|D_n) || \varphi (w)).
\end{align*}
Therefore, the free energy gives a measure of the expected training loss under the posterior distribution plus the divergence between this same posterior and the prior.

In \citet{watanabe2009slt}, Watanabe gives an asymptotic expansion of the free energy: given a $w_0\in W_0$, we can asymptotically expand the free energy as 
$F_n = nL_n(w_0) + \lambda \log n -(m-1)\log\log n + o_p(1)$.

This elucidates why phase transitions can be tracked by changes in $\lambda$. 
\\
Furthermore, the \textbf{Bayes generalisation error} is also related to the learning coefficient. This is given by 
$$ B_g = \mathrm{KL}(q(x) || p(x|D_n)),$$
so it encapsulates the difference in the distributions between the true data generating distribution $q(x)$ and the prediction distribution $p(x|D_n)$. Then, Watanabe showed as well in \citet{watanabe2009slt} that its expectation also has an asymptotic expansion given by the following:
$$ \mathbb{E}[B_g] =  \frac{\lambda}{n} + o\bigg(\frac{1}{n}\bigg).$$

This result is very significant. Together with the geometric interpretation of the learning coefficient, it provides evidence to the widely accepted idea in the machine learning community that flatter regions of the loss landscape lead to models with better generalisation \cite{petzka2021relative}. 

\subsection{Local Learning Coefficient}
In this section we briefly explain the paper \citet{lau2025local}. Up to now, we have discussed asymptotic results on the global minimisers of the loss. In \citet{lau2025local}, they extend the definition of the learning coefficient to local minimisers, adding a component of practicality to the theory, since we rarely have access to global minimisers. 

Let $w^*\in W$ be a local minimum of the population loss $L(w)$. Define the ball 
$$ B(w^*,\varepsilon) = \{w \in B(w^*) : L(w)-L(w^*) < \varepsilon\},$$
where $B(w^*)$ is a closed ball around $w^*$ for which $w^*$ is a minimiser. Define now 
$$ V(\varepsilon) = \int_{B(w^*,\varepsilon)} \varphi (w) dw.$$
This is essentially the same definition as in the learning coefficient but we only integrate around a closed ball of $w^*$. Then, we still get the same asymptotic expansion as before 
$$ V(\varepsilon) = c\varepsilon^{\lambda(w^*)}  (-\log \varepsilon)^{m-1} + o(\varepsilon^{\lambda(w^*)}  (-\log \varepsilon)^{m-1}),$$
as $\varepsilon \to 0$, for some rational value $\lambda(w^*)$ which is called the \textbf{local learning coefficient (LLC)}. In \citet{lau2025local}, they prove that the LLC is invariant under local diffeomorphisms. It turns out that this definition is equivalent to that of the learning coefficient if we restrict the parameter space to $B(w^*)$ with a normalised prior induced by $\varphi (w)$. 

The advantage of the LLC is that it allows for a computation of a metric which gives local information of the geometry of the landscape, and still encodes information about the statistical properties of the model. This quantity has consistent estimators which can be efficiently computed, which is the content of the next section. 
\section{LLC estimation} \label{appendix: llc estimation}

In this section, we provide the algorithmic details of estimating local learning coefficients in neural network architectures. We follow the sampling algorithm as implemented in \citet{devinterpcode}. Firstly, we present the theory behind the LLC estimator and then we conduct some experiments to understand how to calibrate some of its hyperparameters.

\subsection{LLC estimation: theory}
\paragraph{Setup.}
Let $\{(x_i,y_i)\}_{i=1}^n$ be the dataset and $L_n(\theta)=\frac{1}{n}\sum_{i=1}^n \ell(\theta;x_i,y_i)$ the empirical loss. Fix a local minimizer $\hat\theta\in\arg\min_\theta L_n(\theta)$.
For inverse temperature $\beta>0$, define the tempered Gibbs density
\[
p_\beta(\theta)\ \propto\ \exp\!\big(-n\beta\,L_n(\theta)\big)\,\varphi(\theta),
\]
with a smooth prior $\varphi$ positive near $\hat\theta$.

Let $Z_n(\beta)=\int \exp(-n\beta L_n(\theta))\varphi(\theta)\,d\theta$ and $F_n(\beta)=-\beta^{-1}\log Z_n(\beta)$.

Denote by $\lambda$ the RLCT and by $m\in\mathbb N$ its multiplicity at $\hat\theta$.
As $n\beta\to\infty$,
\begin{equation}
Z_n(\beta)\ \asymp\ e^{-n\beta L_n(\hat\theta)}\,(n\beta)^{-\lambda}\,\{\log(n\beta)\}^{\,m-1}\cdot C\,(1+o(1)).
\label{eq:Z-asymp}
\end{equation}
Taking logarithms,
\begin{equation}
\log Z_n(\beta)
\approx -\,n\beta L_n(\hat\theta) - \lambda \log(n\beta) + (m-1)\log\log(n\beta) + \mathrm{const} + o(1).
\label{eq:logZ}
\end{equation}

Using $\frac{\partial}{\partial\beta}\log Z_n(\beta) = -\,\mathbb E_\beta[n L_n(\theta)]$, we have
\begin{align}
\mathbb E_\beta[L_n(\theta)]
&= -\frac{1}{n}\frac{\partial}{\partial\beta}\log Z_n(\beta)
\approx L_n(\hat\theta) + \frac{\lambda}{n\beta} - \frac{m-1}{n\beta\log(n\beta)} + o\!\Big(\frac{1}{n\beta}\Big).
\end{align}
Hence
\begin{equation}
n\beta\big(\mathbb E_\beta[L_n(\theta)] - L_n(\hat\theta)\big)
\approx \lambda - \frac{m-1}{\log(n\beta)} + o(1).
\label{eq:llc-observable}
\end{equation}
We define the \emph{Local Learning Coefficient} (LLC) as the limit of the left-hand side; for regular minima in $d$ dimensions one has $\lambda=d/2$.

\paragraph{Estimator.}
Draw $\theta_1,\dots,\theta_T$ from $p_\beta$ (e.g.\ via SGMCMC) and compute $L_t=L_n(\theta_t)$.
Let $L_{\mathrm{init}}:=L_n(\hat\theta)$.
The (chain-wise) estimator is
\[
\widehat{\lambda}_\beta \;:=\; n\beta\Big(\frac{1}{T}\sum_{t=1}^T L_t - L_{\mathrm{init}}\Big),
\]
and we average across chains.
A bias-reduced estimate is obtained by regressing $\widehat{\lambda}_{\beta_j}$ against $1/\log(n\beta_j)$ across multiple temperatures and taking the intercept.

We proceed to discuss the meaning and selection of hyperparameters, categorized by hyperparameters in the target-distribution and sampling process. 

Target–distribution hyperparameters: 
\begin{itemize}
    \item \textit{Inverse temperature $n\beta$} controls locality of the target. Larger $n\beta$ concentrates sampling near $\hat\theta$, resulting in a more local estimation and lower bias (since bias $\propto 1/\log n\beta$), but reduces movement which leads to higher MC variance.
    Too small $n\beta$ risks crossing basins, then the LLC estimation is no longer local.
    \item \textit{Weight decay $\varphi(\theta)$.} $-\log\varphi(\theta)$ is added to the potential $U(\theta)=n\beta\,L_n(\theta)-\log\varphi(\theta)$.
    In the implementation, it is preferred to use a weak, smooth prior matching training's weight decay to avoid distorting the local geometry.
    \item \textit{Temperature sweep $\{\beta_j\}$.} We evaluate $\widehat{\lambda}_{\beta_j}$ at several $n\beta_j$ and regress $\widehat{\lambda}_{\beta_j}$ against $1/\log(n\beta_j)$; use the intercept as a bias–reduced estimate of $\lambda$.
    \item \textit{Optional localisation coefficient $\gamma$. }Adds $\frac{\gamma}{2}\|\theta-\hat\theta\|^2$ to $U(\theta)$ to enforce locality.
    It stabilizes runs but changes the target; prefer increasing $n\beta$ before using $\gamma>0$.
\end{itemize}

Sampler hyperparameters: 
\begin{itemize}
    \item \textit{Langevin/SGMCMC step size $\eta$ (\texttt{lr}).}
    Discretization step of the dynamics. Larger $\eta$ improves exploration but increases integrator bias and instability; smaller $\eta$ is stable but may freeze (LLC $\approx 0$).
    
    \item \textit{Number of chains $C$ (\texttt{num\_chains}).} Independent replicas for variance estimation and convergence checks; typical $C\in\{4,8\}$.
    
    \item \textit{Draws per chain $T$ (\texttt{num\_draws}).} Samples used to estimate $\overline{L}$; typical $T\in[200,1000]$. Effective sample size (ESS) matters more than raw $T$.

    \item \textit{Burn–in $B$.} Discarded initial steps before collecting draws; typical $B\in[100,500]$, check stationarity via traces.
    
    \item \textit{Thinning $k$.} Keep every $k$–th draw to reduce autocorrelation. Prefer reporting ESS rather than heavy thinning.
\end{itemize}

\subsection{LLC estimation: effect of hyperparameters}

    \begin{figure}[H]
    \centering
      \begin{subfigure}[t]{0.32\linewidth}
        \centering
        \includegraphics[width=\linewidth]{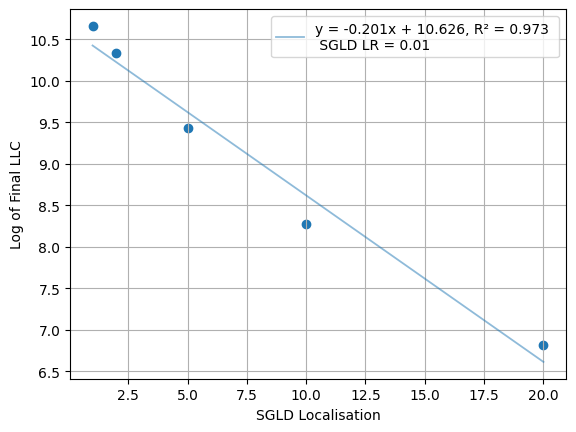}
        \caption{ SGLD learning rate = 0.01}
      \end{subfigure}\hfill
      \begin{subfigure}[t]{0.32\linewidth}
        \centering
        \includegraphics[width=\linewidth]{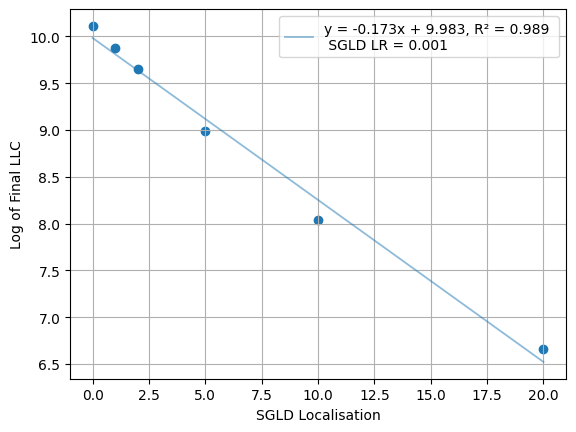}
        \caption{SGLD learning rate = 0.001}
      \end{subfigure}\hfill
      \begin{subfigure}[t]{0.32\linewidth}
        \centering
        \includegraphics[width=\linewidth]{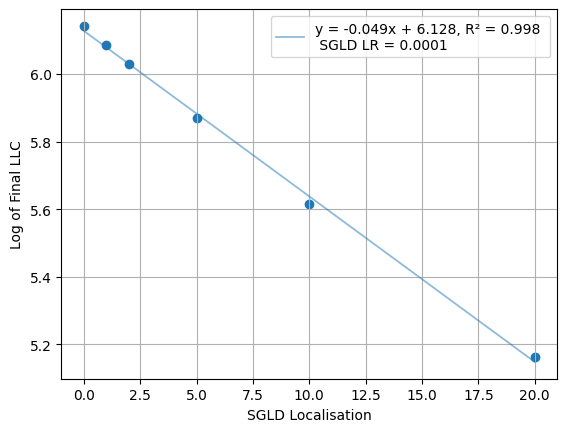}
        \caption{ SGLD learning rate = 0.0001}
      \end{subfigure}
      \caption{Log-linear relationship between the final LLC of a trained model and the SGLD localisation parameter $\gamma$, shown for different fixed values of the SGLD learning rate.}
    \end{figure}

To understand the effect of the localisation $\gamma$ and SGLD learning rate hyperparameters on the LLC estimators, we trained exactly the same model and compared the final LLC estimate while only varying the SGLD sampler. The estimates varied by orders of magnitude. In fact, we observed a log-linear relationship between $\gamma$ and the estimated LLC across different SGLD learning rates. The log-linear relationship is consistent with theory: the parameter volume grows exponentially as the SGLD sampler becomes less localised at the minimum. The essential conclusion from these experiments is that, (i) it is complicated to draw exact comparisons between theoretical LLC predictions and estimated LLC values, and (ii) to draw comparisons between different LLC estimates, it is vital to maintain these hyperparameters constant.

\section{SGLD for posterior sampling}

Stochastic optimization is traditionally analyzed in terms of convergence to a minimizer. Under the small‑step, small‑batch regime, SGD performs a noisy discretization of Langevin diffusion whose equilibrium law is a tempered Bayesian posterior \citep{welling2011bayesian, mandt2017sgd}. 

Let $\mathcal D=\{(x_i,y_i)\}_{i=1}^{n}$ be the training set and
\begin{equation}\label{eq:nll-posterior}
\mathcal L(\theta)\;=\;
          -\tfrac1n\sum_{i=1}^{n}\log p(y_i\mid x_i,\theta)
          \;+\; \lambda \|\theta\|_2^{2}
\end{equation}
with negative log‑posterior under an isotropic Gaussian prior $\theta\sim\mathcal N(0,(2\lambda)^{-1}I)$. A single mini‑batch update of size $B$ with learning rate $\eta$ is
\begin{equation}\label{eq:sgd-discrete}
\theta_{k+1}
      = \theta_k
        - \eta\,\widehat{\nabla}\mathcal L(\theta_k),
\qquad
\widehat{\nabla}\mathcal L(\theta_k)
        = \nabla\mathcal L(\theta_k)+\xi_k,
\end{equation}
where $\xi_k$ is the stochastic gradient noise.

Assume that (\emph{i}) mini‑batches are sampled uniformly without replacement, (\emph{ii}) $B\ll n$, and (\emph{iii}) the true gradient covariance $C(\theta)=\frac1n\sum_{i=1}^{n}g_i g_i^{\!\top}$ with
$g_i=\nabla_\theta\bigl[-\log p(y_i\mid x_i,\theta)\bigr]$ varies slowly in~$\theta$. Then $\xi_k$ satisfies
$\mathbb E[\xi_k]=0$ and
$\operatorname{Cov}(\xi_k)\approx\frac{n-B}{nB}\,C(\theta_k).$
For $\eta\ll1$ the Euler–Maruyama limit of \eqref{eq:sgd-discrete} is the \emph{pre‑conditioned Langevin SDE}
\begin{equation}\label{eq:langevin}
d\theta_t
  = -\nabla\mathcal L(\theta_t)\,dt
    + \sqrt{\,2T\,\Sigma(\theta_t)}\,dW_t,
\qquad
T=\frac{\eta(n-B)}{2nB},
\quad
\Sigma(\theta)=C(\theta),
\end{equation}
where $W_t$ is a Wiener process. If $\Sigma(\theta)\equiv I$ (or is replaced by its equilibrium value), \eqref{eq:langevin} reduces to \emph{Langevin dynamics} with invariant density
\begin{equation}\label{eq:gibbs}
\pi_T(\theta)
   \;\propto\;
   \exp\!\bigl[-n\,\mathcal L(\theta)/T\bigr]
   \;=\;
   p(\mathcal D\mid\theta)^{1/T}\,p(\theta).
\end{equation}
Thus stochastic gradient descent is formally a \textbf{Gibbs sampler} at \emph{temperature} $T\propto\eta/(nB)$. Deterministic gradient descent appears as the \emph{zero‑temperature} limit $T\to0$ (full batch or vanishing step size).

Under standard dissipativity and non‑degenerate noise assumptions, the SDE~\eqref{eq:langevin} is ergodic, and
the empirical law of the discrete iterates converges to $\pi_T$ in Wasserstein‑2 distance at a rate $W_2\bigl(\text{law}(\theta_k),\pi_T\bigr)
      = O\!\bigl(\sqrt{\eta}+B^{-1}\bigr)$.

\section{Proof of NTK theorem} \label{Appendix: NTK Theorem}
We first conduct a second-order Taylor expansion of the population risk, which is
\[
\mathcal K(\theta):=\E\!\left[\ell(Y,f_\theta(X))-\ell(Y,f_{\theta_\star}(X))\right]\ge 0,\quad \mathcal K(\theta_\star)=0,
\]
where the model in question is the Neural Tangent Kernel approximation, that is
\[
f_\theta(x)=f_{\theta_0}(x)+\phi(x)^\top \theta,\qquad \theta\in\R^{\tilde d}.
\]
First notice that 
$\frac{\partial f_\theta(x)}{\partial \theta} = \phi(x).$
Hence, using the chain rule, we find that 
$$\frac{\partial l(Y,f_\theta(X))}{\partial \theta} = \frac{\partial l(Y,f_\theta(X))}{\partial f_\theta(X)}\frac{\partial f_\theta(x)}{\partial \theta} = \frac{\partial l(Y,f_\theta(X))}{\partial f_\theta(X)} \phi(X). $$
Because $f_{\theta*}$ is a population risk minimiser, we find that 
$$ \mathbb{E} \left [\frac{\partial l(Y,f_\theta(X))}{\partial \theta} \bigg |_{\theta=\theta*} \right ]= \mathbb{E} \left [ \frac{\partial l(Y,f_\theta(X))}{\partial f_\theta(X)} \bigg |_{\theta=\theta*} \phi(X) \right ]=0.$$
Secondly, let us look at the Hessian:
\begin{align*}
     \frac{\partial^2 l(Y,f_\theta(X))}{\partial \theta \partial \theta^T} &= \frac{\partial}{\partial \theta} \bigg ( \frac{\partial l(Y,f_\theta(X))}{\partial f_\theta(X)^T} \phi(X)^T\bigg) \\
     &= \frac{\partial^2 l(Y,f_\theta(X))}{\partial f_\theta(X)\partial f_\theta(X)^T} \phi(X) \phi(X)^T 
\end{align*}
Finally, the second-order Taylor expansion becomes:
\begin{align*}
    \mathcal K(\theta) &=\E\!\left[\ell(Y,f_\theta(X))-\ell(Y,f_{\theta_\star}(X))\right] \\
    & = \E\left[ \frac{\partial l(Y,f_\theta(X))}{\partial \theta} \bigg |_{\theta=\theta*}^T (\theta-\theta*) + \frac{1}{2}(\theta-\theta*)^T \frac{\partial^2 l(Y,f_\theta(X))}{\partial \theta \partial \theta^T}(\theta-\theta*) + o(||\theta-\theta*||^2) \right] \\
    & = \frac{1}{2}(\theta-\theta*)^T \E \left[ \frac{\partial^2 l(Y,f_\theta(X))}{\partial f_\theta(X)\partial f_\theta(X)^T} \phi(X) \phi(X)^T\right](\theta-\theta*) + o(||\theta-\theta*||^2) \\
    & = \frac{1}{2}(\theta-\theta*)^T \mathcal{I}(\theta-\theta*) + o(||\theta-\theta*||^2)
\end{align*}
Let $\mathcal I=Q\Lambda Q^\top$ with orthogonal $Q$ and $\Lambda=\mathrm{diag}(\lambda_1,\dots,\lambda_r,0,\dots,0)$, $\lambda_j>0$.
In coordinates $\delta:=\theta-\theta_\star=Q(z,w)$ with $z\in\R^r$ (range) and $w\in\R^{\tilde d-r}$ (null),
\[
\begin{aligned}
    \mathcal K(\theta) &=\tfrac12\, z^\top \Lambda_r z + o(\|z\|^2+\|w\|^2),
\quad \Lambda_r=\mathrm{diag}(\lambda_1,\dots,\lambda_r). \\
 &=\tfrac12\, z^\top \Lambda_r z + o(\|z\|^2),
\end{aligned}
\]
since $\mathcal K(\theta)$ is independent of $w$.
The centered marginal likelihood (dropping constants that contribute $O_p(1)$) satisfies
\[
Z_n \ \asymp\ \int_{\R^{\tilde d}} \exp\!\big(-n\,\mathcal K(\theta)\big)\,\varphi(\theta)\,d\theta.
\]
Dominated by a neighbourhood of $u_\star$, we use the local quadratic approximation and integrate out the null-space coordinates. Writing $\delta=\theta-\theta_\star=Q(z,w)$, we obtain
\[
Z_n \asymp \int_{\mathbb{R}^r} \exp\!\left(-\frac n2 z^\top \Lambda_r z\right)\, m(z)\,dz,
\]
where
\[
m(z):=\int_{\mathbb{R}^{\tilde d-r}} \varphi(\theta_\star+Q(z,w))\,dw.
\]
Since $\varphi$ is a proper prior and $m(z)$ is finite and continuous at $z=0$ under the stated regularity assumptions, we have $m(z)=m(0)+o(1)$ near $0$. Therefore,
\[
Z_n = m(0)\int_{\mathbb{R}^r}\exp\!\left(-\frac n2 z^\top \Lambda_r z\right)\,dz\,(1+o(1)).
\]
The Gaussian integral is
\[
\int_{\mathbb{R}^r}\exp\!\left(-\frac n2 z^\top \Lambda_r z\right)\,dz
=
(2\pi)^{r/2}n^{-r/2}(\det \Lambda_r)^{-1/2}.
\]
Hence
\[
Z_n = C\,n^{-r/2}(1+o(1))
\]
for some finite constant $C>0$, and so
\[
\log Z_n = -\frac r2 \log n + O(1),
\]
which proves that $\lambda = r/2$.
\section{Proof of Lazy Regime Theorem}
\label{appendix: lazy regime}

The aim of this section is to prove theorem \ref{thm: lazy regime}. Here is the proof:
\begin{proof}
We use $X$ for both the random variable with uniform distribution over the whole dataset and the whole data matrix $X$. Then,
\begin{align*}
    \ell(Y, \hat{Y}_{\mathrm{ridge}}) & = \tfrac12 \| Y - \hat{Y}_{\mathrm{ridge}} \|^2_F \\
    &= \tfrac12 \| \operatorname{vec}(Y) - (I \otimes F)\operatorname{vec}(V_{\mathrm{ridge}}) \|^2. \\
\end{align*}
For simplicity let all the lowercase versions of the matrices be the column-wise vectorisation of the corresponding matrices, eg. $y=\operatorname{vec}(Y)$. Now, 
\[\mathbb{E} \left [ \frac{\partial \ell}{\partial v}\Big|_{V = V_{\mathrm{ridge}}} \right ]
= \mathbb{E} \left [ (I \otimes F)^\top \big( y - (I \otimes F) v \big)\Big|_{v = v_{\mathrm{ridge}}} \right].\]
Since $v_{\mathrm{ridge}}$ minimises the regularised objective $J(v) = \ell(v) + \tfrac{\eta}{2}\|v\|^2$, we have $\nabla J(v_{\mathrm{ridge}})=0$, hence
\[
\mathbb{E}\!\left[\frac{\partial \ell}{\partial v}\Big|_{v=v_{\mathrm{ridge}}}\right]
= -\eta\, v_{\mathrm{ridge}}.
\] 
We work in the interpolation limit $\eta \to 0^+$, in which $v_{\mathrm{ridge}}$ approaches a minimiser of the unregularised loss and the first-order term becomes negligible. Next,
\[
\frac{\partial^2 \ell}{\partial v \partial v^\top}\Big|_{v = v_{\mathrm{ridge}}}
= (I \otimes F)^\top (I \otimes F).
\]
Hence, we may Taylor expand the excess population risk as:
\begin{align*}
    \mathcal{K}(v)
=& \ \mathbb{E}\!\left[\tfrac12 (v - v_{\mathrm{ridge}})^\top
    (I \otimes F)^\top(I \otimes F)\,
    (v - v_{\mathrm{ridge}})\right] \\
    & \quad + o(\|v-v_{\mathrm{ridge}}\|^2).
\end{align*}

Therefore, by using the same argument presented in the NTK theorem, we can say that
\begin{align*}
    \lambda & = \tfrac12 \operatorname{rank} \mathbb{E}\!\left[ (I \otimes F)^\top(I \otimes F) \right] \\
    & = \tfrac12 p \operatorname{rank} \mathbb{E} [F^\top F].
\end{align*}

Now, the expectation is taken over $X$, which has a uniform distribution. The dataset is finite, so the expectation is precisely the mean of all possible values of $F^\top F$. This means that if we let, by abusing notation, 
$X$ be the whole data matrix and $F$ the corresponding resulting matrix, then we have
\begin{align*}
    \lambda & = \tfrac12 p \operatorname{rank} \{ \tfrac{1}{p^2} F^\top F \} \\ 
    & = \tfrac12 p \operatorname{rank} \{ F^\top F \} \\ 
    & = \tfrac12 p \operatorname{rank} \{F\}.
\end{align*}

Finally, since $\sigma$ is real analytic and $W$ is initialised with a distribution whose density is absolutely continuous with respect to the Lebesgue measure, we can use proposition \ref{theorem:rank_after_activation} from this appendix.
\end{proof}

\paragraph{Notation.}
For $m\in\mathbb N$, let $\lambda_m$ denote Lebesgue measure on $\mathbb R^m$.
A random vector $U\in\mathbb R^m$ is \emph{absolutely continuous} if it admits a density
$f_U$ with respect to $\lambda_m$.

Throughout, $W=[w_1,\dots,w_K]\in\mathbb R^{d\times K}$ denotes a random matrix with
\emph{independent} columns $w_j$ such that each $w_j$ is absolutely continuous on $\mathbb R^d$. In most cases, the matrix of parameters $W$ is randomly initialised with respect to an absolutely continuous density such as the normal distribution, so this assumption is realistic. 

Let $X\in\mathbb R^{n\times d}$ be deterministic with $\operatorname{rank}(X)=r$,
and let $W\in\mathbb R^{d\times K}$ have independent, absolutely-continuous columns. Let $\sigma$ be real-analytic. Define the random feature matrix
\[
F := \sigma(XW)\in\mathbb R^{n\times K},
\]
where $\sigma$ is applied entry-wise. Also, if $\mu$ is the law of a column of $F$, define
$$ \mathcal{L} = \operatorname{span} \{\operatorname{supp}(\mu)\},$$
and let $l = \dim \mathcal{L}$.

\subsection{Main result}
The aim of this appendix is to prove the following proposition:
\begin{theorem}[Almost-sure rank after an entry-wise activation]
\label{theorem:rank_after_activation}

Using the above notation and assumptions, the following holds:
\[
\operatorname{rank}(F)=\min(l,K)\qquad\text{almost surely.}
\]
\end{theorem}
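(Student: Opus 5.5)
The plan is a sequential ``generic rank'' argument, column by column, that uses two facts: the columns $f_j=\sigma(Xw_j)$ are i.i.d.\ with law $\mu$, and the support of $\mu$ spans $\mathcal L$. The upper bound is immediate. Every column $f_j$ lies in $\operatorname{supp}(\mu)\subseteq\mathcal L$ almost surely, so $\operatorname{rank}(F)\le\min(l,K)$. For the lower bound, I will show by induction on $j\le\min(l,K)$ that $f_1,\dots,f_j$ are linearly independent almost surely.

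For the inductive step, suppose $f_1,\dots,f_{j-1}$ are independent, with $j-1<l$. Condition on $w_1,\dots,w_{j-1}$ and let $S=\operatorname{span}(f_1,\dots,f_{j-1})$, a subspace of dimension $j-1<l$. Since $w_j$ is independent of the earlier columns, it suffices to show that for every fixed proper subspace $S\subsetneq\mathcal L$ we have $\Pr(f_j\in S)=0$. Then Fubini gives $\Pr(f_j\in S\mid w_1,\dots,w_{j-1})=0$ almost surely, which closes the induction.

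The key lemma is this: for a fixed subspace $S$ with $\dim S<l$, $\Pr(\sigma(Xw)\in S)=0$. To prove it, choose a nonzero linear functional $a$ with $a|_S=0$ but $a|_{\mathcal L}\neq 0$. Such an $a$ exists because $S\not\supseteq\mathcal L$. Define the real-analytic function $g(w)=a^\top\sigma(Xw)$ on $\mathbb R^d$; this is analytic because $\sigma$ is analytic entrywise and $w\mapsto Xw$ is linear. Then $\{f_j\in S\}\subseteq\{g(w_j)=0\}$.

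If $g\not\equiv0$, its zero set has Lebesgue measure zero, by the standard fact that the zero set of a nonvanishing real-analytic function is null, proved for instance by induction on dimension with Fubini. Since $w_j$ is absolutely continuous, $\Pr(g(w_j)=0)=0$.

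So the main obstacle is ruling out $g\equiv0$. I will argue via the support. If $g\equiv 0$, then $\sigma(Xw)\in\ker a$ for every $w\in\mathbb R^d$, hence almost surely. Since $\ker a$ is closed, $\operatorname{supp}(\mu)\subseteq\ker a$ and therefore $\mathcal L\subseteq\ker a$. This contradicts $a|_{\mathcal L}\neq0$.

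One care point is making the support argument valid under only absolute continuity. In this direction it is easy, because $g\equiv 0$ constrains every value of $\sigma(Xw)$, not just almost all of them. I would still record the definitions carefully: $\operatorname{supp}(\mu)$ is the smallest closed set of full measure, and $\mathcal L$ is its span.

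With the lemma in hand, the induction runs up to $j=\min(l,K)$ and yields $\operatorname{rank}(F)\ge\min(l,K)$ almost surely. Combined with the upper bound, this gives the claim. Independence of the columns is used only to condition, and the columns need not be identically distributed for the lemma itself. If they are not identically distributed, however, $\mathcal L$ should be read as a common span, which matches the paper's setting where $\mu$ is the common law.
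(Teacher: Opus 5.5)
Your proof is correct and follows the paper's argument: the same column-by-column induction, choosing a functional $a$ that vanishes on the current span but not on $\mathcal L$, and using that the zero set of the nonvanishing real-analytic function $a^\top\sigma(\cdot)$ is Lebesgue-null. The differences are minor. You work with $g(w)=a^\top\sigma(Xw)$ on $\mathbb R^d$ directly, so you do not need the paper's lemma that $z_j=Xw_j$ is absolutely continuous on $\operatorname{Im}(X)$. Your support argument also links the statement's definition $\mathcal L=\operatorname{span}\operatorname{supp}(\mu)$ to $g\not\equiv 0$, whereas the paper's proof quietly uses $\mathcal L=\operatorname{span}\{\sigma(z): z\in\operatorname{Im}(X)\}$, where that step is immediate.
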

We are going to build the proof sequentially. That is, lemma \ref{lem:generic_full_rank} shows that $W$ has full rank almost surely. Then, lemma \ref{lem:rank_after_linear_map} shows that under the linear map $X$, the columns of $W$ lie in a space of dimension $\operatorname{rank}(X)$, inside a possibly bigger ambient space. Finally, we study the effect of applying $\sigma$ entry-wise. The non-linearity of the operator can curve the space and increase its dimension significantly. We can only hope to produce bounds under assumptions of what $\sigma$ looks like. 

\subsection{Generic full rank under absolute continuity}

\begin{lemma}[Generic full rank for random matrices]
\label{lem:generic_full_rank}
Let $W=[w_1,\dots,w_K]\in\mathbb R^{d\times K}$ have independent columns, each absolutely
continuous on $\mathbb R^d$. Then
\[
\operatorname{rank}(W)=\min(d,K)\qquad\text{almost surely.}
\]
\end{lemma}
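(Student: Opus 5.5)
We prove the lemma by showing that rank deficiency forces the columns of $W$ into a Lebesgue-null set, then using independence and Fubini to conclude that this happens with probability zero. Set $m:=\min(d,K)$. Since $\operatorname{rank}(W)\le m$ always holds, it suffices to show $\operatorname{rank}(W)\ge m$ almost surely. Moreover, it suffices to show that the first $m$ columns $w_1,\dots,w_m$ are linearly independent almost surely.

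If $K\le d$, this is exactly full column rank. If $K>d$, then $m=d$, and $d$ independent columns already span $\mathbb R^d$. So in both cases we reduce to proving that $w_1,\dots,w_m$, with $m\le d$, are linearly independent with probability one.

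We argue by induction on $j\le m$, with the event
\[
E_j:=\{w_1,\dots,w_j \text{ linearly independent}\}.
\]
For $j=1$, we have $\mathbb P(w_1=0)=0$, because $\{0\}$ is Lebesgue-null and $w_1$ has a density.

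For the inductive step, assume $\mathbb P(E_{j-1})=1$ with $j\le m\le d$. Condition on $(w_1,\dots,w_{j-1})$. The span $S=\operatorname{span}(w_1,\dots,w_{j-1})$ is a linear subspace of dimension at most $j-1<d$, so it is a proper subspace and hence has Lebesgue measure zero in $\mathbb R^d$. By independence, $w_j$ is independent of $(w_1,\dots,w_{j-1})$, so Fubini gives
\[
\mathbb P(w_j\in S)=\int \mathbb P\big(w_j\in \operatorname{span}(u_1,\dots,u_{j-1})\big)\,d\mu_{1:j-1}(u)=\int\!\int_{\operatorname{span}(u)} f_{w_j}\,d\lambda_d\,d\mu_{1:j-1}(u)=0,
\]
because the inner integral is over a null set. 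Hence $\mathbb P(E_j)=\mathbb P(E_{j-1})-\mathbb P(E_{j-1}\cap\{w_j\in S\})=1$. Taking $j=m$ completes the induction.

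The only delicate point is the measurability needed to apply Fubini/conditioning. The event $\{w_j\in\operatorname{span}(w_1,\dots,w_{j-1})\}$ coincides, on $E_{j-1}$, with the vanishing of all $j\times j$ minors of $[w_1|\cdots|w_j]$. These minors are polynomials, so the event is a closed set in $\mathbb R^{d\times j}$ and is therefore Borel.

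An alternative route avoids the induction altogether. The determinant of the top $m\times m$ block is a nonzero polynomial in the entries of $W$, so its zero set is Lebesgue-null in $\mathbb R^{d\times K}$. Because the columns are independent, the law of $W$ is the product of the column laws and hence is absolutely continuous on $\mathbb R^{dK}$. That absolute continuity is exactly what transfers the null set to a probability-zero event.
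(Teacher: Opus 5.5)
Your proof is correct and takes essentially the same route as the paper's: condition on the earlier columns, note that their span is a proper and hence Lebesgue-null subspace, use independence and absolute continuity of the next column, and reduce the case $K>d$ to the first $d$ columns. Your sequential induction on the events $E_j$ makes rigorous the paper's ``by symmetry'' step, which implicitly needs a union bound over which column lies in the span of the others. Your determinant-based alternative is also valid.
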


\begin{proof}
First assume $K\le d$. The event $\{\operatorname{rank}(W)<K\}$ is the event that the columns
are linearly dependent. By symmetry it suffices to show
\[
\mathbb P\!\bigl(w_K\in \operatorname{span}\{w_1,\dots,w_{K-1}\}\bigr)=0.
\]
Condition on $(w_1,\dots,w_{K-1})=(v_1,\dots,v_{K-1})$. Then
$\operatorname{span}\{v_1,\dots,v_{K-1}\}$ is a linear subspace of $\mathbb R^d$ of dimension
at most $K-1<d$, hence it has $\lambda_d$-measure zero. Since $w_K$ is absolutely continuous,
\[
\mathbb P\!\bigl(w_K\in \operatorname{span}\{v_1,\dots,v_{K-1}\}\mid w_1=v_1,\dots,w_{K-1}=v_{K-1}\bigr)=0.
\]
Taking expectation over $(w_1,\dots,w_{K-1})$ yields the claim.

Now assume $K>d$. Then $\operatorname{rank}(W)\le d$ always. Let $\widetilde W$ be the
$d\times d$ submatrix formed by the first $d$ columns of $W$. By the case $K=d$ above,
$\operatorname{rank}(\widetilde W)=d$ almost surely, hence $\operatorname{rank}(W)\ge d$
almost surely. Therefore $\operatorname{rank}(W)=d=\min(d,K)$ almost surely.
\end{proof}

\subsection{After a fixed linear map}

\begin{lemma}[Rank after a fixed linear map]
\label{lem:rank_after_linear_map}
Let $X\in\mathbb R^{n\times d}$ be deterministic with $\operatorname{rank}(X)=r$.
Let $W\in\mathbb R^{d\times K}$ satisfy the assumptions of Lemma~\ref{lem:generic_full_rank}.
Then
\[
\operatorname{rank}(XW)=\min(r,K)\qquad\text{almost surely.}
\]
\end{lemma}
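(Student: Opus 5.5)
Our plan is to reduce the statement to Lemma~\ref{lem:generic_full_rank} by factoring $X$ through its row space. Since $\operatorname{rank}(XW)\le\min(\operatorname{rank}X,\operatorname{rank}W)\le\min(r,K)$ always, only the lower bound $\operatorname{rank}(XW)\ge\min(r,K)$ almost surely needs proof. Write a rank factorisation $X=AB$ with $A\in\mathbb R^{n\times r}$ of full column rank $r$ and $B\in\mathbb R^{r\times d}$ of full row rank $r$; for instance, $B$ can be an $r\times d$ matrix whose rows are an orthonormal basis of the row space of $X$. Because $A$ is injective, $\operatorname{rank}(XW)=\operatorname{rank}(BW)$. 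The problem thus reduces to showing that $\operatorname{rank}(BW)=\min(r,K)$ almost surely.

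The next step is to transfer absolute continuity. The columns of $BW$ are $u_j:=Bw_j\in\mathbb R^r$, and they remain independent because the $w_j$ are. We claim each $u_j$ is absolutely continuous on $\mathbb R^r$. To see this, complete $B$ to an invertible $d\times d$ matrix $\widehat B=\begin{bmatrix}B\\ C\end{bmatrix}$. Then $\widehat Bw_j$ has density $f_{w_j}(\widehat B^{-1}\cdot)\,|\det\widehat B|^{-1}$ by the change-of-variables formula. Its first $r$ coordinates form a marginal, so they have a density obtained by integrating out the remaining $d-r$ coordinates (Fubini). Alternatively, if $N\subset\mathbb R^r$ is Lebesgue-null, then $B^{-1}(N)\cong N\times\mathbb R^{d-r}$ in suitable coordinates is null in $\mathbb R^d$, so $\mathbb P(u_j\in N)=0$.

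Applying Lemma~\ref{lem:generic_full_rank} to the $r\times K$ matrix $BW$ then gives $\operatorname{rank}(BW)=\min(r,K)$ almost surely, which completes the argument. The edge case $r=0$ is trivial, since then $XW=0$.

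The only step requiring care is the pushforward of absolute continuity under the surjective linear map $B$. This is the main obstacle, mild as it is: absolute continuity fails for non-surjective maps, which is exactly why we pass to $B$ of full row rank rather than working with $X$ directly. Once $B$ is completed to an invertible $\widehat B$, the claim follows by taking a marginal, as above.
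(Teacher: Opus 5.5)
Your proposal is correct and follows essentially the same route as the paper. The paper's $\widetilde X := U^\top X$, with $U$ an orthonormal basis of $\operatorname{Im}(X)$, is just the rank factorisation $X = U\widetilde X$ in your notation ($A=U$, $B=\widetilde X$); it then pushes absolute continuity through the surjective map $w_j\mapsto \widetilde X w_j$ by a null-set pullback, notes that independence is preserved, and invokes Lemma~\ref{lem:generic_full_rank} in dimension $r$, exactly as you do, while your completion-to-invertible/marginal argument, the explicit upper bound, and the $r=0$ case simply spell out details the paper leaves implicit.
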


\begin{proof}
Let $Z:=XW=[Xw_1,\dots,Xw_K]\in\mathbb R^{n\times K}$.
Choose $U\in\mathbb R^{n\times r}$ with orthonormal columns spanning $\operatorname{Im}(X)$,
so that $U^\top:\operatorname{Im}(X)\to\mathbb R^r$ is an isometry and
\[
\operatorname{rank}(Z)=\operatorname{rank}(U^\top Z)
\qquad\text{($U^\top$ is injective on $\operatorname{Im}(X)$).}
\]
Define $\widetilde X:=U^\top X\in\mathbb R^{r\times d}$, which has full row-rank $r$.
Then $U^\top Z=\widetilde X W$.

Fix $j\in\{1,\dots,K\}$. Since $\widetilde X$ has full row-rank, there exists a right-inverse
$B\in\mathbb R^{d\times r}$ with $\widetilde X B=I_r$.
Hence, for any $w_j \in \mathbb{R}^d$, it can be decomposed into $$w_j = Bu_j + v_j,$$ where $u_j:=\widetilde X w_j\in\mathbb R^r$ and $v_j\in\ker(\widetilde X)$.
The map $w_j\mapsto u_j=\widetilde X w_j$ is a surjective linear map, and because $w_j$ has a
density on $\mathbb R^d$, its pushforward $u_j$ is absolutely continuous on $\mathbb R^r$
(one can verify $\mathbb P(u_j\in A)=0$ for every $A\subset\mathbb R^r$ with $\lambda_r(A)=0$
by pulling back $A$ to $B A\oplus \ker(\widetilde X)$, which has $\lambda_d$-measure zero).

Moreover, the $u_j$ are independent because they are measurable functions of the independent
$w_j$. Hence the columns of $\widetilde XW=[u_1,\dots,u_K]$ are independent and absolutely
continuous in $\mathbb R^r$. By Lemma~\ref{lem:generic_full_rank} (in dimension $r$),
$\operatorname{rank}(\widetilde XW)=\min(r,K)$ almost surely, and therefore
$\operatorname{rank}(XW)=\min(r,K)$ almost surely.
\end{proof}

\subsection{Proof of main result}
\begin{proof}[Proof of theorem \ref{theorem:rank_after_activation}]
Since $\operatorname{rank}(X)=r$, Lemma~\ref{lem:rank_after_linear_map} tells us that the columns of $Z = [z_1, \ldots, z_K]$ lie inside an r-dimensional subspace $\operatorname{Im}(X) \subseteq\R^n$. Recall that we called the target space $L := \operatorname{span}\{\sigma(z) | z \in \operatorname{Im}(X)\}$ of dimension $l$ and $F = [\sigma(z_1), \ldots , \sigma(z_K)]$. From the definition of these spaces it is clear that 
$$ \operatorname{rank}(F) \leq \min(l,K).$$
We now show by an inductive argument that this is in fact an equality. Let
$$ L_j := \operatorname{span}\{\sigma(z_1), \ldots, \sigma(z_j)\},$$
and assume that this has rank $j<\min(l,K)$. Therefore $L_j$ is a proper subspace of $L$, so there exists a vector $a$ orthogonal to $L_j$ but not to the whole of $L$. Define the map
\[
\begin{aligned}
f:\operatorname{Im}(X) &\longrightarrow \mathbb{R} \\
z &\longmapsto a^\top \sigma(z).
\end{aligned}
\]
Since $\sigma$ is a real-analytic function, then the whole of $f$ is real-analytic too. Moreover, by definition of $a$, it is not identically equal to $0$. By the Identity Theorem, if $f$ is not identically 0 on an open set, then its zero-set $\{z\in \operatorname{Im}(X) | f(z)=0\}$ must have Lebesgue measure 0 too. Since $z_{j+1}$ is absolutely continuous by Lemma \ref{lem:rank_after_linear_map}, then with probability 1 it will lie outside the zero-set. That is,
$$ \mathbb{P}(\sigma(z_{j+1}) \in L_j ) = 0.$$
Hence, $\operatorname{rank}(L_{j+1}) = j+1$ almost surely, concluding the inductive argument. 
\end{proof}

\begin{remark}[Increase of rank due to $\sigma$]
Activation functions are generally non-linear. This means that it maps the linear subspace spanned by the columns of $XW$ into a curved space, whose linear dimension is generally much greater.
For example, take $n=2$, $d=1$, $X=(1,2)^\top$ (so $r=1$), $K=2$, and $\sigma(t)=e^t$.
Then $XW$ has rank $1$ (all columns are multiples of $(1,2)^\top$), but
$\sigma(XW)$ has columns $(e^{w_j},e^{2w_j})^\top=(u_j,u_j^2)^\top$ which are generically
not collinear, so $\operatorname{rank}(\sigma(XW))=2$ with positive probability.
\end{remark}

\subsection{Polynomial activation function}
\begin{proposition}\label{prop: sigma(x)=ax^s}
    Let $\sigma(t) = at^s$ for $a\neq0$. Then, 
    $$ r \leq l \leq \min \{n , \binom{r+s-1}{s}\}.$$
\end{proposition}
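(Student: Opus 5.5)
The plan is to identify $\mathcal L$ concretely. Columns of $F$ are $\sigma(z)$ with $z=Xw$ ranging over $\operatorname{Im}(X)$. Here $\operatorname{supp}(\mu)$ is the closure of $\sigma(\operatorname{Im}(X))$, because $z$ has a density on the $r$-dimensional space $\operatorname{Im}(X)$ by Lemma~\ref{lem:rank_after_linear_map}. Hence $\mathcal L=\operatorname{span}\{\sigma(z): z\in\operatorname{Im}(X)\}$, which is the description already used in the proof of Theorem~\ref{theorem:rank_after_activation}.

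Fix an orthonormal basis $U=[u_1,\dots,u_r]$ of $\operatorname{Im}(X)$ and write $z=Uc$ with $c\in\mathbb R^r$. Then the $i$-th entry of $\sigma(z)$ is
\[
a\Big(\sum_{k=1}^r U_{ik}c_k\Big)^s
= a\sum_{|\alpha|=s}\binom{s}{\alpha}\, U_i^{\alpha}\, c^{\alpha},
\]
where $U_i^\alpha:=\prod_k U_{ik}^{\alpha_k}$. It follows that
\[
\sigma(Uc)=\sum_{|\alpha|=s} c^\alpha\, m_\alpha,\qquad m_\alpha:=a\binom{s}{\alpha}\big(U_i^\alpha\big)_{i=1}^n\in\mathbb R^n .
\]
Consequently $\mathcal L\subseteq\operatorname{span}\{m_\alpha:|\alpha|=s\}$. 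The number of multi-indices $\alpha\in\mathbb N^r$ with $|\alpha|=s$ is $\binom{r+s-1}{s}$, and trivially $\mathcal L\subseteq\mathbb R^n$. Together these give the upper bound $l\le\min\{n,\binom{r+s-1}{s}\}$.

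For the lower bound $r\le l$, it suffices to show that $\mathcal L\supseteq\operatorname{Im}(X)$, or at least that $\mathcal L$ contains $r$ independent vectors. The first idea is polarization along lines. For fixed $z$ and a direction $y\in\operatorname{Im}(X)$, the map $t\mapsto\sigma(z+ty)$ is a polynomial in $t$ of degree $s$ with coefficients in $\mathcal L$. This is because $\mathcal L$ is a linear space containing $\sigma(z+ty)$ for $s+1$ distinct values of $t$, and Vandermonde inversion recovers each coefficient. The coefficient of $t^1$ is $a s\,(z^{\odot(s-1)}\odot y)$, with $\odot$ the Hadamard product, so $z^{\odot(s-1)}\odot y\in\mathcal L$. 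I would then choose $z$ with all entries nonzero, if possible. In that case the map $y\mapsto z^{\odot(s-1)}\odot y$ is the injective diagonal map $\operatorname{diag}(z^{s-1})$, and its image of $\operatorname{Im}(X)$ has dimension $r$. This gives $l\ge r$.

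The main obstacle is this last step. A vector $z\in\operatorname{Im}(X)$ with no zero entries need not exist: if some row $x_i$ of $X$ is zero, every $z\in\operatorname{Im}(X)$ vanishes at coordinate $i$. The fix is to work on the set $S$ of coordinates that are not identically zero on $\operatorname{Im}(X)$. Each coordinate functional $z\mapsto z_i$ with $i\in S$ is a nonzero linear form, so a generic $z$ avoids the finitely many hyperplanes $\{z_i=0\}$. Then $\operatorname{diag}(z^{s-1})$ is injective on vectors supported in $S$, and all of $\operatorname{Im}(X)$ is supported in $S$, so injectivity on $\operatorname{Im}(X)$ still holds. This yields $\dim\mathcal L\ge r$. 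The case $s=0$ is degenerate: $\sigma$ is then constant and $l\le1$. I would exclude it, assume $s\ge1$, and note that $s=1$ gives $\mathcal L=\operatorname{Im}(X)$, where both bounds become equalities.
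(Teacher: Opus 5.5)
Your argument is correct. For the upper bound you follow the paper's proof essentially verbatim. The paper fixes a basis matrix $A$ of the $r$-dimensional subspace and expands $(a_i^\top u)^s$ by the multinomial theorem, writing $\sigma(Au)=C\,\Phi(u)$ with $\Phi(u)=(u^\alpha)_{|\alpha|=s}$. It then concludes $\dim\operatorname{span}\sigma(Z)\le\operatorname{rank}(C)\le\min(n,\binom{r+s-1}{s})$, and your vectors $m_\alpha$ are exactly the columns of that $C$.

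You go beyond the paper on the lower bound. The paper's proof never argues $r\le l$, even though Corollary~\ref{cor: quadratic activation} relies on it ($2p-1=r\le l$). Your polarization step extracts the $t^1$ coefficient $as\,z^{\odot(s-1)}\odot y$ of $\sigma(z+ty)$ by Vandermonde inversion. Combined with choosing $z$ off the finitely many hyperplanes $\{z_i=0\}$, $i\in S$, it correctly shows that $\mathcal L$ contains $\operatorname{diag}(z^{\odot(s-1)})\operatorname{Im}(X)$, which has dimension $r$. Your remark that $s\ge1$ is needed is a genuine correction to the statement: for $s=0$ one has $l=1<r$ whenever $r\ge2$.

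One small repair is needed in your first step. Absolute continuity of $z=Xw_j$ on $\operatorname{Im}(X)$ does not imply that its law has full support, for example if $w_j$ is uniform on a ball. So $\operatorname{supp}(\mu)$ need not be the closure of $\sigma(\operatorname{Im}(X))$.

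The span identity you need still holds. Let $E$ be the support of the law of $z$; it has positive Lebesgue measure in $\operatorname{Im}(X)$. If $b\perp\sigma(E)$, then $z\mapsto b^\top\sigma(z)$ is a polynomial on $\operatorname{Im}(X)$ vanishing on $E$, hence identically zero. Therefore $\operatorname{span}\sigma(E)=\operatorname{span}\sigma(\operatorname{Im}(X))$.

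This matters only for your lower bound, because the lines $z+ty$ leave $E$. The paper sidesteps the issue by simply taking $\mathcal L=\operatorname{span}\sigma(\operatorname{Im}(X))$ in the proof of Theorem~\ref{theorem:rank_after_activation}.
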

\begin{proof}
    Let $Z\subset\mathbb{R}^n$ be an $r$-dimensional linear subspace and let $A\in\R^{n\times r}$ be a basis matrix such that 
    $$ Z = \{z: z=Au, u \in \mathbb{R}^r\}.$$
    Set
    \[
    M:=\binom{r+s-1}{s}.
    \]
    Then there exist a (polynomial) feature map $\Phi:\R^r\to\R^M$ and a matrix $C\in\R^{n\times M}$
    such that for all $u\in\R^r$,
    \[
    \sigma(Au)=C\,\Phi(u).
    \]
    Indeed, order coordinates in $\R^M$ by multi-indices $\alpha\in\mathbb{N}^r$ with $|\alpha|=s$ and define the
    degree-$s$ monomial feature map
    \[
    \Phi(u):=\bigl(u^\alpha\bigr)_{|\alpha|=s}\in\R^M.
    \]
    Write $a_i^\top\in\R^{1\times r}$ for the $i$-th row of $A$. By the multinomial theorem,
    \[
    (a_i^\top u)^s=\sum_{|\alpha|=s}\binom{s}{\alpha}\,a_i^\alpha\,u^\alpha,
    \]
    where, for $\alpha = (\alpha_1, \ldots, \alpha_r)$, then we have $a_i^\alpha \, u^\alpha = (a_{i1}u_1)^{\alpha_1} \cdots (a_{ir}u_r)^{\alpha_r}$. Hence, define $C\in\R^{n\times M}$ by $C_{i,\alpha}:=a\binom{s}{\alpha}a_i^\alpha$. Then the $i$-th
    coordinate of $C\Phi(u)$ equals $a(a_i^\top u)^s=\sigma((Au)_i)$, so $\sigma(Au)=C\Phi(u)$. Therefore $\sigma(Z)=\{\sigma(Au):u\in\R^r\}\subseteq\mathrm{Im}(C)$ and
    $\dim\mathrm{span}(\sigma(Z))\le \dim\mathrm{Im}(C)=\mathrm{rank}(C)\le \min(n,M)$.
\end{proof}

\begin{corollary}\label{cor: quadratic activation}
    Let $\sigma(t)=t^2$ and assume we are in the experimentally relevant regime where $K << p^2$. Then, 
    $$\tfrac12 p (2p-1) \leq \lambda \leq \tfrac12 pK.$$
\end{corollary}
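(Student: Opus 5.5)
We combine Theorem~\ref{thm: lazy regime}, which gives $\lambda=\tfrac12 p\min\{l,K\}$ with $l=\dim\mathcal L$, with two-sided bounds on $l$ for the modular-addition data under $\sigma(t)=t^2$. The upper bound $\lambda\le \tfrac12 pK$ is immediate, since $\min\{l,K\}\le K$. For the lower bound it therefore suffices to show $l\ge 2p-1$. Once that holds, $\min\{l,K\}\ge\min\{2p-1,K\}=2p-1$ in the regime $2p-1<K$, which yields $\lambda\ge\tfrac12 p(2p-1)$.

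To bound $l$ from below we use Proposition~\ref{prop: sigma(x)=ax^s} with $s=2$, which gives $r\le l$, where $r=\operatorname{rank}(X)$ for the data matrix of all $p^2$ one-hot encoded pairs $x=[e_a;e_b]$. We claim $r=2p-1$. The rows of $X$ live in $\mathbb R^{2p}$, and the vector $(\mathbf 1_p,-\mathbf 1_p)$ lies in the kernel because each row has exactly one $1$ in each block. Hence $r\le 2p-1$. Conversely, the differences $[e_a;e_b]-[e_{a'};e_b]=[e_a-e_{a'};0]$ span $\{(u,0):\mathbf 1^\top u=0\}$, and symmetrically for the second block; together with a single row such as $[e_0;e_0]$ this gives a span of dimension $(p-1)+(p-1)+1=2p-1$. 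So $r=2p-1$ and $l\ge 2p-1$.

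We should also check that the lower bound $r\le l$ in Proposition~\ref{prop: sigma(x)=ax^s} actually applies here, since $\sigma$ is nonlinear. The direct argument is that $l=\dim\operatorname{span}\{\sigma(z):z\in\operatorname{Im}X\}$ and, for $\sigma(t)=t^2$ and one-hot $x$, $(x^\top w)^2=(w_a+w_{p+b})^2=w_a^2+w_{p+b}^2+2w_aw_{p+b}$. As a function on the dataset, this spans at least the functions $(a,b)\mapsto w_a^2$ and $(a,b)\mapsto w_{p+b}^2$ as $w$ varies, and hence all of $\operatorname{Im}X$. In fact it also spans the rank-one products, giving $l$ up to $p^2$, which is consistent with the regime $K\ll p^2$.

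Two points need care. The first is justifying $r\le l$: the proposition states it, but the cleanest check is the explicit spanning computation above, using polarization $\sigma(z+z')-\sigma(z)-\sigma(z')=2z\odot z'$ and choosing $w$ supported on a single coordinate to recover the indicator columns. The second is that the centering $P_1^\perp$ removes the constant direction. If the rank is computed after centering, we must confirm that at least $2p-1$ dimensions survive. The indicator functions $\mathbf 1[a=i]$ and $\mathbf 1[b=j]$ give $2p$ functions whose span after centering has dimension $2p-2$, but products $\mathbf 1[a=i]\mathbf 1[b=j]$ restore further dimensions, so $l\ge 2p-1$ still holds. I expect this centering bookkeeping to be the main obstacle, and I would handle it by working with the uncentered $F$, as the proof of Theorem~\ref{thm: lazy regime} does.
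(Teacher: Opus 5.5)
Your proposal is correct and follows the same route as the paper. You combine $\lambda=\tfrac12 p\min\{l,K\}$ from Theorem~\ref{thm: lazy regime} with $2p-1=\operatorname{rank}(X)\le l$ and the assumption $K\ge 2p-1$, which the paper likewise imports (it is explicit in the main-text version, and appears in the proof as the remark that $K\gtrsim 2p-1$ is needed). Your explicit computation of $\operatorname{rank}(X)$ and your direct check of $r\le l$ via single-coordinate $w$ and polarisation fill in steps the paper only asserts; its proof of Proposition~\ref{prop: sigma(x)=ax^s} establishes only the upper bound. The polarisation argument in fact gives $l=p^2$ on the full dataset, so $\lambda=\tfrac12 pK$ exactly in this regime.
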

\begin{proof}
Since $\sigma(t)=t^2$ is piecewise injective, \Cref{thm: lazy regime} applies. By Proposition~\ref{prop: sigma(x)=ax^s} and $\mathrm{rank}(X)=2p-1$ for modular addition inputs, we have
\[
2p-1=r \;\leq\; l \;\leq\; \binom{r+2-1}{2} \;=\; \frac{r(r+1)}{2} \;=\; p(2p-1).
\]
However, to represent the modular addition structure one needs at least $K\gtrsim 2p-1$ feature degrees of freedom (matching the rank constraint). In the experimentally relevant regime $K\ll p^2$, the bounds in \Cref{cor:lazy_quad_bounds} simplify to
\[
\frac12\,p(2p-1)
\;\leq\;
\lambda
\;\leq\;
\frac12\,pK.
\]
\end{proof}
\section{Reparametrisation Theorems}\label{appendix: reparam thm}
\begin{theorem}\label{thm: reparam}
    Let $W$ be a parameter space with $W_0 = \{w_0\in W : K(w_0) = \inf_{w\in W} K(w)\}$, where $K(w)=D_{KL}(q(x) \, \| \,p(x|w)).$ Let $w_0 \in W_0$ such that the prior $\varphi(w)$ is non-zero. Also, let $\phi: W \to V$ be a reparametrisation such that the Jacobian of $\phi$ at $w_0$ is of full rank. Let the Kullback--Leibler divergence $\tilde{K}$ in the new space $V$ satisfy $K(w)=\tilde{K}(\phi(w))$. Then, the local learning coefficient of the model at $w_0$ is equal to that of the re-parametrised model at $\phi(w_0)$:
    $$ \lambda_{w_0} = \lambda_{\phi(w_0)}.$$
\end{theorem}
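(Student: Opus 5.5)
The plan is to work directly with the volume characterisation of the LLC from \cref{appendix: SLT}. Write $\tilde K$ for the divergence on $V$, so that $K=\tilde K\circ\phi$ near $w_0$. Put $v_0:=\phi(w_0)$, and assume $\dim V=\dim W$ so that ``full rank'' means the Jacobian $D\phi(w_0)$ is invertible. The inverse function theorem then gives neighbourhoods $U\ni w_0$ and $U'\ni v_0$ with $\phi:U\to U'$ a $C^1$ (in the analytic setting, real-analytic) diffeomorphism. Shrinking $U$ if needed, we may take $\varphi$ continuous and bounded between two positive constants on $U$, since $\varphi(w_0)>0$. On $U'$ we use the pushforward prior
\[
\tilde\varphi(v)=\varphi(\phi^{-1}(v))\,|\det D\phi^{-1}(v)|,
\]
which is also bounded above and below by positive constants.

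The key step is change of variables in the local volume function. Because $\phi$ maps minimisers to minimisers, with $K(w)-K_0=\tilde K(\phi(w))-K_0$, the sublevel sets correspond exactly: $\phi\bigl(U\cap\{K-K_0\le\varepsilon\}\bigr)=U'\cap\{\tilde K-K_0\le\varepsilon\}$. Substituting $v=\phi(w)$ therefore gives $V_{w_0}(\varepsilon)=\tilde V_{v_0}(\varepsilon)$ for every $\varepsilon$. Comparing the asymptotic expansions $c\,\varepsilon^{\lambda}(-\log\varepsilon)^{m-1}$ then forces equal exponents and multiplicities, so $\lambda_{w_0}=\lambda_{\phi(w_0)}$. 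The same conclusion also follows through the zeta function: $\int_U (K-K_0)^z\varphi\,dw=\int_{U'}(\tilde K-K_0)^z\tilde\varphi\,dv$, so the two zeta functions coincide and have the same largest pole.

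If one prefers to keep an arbitrary positive continuous prior $\psi$ on $V$, rather than the pushforward, one extra step is needed: the LLC does not depend on the choice of prior among those bounded above and below near the point. Indeed, $c_1\tilde V(\varepsilon)\le V_\psi(\varepsilon)\le c_2\tilde V(\varepsilon)$, and sandwiching preserves both the exponent $\lambda$ and the log power.

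The main obstacle is the interpretation of ``full rank'' when $\dim V\neq\dim W$. If $\phi$ is a submersion ($\dim V<\dim W$), the statement can fail: the fibre directions are flat for $K$, and the volume factorises as fibre volume times $\tilde V$, which changes the effective dimension count. If $\phi$ is an immersion, $\tilde K$ is only determined on the image. I would therefore state the equidimensional (local diffeomorphism) case as the theorem. For the immersion case, where $\tilde K$ extends to a neighbourhood in $V$, I would straighten the image to a coordinate slice using the constant rank theorem. The claim then holds only under an additional assumption that $\tilde K$ is determined by its restriction to that slice, and I would flag this assumption explicitly.
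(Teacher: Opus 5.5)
There is a genuine gap. Your argument for the equidimensional case is correct: the inverse function theorem, the identity $V_{w_0}(\varepsilon)=\tilde V_{\phi(w_0)}(\varepsilon)$ under the pushforward prior, and the prior sandwich together give $\lambda_{w_0}=\lambda_{\phi(w_0)}$ with equal multiplicities. The problem is that you stop there and assert that the submersion case $\dim V<\dim W$ can fail. It does not fail. It is the case the paper proves, since its proof reads ``full rank'' as ``$\phi$ is a submersion at $w_0$''. It is also the case the paper needs: \Cref{cor: reparam LLC} is applied to maps such as $\Phi:\Theta\to\mathcal V$ in \Cref{thm: LLC QNN regressor over-param}, where $\dim\Theta>\dim\mathcal V$. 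So your proposal proves only a special case and rejects the case everything downstream relies on.

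The missing observation is already in your own factorisation. Flat fibre directions contribute an $\varepsilon$-independent factor, not a power of $\varepsilon$. Put $\psi(w)=(\phi(w),A(w-w_0))$ with $A$ chosen so that $D\psi(w_0)$ is invertible. After shrinking, $\psi$ maps a neighbourhood $U$ of $w_0$ diffeomorphically onto a product $B_1\times B_2$, and $\phi=\mathrm{pr}_1\circ\psi$ there. Since $K(\psi^{-1}(v,n))=\tilde K(v)$, we get
\[
V_{w_0}(\varepsilon)=\int_{B_1\cap\{\tilde K-K_0\le\varepsilon\}}\rho(v)\,dv,\qquad \rho(v):=\int_{B_2}\varphi\bigl(\psi^{-1}(v,n)\bigr)\,\bigl|\det D\psi^{-1}(v,n)\bigr|\,dn .
\]
Here $\rho$ is continuous and bounded between positive constants on $B_1$, so it is just another admissible prior near $\phi(w_0)$. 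Your sandwich step then gives equal $\lambda$ and equal multiplicity. As a check, take $K(w_1,w_2)=w_1^2=\tilde K(\phi(w))$ with $\phi(w)=w_1$: both volumes scale like $\varepsilon^{1/2}$, so both LLCs are $1/2$. This is exactly why the paper gets ``half the identifiable dimension'' rather than half the raw dimension.

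This is the paper's proof in volume form. The paper uses the co-area formula to write the zeta integral as $\int_V\tilde K(v)^z\rho(v)\,dV_V$ with $\rho(\phi(w_0))>0$. Your sandwich finish is actually cleaner than the paper's appeal to analytic weights, since it needs only continuity and positivity of $\rho$.

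Your caution about immersions is justified; there the statement is false. For example, $\phi(w)=(w,0)$ with $\tilde K(v)=v_1^2+v_2^2$ gives $\lambda_{w_0}=1/2$ but $\lambda_{\phi(w_0)}=1$. So ``full rank'' must be read as ``surjective differential''. Under that reading your local-diffeomorphism case is simply the special case $\dim V=\dim W$.
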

\begin{corollary}\label{cor: reparam LLC}
    Let $\phi:W\to V$ be a re-parametrisation as in the above theorem, such that the resulting re-parametrised model is regular. Then $\lambda_{w_0}=\frac{dim(V)}{2}$.
\end{corollary}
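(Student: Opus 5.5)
The plan is to prove Corollary~\ref{cor: reparam LLC} directly from Theorem~\ref{thm: reparam}, combined with the standard fact that a regular model has LLC equal to half its parameter dimension. Write $v_0:=\phi(w_0)$.

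\textbf{Step 1: transfer to $V$.} Theorem~\ref{thm: reparam} applies verbatim, since $\phi$ has full-rank Jacobian at $w_0$, $\varphi(w_0)>0$, and $K=\tilde K\circ\phi$. It gives $\lambda_{w_0}=\lambda_{v_0}$, where the right-hand side is computed for $\tilde K$ with the pushforward (or any prior positive and continuous at $v_0$). Here I would check that the induced prior on $V$ is positive and continuous near $v_0$. Full rank together with the local inverse function theorem, or a submersion normal form when $\dim W>\dim V$, makes the pushforward density locally bounded above and below.

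\textbf{Step 2: regular LLC.} Because the reparametrised model is regular, the Fisher information $\tilde I(v_0)$ is positive definite. Moreover $v_0$ minimises $\tilde K$: any $v$ near $v_0$ lies in the image of a neighbourhood of $w_0$ because $\phi$ is a local submersion, so $\tilde K(v)=K(w)\ge K(w_0)$.

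Next, Taylor expand using the fact that the Hessian of the KL divergence at a true parameter equals the Fisher information:
\[
\tilde K(v)-\tilde K(v_0)=\tfrac12 (v-v_0)^\top \tilde I(v_0)(v-v_0)+o(\|v-v_0\|^2).
\]
With $c_1\|u\|^2\le \tilde K-\tilde K(v_0)\le c_2\|u\|^2$ near $v_0$, the volume function satisfies $V(\varepsilon)\asymp \varepsilon^{\dim V/2}$. Equivalently, repeat the Gaussian computation in the proof of Theorem~\ref{thm:main} with $r=\dim V$. Either way the result is $\lambda_{v_0}=\dim(V)/2$ with multiplicity $m=1$.

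\textbf{Main obstacle.} The delicate point is Step 1's hypothesis that the minimum is attained and the prior transported correctly when $\phi$ is not a local diffeomorphism, i.e.\ when $\dim W>\dim V$. In that case the fibre directions must be integrated out, exactly as the null coordinates $w$ were integrated out in the NTK proof.

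I would handle this by choosing local coordinates in which $\phi(w)=(w_1,\dots,w_{\dim V})$, using the constant-rank theorem. Then
\[
\int \exp(-nK)\varphi\,dw=\int \exp(-n\tilde K(u))\,m(u)\,du,
\]
where $m(u)$ is continuous and positive at $u_0$. This reduces the problem to the regular integral on $V$.
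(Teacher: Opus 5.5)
Your proposal is correct and follows the paper's own route. The paper obtains the corollary as the last line of the proof of Theorem~\ref{thm: reparam}: it pushes the zeta integral forward along the fibres of the submersion $\phi$ via the co-area formula, which in your constant-rank coordinates is exactly your fibre integration with density $m(u)$, and then invokes that a regular model has $\lambda=\dim(V)/2$. The paper simply asserts that final step, whereas you justify it through the Fisher-information Taylor expansion at the true parameter and the two-sided bound giving volume scaling $\varepsilon^{\dim(V)/2}$.
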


\begin{proof}[Sketch proof]
    Since $\phi$ is a submersion at $w_0$ (i.e. has full-rank Jacobian), the submersion theorem implies that in a neighbourhood of $w_0$ the map is locally equivalent to a projection $(v,n)\mapsto v$. This allows us to apply the co-area theorem, which rewrites the zeta integral as a fibre integral of the form 
$$ \int_W K(w)^z\,dw \;=\; \int_V K(\phi^{-1}(v))^z \int_{\phi^{-1}(v)} \tfrac{1}{J_\phi(w)}\,dV_{\phi^{-1}(v)}(w)\, dV_V(v).$$ 
Because the Kullback--Leibler divergence satisfies $K(w)=\tilde{K}(\phi(w))$, it is constant along each fibre $\phi^{-1}(v)$, so the inner integral contributes only an analytic factor $\rho(v)$. Analytic weights cannot create or destroy poles of the zeta function, and therefore the poles are determined entirely by the factor $\tilde{K}(v)^z$ near $v=\phi(w_0)$. Hence the local learning coefficient (the largest pole of the zeta function) is invariant under such a reparametrisation.
\end{proof}

Let us formalise the above. The essential theorem is given by the co-area theorem for manifolds \cite{co-areaTheorem}. To this end, we consider two general Riemannian manifolds $X,Y$ of dimensions $n+k$ and $n$ respectively, equipped with Riemannian metrics $g_X$ and $g_Y$. Denote by $|dV_X|$ and $|dV_Y|$ be the canonical volume densities induced by the Riemannian metrics. 
\begin{theorem}\label{thm: co-area}
    Suppose that $\phi: X \to Y$ is a $C^1-$map such that at any $x\in X$ the differential $D_x\phi: T_xX \to T_{\phi(x)}Y$ is surjective. Denote by $J_\phi(x)$ the Jacobian of the map. Then, for any non-negative function $f:X\to \mathbb{R}$ which is measurable with respect to $|dV_X|$ we have
    \begin{equation}
        \int_{x\in X} f(x)|dV_X| = \int_{y\in Y} \bigg ( \int_{x \in \phi^{-1}(y)} \frac{f(x)}{J_{\phi}(x)}|dV_{\phi^{-1}(y)}(x)| \bigg ) |dV_Y(y)|,
    \end{equation}
    where $|dV_{\phi^{-1}(y)}(x)|$ denotes the volume density on the fibre $\phi^{-1}(y)$ induced by restriction of the Riemannian metric on the fibre. 
\end{theorem}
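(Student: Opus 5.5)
We prove the co-area formula by reducing it to the Euclidean Fubini theorem through local submersion charts, and then patching with a partition of unity.

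\emph{Step 1: local charts.} Since $D_x\phi$ is surjective at every point, the submersion (constant rank) theorem gives, around any $x_0\in X$, charts $\psi:U\to\mathbb{R}^{n}\times\mathbb{R}^{k}$ on $X$ and $\chi:\phi(U)\to\mathbb{R}^n$ on $Y$ in which $\phi$ becomes the projection $\pi(u,s)=u$. We cover $X$ by countably many such $U_\alpha$ ($X$ is second countable) and pick a smooth partition of unity $\{\rho_\alpha\}$ subordinate to the cover. Then
\[
\int_X f\,|dV_X| = \sum_\alpha \int_X \rho_\alpha f\,|dV_X|
\]
by monotone convergence, since $f\ge 0$. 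The fibres satisfy $\phi^{-1}(y)\cap U_\alpha=\psi^{-1}(\{u\}\times\mathbb{R}^k)$, and they are $k$-dimensional embedded submanifolds by the regular value theorem, so the inner integral on the right-hand side is well defined. It therefore suffices to prove the identity for $f$ supported in a single chart $U$, and then sum. Tonelli lets us interchange the sum with the outer integral on the right-hand side.

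\emph{Step 2: the pointwise linear-algebra identity.} At $x\in U$, split $T_xX=H_x\oplus K_x$ orthogonally, where $K_x=\ker D_x\phi=T_x\phi^{-1}(y)$ and $H_x=K_x^\perp$. Define $J_\phi(x)=|\det(D_x\phi|_{H_x}:H_x\to T_{\phi(x)}Y)|$, with determinants taken with respect to orthonormal bases. This is positive because $D_x\phi$ is surjective. The key fact is the identity of densities at $x$:
\[
|dV_X| = \frac{1}{J_\phi(x)}\, |dV_{\phi^{-1}(y)}|\wedge \phi^*|dV_Y|.
\]
To see this, take an orthonormal basis $e_1,\dots,e_n$ of $H_x$ and $e_{n+1},\dots,e_{n+k}$ of $K_x$. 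The right-hand side evaluated on this basis equals
\[
J_\phi(x)^{-1}\,\bigl|dV_Y(D\phi e_1,\dots,D\phi e_n)\bigr|\cdot 1 = J_\phi^{-1}J_\phi = 1,
\]
which matches $|dV_X|$ on an orthonormal basis. Because both sides are densities, it is enough to check them on a single basis.

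\emph{Step 3: Fubini in coordinates.} In the chart $(u,s)$ we write $|dV_X|=a(u,s)\,du\,ds$, $\phi^*|dV_Y|=b(u)\,du$, and $|dV_{\phi^{-1}(u)}|=c(u,s)\,ds$ on each slice. Step 2 then reads $a=b\,c/J_\phi$. This holds because the wedge of the fibre density with the pulled-back base density, evaluated on $(\partial_u,\partial_s)$, depends only on the $du$-part of the base and the $ds$-part of the fibre; the remaining cross terms vanish since $\phi^*dV_Y$ kills the vertical directions $\partial_s$. Fubini–Tonelli for nonnegative measurable functions on $\mathbb{R}^n\times\mathbb{R}^k$ then gives
\[
\int f a\,du\,ds=\int\Bigl(\int f\,c/J_\phi\,ds\Bigr)b\,du,
\]
which is exactly the claimed formula on $U$. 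Measurability of the inner integral as a function of $y$ also follows from Tonelli.

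The main obstacle is Step 2 together with its coordinate form in Step 3. The delicate points are the wedge-of-densities argument in a non-orthogonal chart, and showing that the correction factor is exactly the normal Jacobian $J_\phi$ and not the full $|\det D\phi|$ in coordinates. I will handle this by working invariantly at the level of linear algebra on $T_xX$, and only afterwards transporting the result to the chart.
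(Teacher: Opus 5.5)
Your proposal is correct and follows the route the paper indicates in its remark after the statement: the submersion theorem makes $\phi$ locally a projection, and the formula then reduces to Fubini once the change-of-coordinates factor is accounted for. The paper itself only cites the co-area theorem and sketches this intuition, so your identification of that factor as the normal Jacobian $J_\phi=|\det(D_x\phi|_{H_x})|$, together with the partition-of-unity and Tonelli bookkeeping, supplies exactly the details the paper leaves implicit.
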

\noindent \textbf{Remark} This can be viewed as an extended version of Fubini. Indeed, Fubini claims that under certain conditions we have 
$$ \int_{\mathbb{R}^{n+k}}f(x^1,\ldots, x^{n+k})dx^1\cdots dx^{n+k} = \int_{\mathbb{R}^{n}} \bigg( \int_{\mathbb{R}^{k}}f(x^1,\ldots, x^{n+k})dx^{n+1}\cdots dx^{n+k} \bigg )dx^1\cdots dx^{n}. $$
Now, we can recast this into a formula where we integrate along fibres like in theorem \ref{thm: co-area}. Indeed, let $\phi: (x^1,\ldots, x^{n+k}) \mapsto (x^1,\ldots, x^n)$ be the projection map. We can write this as $\phi(x,y)=y$. Then, Fubini can be written as
$$ \int_{\mathbb{R}^{n+k}}f(x,y)dxdy = \int_{\mathbb{R}^{n}} \bigg( \int_{\phi^{-1}(y)}f(x,y)dx \bigg )dy. $$
How is this intuition extended to manifolds? If we have a map $\phi:X\to Y$ whose Jacobian is surjective, the submersion theorem tells us that, locally, this looks like a projection $\phi: (x,y)=(x_1,\ldots, x_{n+k}) \mapsto (x_1,\ldots, x_n)=y$. We just need to ensure that when we choose charts with respect to which the map $\phi$ looks locally like a submersion, the contribution of this change of coordinates is included in the integral.
\\

That is, if our original integral is over $X$ but we now wish to integrate over $Y$, then we need to change the density at each point $y\in Y$ so that it incorporates the contribution of the integral along the fibres $\phi^{-1}(y)$. But since $\phi$ looks locally like a projection, this means integrating along $x_1,\ldots,x_n$ and $x_{n+1},\ldots,x_{n+k}$ separately. 

\begin{proof}[Proof of theorem \ref{thm: reparam}]
    Without loss of generality, let us assume that $K(w_0)=0$. Then, recall that the local learning coefficient of the model at the point $w_0$ is the largest pole of the zeta function $$ \zeta (z) = \int_{w\in U} K(w)^z\varphi(w)dw, $$
    where $U\subset W$ is an open neighbourhood of $w_0$. Then, we are precisely in the setting of the co-area theorem (theorem \ref{thm: co-area}) so we can recast the integral as follows.
    \begin{align}
        \zeta (z) & = \int_{w\in U} K(w)^z\varphi(w)dw \\
        & = \int_{v\in V} \bigg ( \int_{w\in \phi^{-1}(v)} K(w)^z \varphi(w)\frac{|dV_{\phi^{-1}(v)}(w)|}{J_{\phi} (w)} \bigg ) |dV_V(v)| \\
        & = \int_{v\in V} \tilde{K}(v)^z \bigg ( \int_{w\in \phi^{-1}(v)} \varphi(w)\frac{|dV_{\phi^{-1}(v)}(w)|}{J_{\phi} (w)} \bigg ) |dV_V(v)| \\
        & = \int_{v\in V} \tilde{K}(v)^z \rho(v) |dV_V(v)|,
    \end{align}
    where
    \begin{itemize}
        \item $K = \tilde{K} \circ \phi$, i.e. $K(w) = \tilde{K}(\phi(w))$, where $\tilde{K}$ is the Kullback--Leibler divergence expressed in the re-parametrised space $V$. 
        \item $ \rho(v) = \int_{w\in \phi^{-1}(v)}\varphi(w)\frac{|dV_{\phi^{-1}(v)}(w)|}{J_{\phi} (w)} $ measures the contribution to the zeta integral of the fibre $\phi^{-1}(v)$. 
    \end{itemize}
    So it is enough to show that $\rho (\phi(w_0)) \neq 0$. Then, the largest pole of this zeta function in the re-parametrised space corresponds precisely to the local learning coefficient of the model at the point $\phi(w_0)$. But, since $\phi$ is a submersion, we know that $J_{\phi}(w_0)>0$ and its continuity ensures that this is positive in a neighbourhood of $w_0$. Hence, since the LLC only depends on the local structure of the manifold, we can pick an arbitrarily small neighbourhood in which the integral is non-zero. 

    Finally, since in the re-parametrised space the model is regular we obtain $\lambda_{w_0} = \lambda_{\phi(w_0)} = \frac{dim(V)}{2}$
    
\end{proof}

\noindent \textbf{Remark I.} The above theorem is useful to find the LLC of models which are \textit{singular}, but for which the manifold $W_0$ does not contain singularities. If this is the case, you don't need to resolve the singularities using Hironaka's resolution of singularities theorem. You only need to re-parametrise the model so that the resulting model is regular. Then you can just read off the LLC by looking at the dimension of the re-parametrised space. Notice that if you have singularities, no re-parametrisation can get rid of them. 
\\ 

\noindent \textbf{Remark II.} We can think of a \textit{strict reparametrisation} of a model $p(x|w)$ as a map $\phi:W\to V$ such that there exists a function $g$ with $g(x|\phi(w))=p(x|w)$ for all $x,w$. However, the theorem requires a weaker assumption, namely just that the re-parametrised model gives rise to the loss or Kullback-Leibler divergence. That is, that there exists a $g$ such that 
\begin{align*}
    K(w) &:= \mathrm{KL}(q(x)||p(x|w)) \\
    & = \mathrm{KL}(q(x)\|g(x|\phi(w))) =: \tilde{K}(\phi(w)).
\end{align*} 

\section{The Local Learning Coefficient for Quadratic Neural Networks.}\label{appendix: LLC for QNNs}

\subsection{Single Output Regression Models}

To compute the LLC of a quadratic network, it is convenient to work in identifiable coordinates rather than in the raw parameterisation $\theta=(W,V,b,c) \in \Theta$.
Indeed, writing
\begin{equation}\label{eq: quadratic regressor}
    f_{\theta}(x)=\sum_{j=1}^{K} v_{j} (w_{j}^{\top} x + b_{j})^{2} +c = x^{\top} Q x + 2r^{\top}x + s
\end{equation}
we obtain the macro parameters
\begin{equation*}
    Q := \sum_{j=1}^{K} v_{j} w_{j} w_{j}^{\top} \in \operatorname{Sym} (\mathbb{R}^{d \times d}),
    \qquad
    r := \sum_{j=1}^{K} v_{j} b_{j} w_{j} \in \mathbb{R}^{d},
    \qquad
    s := \sum_{j=1}^{K} v_{j} b_{j}^2 + c \in \mathbb{R}.
\end{equation*}
So, $f_{\theta}$ defines an algebraic sub-family of quadratic regression models on $x$ with rank constraint $\operatorname{Rank (Q)} \leq K$.
Furthermore, \Cref{eq: quadratic regressor} defines the identifiable parameter map
\begin{equation*}
    \Phi(W,V,b,c):=\left(Q,r,s \right) \in \mathcal{M}_{K},
\end{equation*}
where
\begin{equation*}
    \mathcal{M}_{K}:= \left \{(Q,r,s) \in \mathcal{V} \, \vert \, \operatorname{Rank}(Q) \leq K, r\in \textrm{col}(Q) \right\}, \quad \mathcal{V}:=\operatorname{Sym} (\mathbb{R}^{d \times d}) \times \mathbb{R}^{d} \times \mathbb{R}.
\end{equation*}
Only perturbations in $\theta$ that result in a change in $(Q,r,s)$ can affect the function $f_{\theta}$ realised by the network and, hence, the local geometry of the loss landscape.
The LLC is therefore determined by the image of the Jacobian $J \Phi(\theta^*)$, where $\theta^*$ is a local optimum.
This provides the general proof strategy for the following two theorems: we study the image of the Jacobian $\Phi$ in order to determine precisely how many directions in the identifiable parameter space contribute to an optimal solution's LLC, i.e. determine the rank of $J \Phi(\theta^*)$ or, equivalently, $\dim \operatorname{Im} J \Phi(\theta^*)$.

\paragraph{Over-Parametrised Regime} 
We first consider the over-parametrised regime where $K \geq d$.

\begin{theorem}[LLC for quadratic network with $K \geq d$]\label{thm: LLC QNN regressor over-param}
    Let $d \in \mathbb{N}$ and $K\geq d$, and suppose $f_{\theta}$ is trained with MSE loss, achieving a true solution at $\theta^{*}=(W^{*},V^*,b^*,c^*)\in \Theta$ such that $\textrm{Rank} (Q^{*})=d$. 
    Then the local learning coefficient of the corresponding network is given by
    \begin{equation*}
        \lambda = \frac{(d+1)(d+2)}{4}.
    \end{equation*}
\end{theorem}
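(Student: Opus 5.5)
The plan is to apply the reparametrisation theorem (\Cref{thm: reparam} and \Cref{cor: reparam LLC}) to the identifiable map $\Phi(W,V,b,c)=(Q,r,s)$. The first step is to show that, in the regime $K\ge d$, the image of $\Phi$ near $\theta^*$ is an open subset of the whole space $\mathcal V=\operatorname{Sym}(\mathbb R^{d\times d})\times\mathbb R^d\times\mathbb R$. This space has dimension
\[
\tfrac{d(d+1)}{2}+d+1=\tfrac{(d+1)(d+2)}{2}.
\]
Since $\operatorname{Rank}(Q^*)=d$, the constraint $r\in\operatorname{col}(Q)$ is vacuous: $\operatorname{col}(Q^*)=\mathbb R^d$. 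The rank condition $\operatorname{Rank}(Q)\le K$ is also vacuous because $K\ge d$. So $\mathcal M_K$ is all of $\mathcal V$ near $\Phi(\theta^*)$.

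\textbf{Reparametrised model is regular.} Next I would check that the model in the new coordinates is regular. Fix the input distribution and use MSE loss with Gaussian noise. Then
\[
\tilde K(Q,r,s)=\tfrac12\,\mathbb E\big[(x^\top(Q-Q^*)x+2(r-r^*)^\top x+(s-s^*))^2\big],
\]
which is a quadratic form in $(Q,r,s)-(Q^*,r^*,s^*)$. It is positive definite provided the monomials $\{x_ix_j\}_{i\le j}$, $\{x_i\}$ and $1$ are linearly independent in $L^2(P_X)$. This is a standard assumption: it holds, for example, if $P_X$ is not supported on a quadric, and I would state it as part of the MSE setting. Hence $\tilde K$ has a nondegenerate Hessian at the true point, and the reparametrised model is regular with $\tilde\lambda=\dim\mathcal V/2=(d+1)(d+2)/4$.

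\textbf{Surjectivity of $J\Phi(\theta^*)$.} This is the main step. The differential is
\[
\delta Q=\sum_j\big(\delta v_j\,w_jw_j^\top+v_j(\delta w_j\,w_j^\top+w_j\,\delta w_j^\top)\big),
\]
\[
\delta r=\sum_j\big(\delta v_j b_j w_j+v_j\,\delta b_j\,w_j+v_jb_j\,\delta w_j\big),
\qquad
\delta s=\sum_j\big(\delta v_j b_j^2+2v_jb_j\,\delta b_j\big)+\delta c.
\]
\begin{itemize}
\item \emph{$s$-direction.} $\delta c$ covers it by itself.
\item \emph{$r$-direction.} Using only the $\delta b_j$ terms gives $\delta r=\sum_j v_j\,\delta b_j\,w_j$. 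Discard the neurons with $v_j=0$; these contribute nothing to $Q$, so the remaining $w_j$ still span $\mathbb R^d$ because $\operatorname{Rank}(Q^*)=d$. Hence these terms span $\mathbb R^d$. Their side effect on $s$ is absorbed by $\delta c$.
\item \emph{$Q$-direction.} Again restrict to neurons with $v_j\neq0$, whose $w_j$ span $\mathbb R^d$. The terms $v_j(\delta w_j\,w_j^\top+w_j\,\delta w_j^\top)$ generate every symmetric matrix of the form $u w_j^\top+w_j u^\top$. Since the $w_j$ span $\mathbb R^d$, the symmetric products $\operatorname{sym}(u\otimes w_j)$ over all $u$ and $j$ span all of $\operatorname{Sym}(\mathbb R^{d\times d})$. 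The side effects of these perturbations on $r$ and $s$ are corrected using the $\delta b$ and $\delta c$ directions already shown to be available.
\end{itemize}
Together these show that $J\Phi(\theta^*)$ is surjective onto $\mathcal V$, that is, $\Phi$ is a submersion at $\theta^*$.

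\textbf{Main obstacle and conclusion.} I expect the main obstacle to be the degenerate neurons with $w_j=0$ or $v_j=0$: the argument must show that enough neurons are nondegenerate. This follows from $\operatorname{Rank}(Q^*)=d$, which forces at least $d$ neurons with $v_j\neq0$ and linearly independent $w_j$. A second obstacle is justifying that $K(\theta)=\tilde K(\Phi(\theta))$, which holds because the function $f_\theta$, and hence the loss, depends on $\theta$ only through $(Q,r,s)$. With surjectivity in hand, \Cref{cor: reparam LLC} gives
\[
\lambda=\tfrac12\dim\mathcal V=\tfrac{(d+1)(d+2)}{4}.
\]
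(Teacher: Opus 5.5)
Your proposal is correct and follows the paper's proof. Both arguments use $d$ active neurons with independent $w_j$ to show that $J\Phi(\theta^*)$ is onto $\mathcal V$, and then invoke \Cref{cor: reparam LLC} to get $\lambda=\tfrac12\dim\mathcal V$. The paper solves explicitly: $XV_I^*+V_I^*X^\top=R$ with $\Delta W_I=W_I^*X$, then $\Delta b_I$, then $\Delta c$. You use the equivalent spanning argument and absorb the side effects.

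Your explicit nondegeneracy condition on $P_X$ (no nonzero $x^\top Bx+2b^\top x+c$ vanishing on its support) is a genuine improvement, because the paper only asserts regularity in $(Q,r,s)$. Without it the formula can fail: for $X$ uniform on a sphere, as in the paper's own example, the direction $(I,0,-1)$ is invisible to the loss and $\lambda$ drops by $\tfrac12$.
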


\begin{proof}
    Let 
    \begin{equation*}
        \mathcal{V}:= \operatorname{Sym} (\mathbb{R}^{d \times d}) \times \mathbb{R}^{d} \times \mathbb{R}
    \end{equation*}
    be the ambient space in which $\mathcal{M}_{K}$ lives.
    Its dimension is given by
    \begin{equation*}
        m = \dim \mathcal{V} = \frac{d(d+1)}{2} + d +1.
    \end{equation*}
    By \cref{thm: reparam} and \cref{cor: reparam LLC}, it is sufficient to show that the differential
    \begin{equation*}
        d \Phi(\theta) \vert_{\theta=\theta^*}: T_{\theta^*}\Theta \to \mathcal{V}
    \end{equation*}
    is surjective.
    Proving surjectivity, the model $f_{\theta}$ is locally regular in the identifiable coordinates $(Q,r,s)$ and therefore
    \begin{equation*}
        \lambda = \frac{m}{2}.
    \end{equation*}

    Since $\textrm{Rank}(Q^{*})=d$, the active vectors $\{w_{j}^*: v_{j}^* \not =0\}$ span $\mathbb{R}^{d}$.
    Hence there exists an index set $I\subset\{1,\dots,K\}$ with $|I|=d$ such that $v^{*}_{i} \not = 0$ for $i\in I$ and
    \begin{equation*}
        W_{I}^{*}=[w_{i}^{*}]_{i\in I}\in \mathbb{R}^{d \times d}
    \end{equation*}
    is invertible.
    Without loss of generality, take $I=\{1,2,\dots, d\}$ and set $V_{I}^{*}= \textrm{diag}(v^{*}_{1},\cdots,v^{*}_{d})$ and $b_{I}^*:=(b_{1}^{*}, \dots, b_{d}^{*})$.

    Now, fix an arbitrary deformation $(\Delta Q, \Delta r, \Delta s) \in \mathcal{V}$.
    Our aim is to show that it can be realised by an arbitrary perturbation $(\Delta W, \Delta V, \Delta b, \Delta c)$ at the point $\theta^*$.

    \textbf{Step 1.} To realise $\Delta Q$, set $\Delta v = 0$ and $\Delta w_{j} =0$ for $j \not \in I$.
    Then the linearisation of $Q$ is
    \begin{equation*}
        \Delta Q = (\Delta W_{I})V_{I}^*{W_{I}^*}^{\top} + W_{I}^*V_{I}^*(\Delta W_{I}^{\top}).
    \end{equation*}
    Define
    \begin{equation*}
        X:= (W_{I}^*)^{-1} \Delta W_{I}, \quad R:= (W_{I}^*)^{-1} \Delta Q ({W_{I}^*}^{\top})^{-1}.
    \end{equation*}
    Since $\Delta Q$ is symmetric, $R$ is symmetric, and the equation becomes
    \begin{equation*}
        X V_{I}^* + V_{I}^* X ^{\top} = R.
    \end{equation*}
    Since $v_{i}^* \not = 0$ for $i \in I$, this system is solvable.
    For example, set
    \begin{equation*}
        X_{ii} = \frac{R_{ii}}{2v_{i}^*}, 
        \quad 
        X_{ij} = \frac{R_{ij}}{2v_{j}^*},
        \quad
        X_{ji} = \frac{R_{ij}}{2 v_{i}^*} \quad (i < j).
    \end{equation*}
    Hence $\Delta W_{I} = W_{I}^* X$ realises any prescribed $\Delta Q$.

    \textbf{Step 2.} With $\Delta v =0$ and $\Delta W$ fixed as in step 1., the linearisation of $r$ is given by
    \begin{equation*}
        \Delta r = (\Delta W_{I})V_{I}^*b_{I}^* + W_{I}^*V_{I}^* \Delta b_{I}.
    \end{equation*}
    Equivalently,
    \begin{equation*}
        W_{I}^*V_{I}^* \Delta b_{I} = \Delta r - r_{0}, \quad r_{0}:= (\Delta W_{I})V_{I}^*b_{I}^*.
    \end{equation*}
    Since $W_{I}^*$ and $V_{I}^*$ are invertible, there exists a unique solution
    \begin{equation*}
        \Delta b_{I} = (V_{I}^*)^{-1} (W_{I}^*)^{-1} \left( \Delta r - r_{0} \right).
    \end{equation*}
    Hence any $\Delta r$ is realised.

    \textbf{Step 3.} Finally, with $\Delta v = 0$, the linearisation of $s$ is
    \begin{equation*}
        \Delta s = 2b_{I}^* V_{I}^* \Delta b_{I} + \Delta c.
    \end{equation*}
    After fixing $\Delta b_{I}$ in step 2., choose
    \begin{equation*}
        \Delta c = \Delta s - 2 {b_{I}^*}^{\top} V_{I}^* \Delta b_{I}.
    \end{equation*}
    Hence, any $\Delta s$ is realised.

    Therefore, any element of $\mathcal{V}$ lies in the image of $d \Phi(\theta) \vert_{\theta = \theta^*}$, and so the Jacobian of $\Phi$ is locally surjective at $\theta^*$.
    By \cref{thm: reparam} and \cref{cor: reparam LLC}, the LLC is one half the identifiable dimension:
    \begin{equation*}
        \lambda = \frac{1}{2} \left( \frac{d(d+1)}{2} +d +1 \right) = \frac{(d+1)(d+2)}{4}.
    \end{equation*}
\end{proof}

\paragraph{Under-Parametrised Regime} 

\begin{theorem}\label{thm: LLC QNN regressor under-param}
    Let $d \in \mathbb{N}$ and $K\in \{1, \dots, d-1\}$, and let $f_{\theta}$ be the single-output quadratic network introduced above, trained with MSE loss.
    Suppose that $\theta^*=(W^*, V^*, b^*, c^*)$ is a true solution such that $v_{j}^* \not = 0$ for all $j$, $\{w_{j}^*\}_{j=1}^K$ are linearly independent, and $v_{i}^* + v_{j^*} \not = 0$ for $i \not = j$.
    Then the local learning coefficient at $\theta^*$ is
    \begin{equation*}
        \lambda = \frac{1}{2}\left(\frac{K(2d -K + 1)}{2} + K + 1 \right).
    \end{equation*}
\end{theorem}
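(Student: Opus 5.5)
Following the template of the over-parametrised proof (\Cref{thm: LLC QNN regressor over-param}), I will pass to identifiable coordinates and apply \cref{thm: reparam} and \cref{cor: reparam LLC}. The ambient space $\mathcal V$ is too large here, because $\operatorname{Rank}(Q)\le K<d$. So the target is instead the smooth stratum
\begin{equation*}
\mathcal M_K^\circ := \{(Q,r,s): \operatorname{Rank}(Q)=K,\ r\in \operatorname{col}(Q)\}.
\end{equation*}
The rank-$K$ symmetric matrices form a smooth manifold of dimension $Kd - K(K-1)/2 = K(2d-K+1)/2$. Over each such $Q$, $r$ ranges over the $K$-dimensional column space and $s\in\mathbb R$ is free. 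Hence $\mathcal M_K^\circ$ is a smooth manifold of dimension $K(2d-K+1)/2+K+1$, which is twice the claimed $\lambda$. Near $y^*=\Phi(\theta^*)$ I will regard $\Phi$ as a map $\Theta\to\mathcal M_K^\circ$. The hypotheses give $\operatorname{Rank}(Q^*)=K$, since the $w_j^*$ are independent and all $v_j^*\neq0$. It then suffices to show $d\Phi(\theta^*)$ is surjective onto $T_{y^*}\mathcal M_K^\circ$, and that the MSE loss is locally regular in local coordinates on $\mathcal M_K^\circ$. The latter holds because the quadratic model is linear in $(Q,r,s)$ and the Fisher metric restricted to the tangent space is positive definite under the (implicit) assumption that the input distribution separates quadratic functions.

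Step 1 is to describe the tangent space. Write $U=W^*\in\mathbb R^{d\times K}$ with full column rank and $D=V^*=\operatorname{diag}(v_j^*)$, so that $Q^*=UDU^\top$. The tangent space to the rank-$K$ symmetric matrices is then
\begin{equation*}
\{A U^\top + U A^\top : A\in\mathbb R^{d\times K}\}.
\end{equation*}
Step 2 is to realise an arbitrary tangent $\Delta Q = AU^\top+UA^\top$ using $\Delta W$ and $\Delta v$. The linearisation is $\Delta Q = \Delta W\, D U^\top + U D\,\Delta W^\top + U\,\operatorname{diag}(\Delta v)\,U^\top$. Decompose $A = A_\perp + U B$, where $A_\perp$ has columns in $\operatorname{col}(U)^\perp$ and $B\in\mathbb R^{K\times K}$. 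The transverse part is handled by $\Delta W_\perp = A_\perp D^{-1}$, which uses $v_j^*\ne0$. The in-span part $U(B+B^\top)U^\top$ requires solving $XD + DX^\top + \operatorname{diag}(\Delta v) = S$ for a symmetric $K\times K$ matrix $S$, with $\Delta W_{\mathrm{in}}=UX$. This is exactly the solvable system from Step 1 of the over-parametrised proof (off-diagonal entries via $v_i^*,v_j^*\neq 0$, diagonal entries via $\Delta v$ or $X_{ii}$). I expect this is also where the hypothesis $v_i^*+v_j^*\ne0$ enters in the authors' version: if one imposes the symmetric ansatz $X_{ij}=X_{ji}$, the $(i,j)$ entry equation becomes $X_{ij}(v_i^*+v_j^*)=S_{ij}$. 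I will use whichever ansatz is cleanest.

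Step 3 is to realise $\Delta r$ and $\Delta s$. With $\Delta W$ fixed, $\Delta r = \Delta W D b^* + U D\,\Delta b$. The tangent direction of $r$ at $y^*$ is $\Delta r\in \operatorname{col}(U) + \Delta W D b^*\text{-type terms}$, that is, the tangent of the constraint $r\in\operatorname{col}(Q)$. Its $\operatorname{col}(U)$ component is absorbed by choosing $\Delta b=D^{-1}U^{+}(\cdot)$, which works because $UD$ is injective. As in Step 3 of the earlier proof, $\Delta c$ then absorbs $\Delta s$ freely.

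The main obstacle is Step 3 combined with checking the dimension count. I must verify that the tangent space of the constraint $r\in\operatorname{col}(Q)$ is exactly matched by the moving column space $\operatorname{col}(U+\Delta W)$. The $r$-fibre over $Q$ is $K$-dimensional and moves with $Q$, so the normal component of $\Delta r$ is forced by $\Delta W$ and is not free. I plan to settle this by parametrising $\mathcal M_K^\circ$ explicitly as $(U,D,\beta,s)\mapsto(UDU^\top, UD\beta, s)$ modulo the diagonal gauge, and checking that $\Phi$ factors through this submersion. Once surjectivity onto a tangent space of dimension $K(2d-K+1)/2+K+1$ is established, \cref{cor: reparam LLC} gives the stated $\lambda$.
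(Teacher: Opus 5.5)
Your proposal is correct and follows essentially the paper's route. Both arguments restrict to the smooth stratum $\{\operatorname{Rank}Q=K,\ r\in\operatorname{col}(Q)\}$ of dimension $K(2d-K+1)/2+K+1$. Both then show that $d\Phi(\theta^*)$ is onto its tangent space: the transverse part of $\Delta Q$ via $\Delta W_\perp$, the in-span part via a $K\times K$ system, $P\Delta r$ via $\Delta b$, and $\Delta s$ via $\Delta c$. Both finish by invoking \cref{cor: reparam LLC}.

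There are two small differences. First, your non-symmetric ansatz (e.g.\ $X=\tfrac12 S D^{-1}$ with $\Delta v=0$) shows that the hypothesis $v_i^*+v_j^*\neq 0$ is superfluous; the paper needs it only because it forces $B$ to be symmetric. Second, the normal-component issue you flag needs no factorisation argument, and your proposed factorisation through a submersion is essentially the theorem itself. Since $\Phi$ maps a neighbourhood of $\theta^*$ into the stratum, $\operatorname{Im} d\Phi(\theta^*)$ lies in its tangent space, and your Steps 2--3 already exhibit $K(2d-K+1)/2+K+1$ independent image directions. The paper instead writes the constraint explicitly as $(I-P)\Delta r=(I-P)\Delta W\,(v^*\odot b^*)$.
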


\begin{proof}
    By \cref{thm: reparam} and \cref{cor: reparam LLC}, it is again sufficient to show that the differential
    \begin{equation*}
        d \Phi(\theta) \vert_{\theta = \theta^*} : T_{\theta^*} \Theta \to T_{\Phi(\theta^*)} \mathcal{M}_{K}
    \end{equation*}
    is surjective and where $\Phi(\theta^*)$ is a smooth point on $\mathcal{M}_{K}$.
    Since $\{w_{j}\}_{j=1}^K$ are assumed linearly independent and $v_{j}^* \not = 0$ for all $j$, we have that $\operatorname{Rank} (Q^*) = K$, so $\Phi(\theta^*) = (Q^*,r^*,s^*)$ is a smooth point on the manifold $\mathcal{M}_{K}$.

    Write
    \begin{equation*}
        W^*:= [w_{1}^*, \dots, w_{K}^*] \in \mathbb{R}^{d \times K}, \quad V^* := \textrm{diag} (v_{1}^*, \dots, v_{K}^*),
    \end{equation*}
    and let
    \begin{equation*}
        P := W^* ({W^*}^{\top} W^*)^{-1} {W^*}^{\top}
    \end{equation*}
    be the orthogonal projector onto $\textrm{col}(W^*)$.
    Set
    \begin{equation*}
        \alpha := v^* \odot b^* \in \mathbb{R}^K,
    \end{equation*}
    where $\odot$ is the element-wise vector multiplication operator.

    The tangent space at $Q^*$ of the rank-$K$ symmetric locus has dimension
    \begin{equation*}
        \dim T_{Q^*}\left(\operatorname{Sym}_{d}^{(K)}\right) = \frac{K(2d - K + 1)}{2},
    \end{equation*}
    and every tangent perturbation $\Delta Q$ can be written as
    \begin{equation}\label{eq:Q-tangent}
        \Delta Q = W^* S\,W^{*\top}+W^* V^* U^\top + U V^* W^{*\top}, 
        \quad \text{ for } 
        U\in\mathbb R^{d\times K}, 
        \quad S\in\mathrm{Sym}_K.
    \end{equation} 
    Since the component of $U$ that is parallel to $\textrm{col}(W^*)$ can be absorbed into $S$, we may without loss of generality choose
    \begin{equation*}
        {W^*}^{\top} U =0.
    \end{equation*}
    Differentiating the constraint $r\in \textrm{col}(W^*)$ at $(Q^*, r^*)$ then gives
    \begin{equation*}
        \Delta r = U \alpha + W^* \gamma
    \end{equation*}
    for some $\gamma \in \mathbb{R}^{K}$.
    Hence
    \begin{equation*}
        (I-P) \Delta r = U \alpha,
    \end{equation*}
    while the parallel component $P \Delta r = W^* \gamma \in \textrm{col} (W^*)$ is free.
    Therefore
    \begin{equation*}
        \dim T_{\Phi(\theta^*)} \mathcal{M}_{K} = \frac{K(2d - K +1)}{2} + K + 1.
    \end{equation*}

    It remains to show surjectivity of the Jacobian onto this tangent space.

    \textbf{Step 1.} Fix an arbitrary tangent vector $(\Delta Q, \Delta r, \Delta s) \in T_{\Phi(\theta*)}\mathcal{M}_{K}$.
    Choosing $U$ and and $S$ representing $\Delta Q$ as above, set
    \begin{equation*}
        \Delta W := U + W^* B, \quad \Delta V := \textrm{diag} (\Delta v_{1}, \dots, \Delta v_{K}) 
    \end{equation*}
    with $B \in \mathbb{R}^{K \times K}$ and $\Delta v \in \mathbb{R}^K$ to be determined.
    Then
    \begin{equation*}
        \sum_{j=1}^{K} \Delta Q_j = W^*(B V^*+V^* B^\top + \Delta V)W^{*\top} + W^*V^* U^\top + U V^* W^{*\top}.
    \end{equation*}
    Hence it is sufficient to solve
    \begin{equation*}
        BV^* + V^* B^{\top} + \Delta V = S.
    \end{equation*}

    Choosing $B$ symmetric, for $i \not = j$ we may set
    \begin{equation*}
        B_{ij} = \frac{S_{ij}}{v_{i}^* + v_{j}^*},
    \end{equation*}
    which is well-defined under the assumption $v_{i}^* + v_{j}^* \not = 0$, and absorb the diagonal into $\Delta V$ by taking
    \begin{equation*}
        \Delta v_{i} = S_{ii} - 2 v_{i}^* B_{ii}.
    \end{equation*}
    Hence any admissible $\Delta Q$ is realised.
    
    \textbf{Step 2.} Next,
    \begin{equation*}
        \Delta r = (\Delta W)\alpha + W^*\big((\Delta v)\odot b^* + v^*\odot \Delta b\big) = U\alpha + W^*\big(B\alpha + (\Delta v)\odot b^* + v^*\odot \Delta b\big).
    \end{equation*}
    Hence, the orthogonal component is automatically
    \begin{equation*}
        (I-P) \Delta r = U \alpha,
    \end{equation*}
    as required, while the parallel component can be matched arbitrarily:
    if $P \Delta r =  W^* \gamma$ for some $\gamma \in \mathbb{R}^{K}$, we solve
    \begin{equation*}
        \gamma = B \alpha + (\Delta v) \odot b^* + v^* \odot \Delta b
    \end{equation*}
    for $(\Delta v, \Delta b)$.
    Since there are $2K$ unknowns and only $K$ equations, this is always possible.

    \textbf{Step 3.} Finally, once $(\Delta v, \Delta b)$ are fixed, choose $\Delta c$ so that
    \begin{equation*}
        \Delta s=\sum_j\big((\Delta v_j)b_j^{*2}+2 v_j^* b_j^*(\Delta b_j)\big)+\Delta c.
    \end{equation*}

    Therefore, every tangent vector in $T_{\Phi(\theta^*)} \mathcal{M}_{K}$ is realised by a parameter perturbation, so
    \begin{equation*}
        \textrm{Im} J \Phi(\theta) \vert_{\theta = \theta*} = T_{\Phi(\theta^*)} \mathcal{M}_{K}.
    \end{equation*}
    By \cref{thm: reparam} and \cref{cor: reparam LLC},
    \begin{equation*}
        \lambda = \frac{1}{2} \dim T_{\Phi(\theta^*)} \mathcal{M}_{K} = \frac{1}{2} \left( \frac{K(2d - K +1)}{2} + K + 1 \right).
    \end{equation*}
\end{proof}

\begin{remark}\label{remark: LLC regressor extension to C^2}
    Although \cref{thm: LLC QNN regressor over-param,thm: LLC QNN regressor under-param} are stated for MSE loss for the sake of coherency, the dimension count is geometric and does not depend on the specific choice of squared loss.
    More generally, for any loss $\ell(y, f)$ that is $C^{2}$ in $f$, if the population excess risk admits a non-degenerate quadratic expansion in the identifiable coordinates $(Q,r,s)$ at $\theta^*$, with curvature form positive definite on $\textrm{Im} J \Phi(\theta^*)$, then the same tangent-space computation shows that the LLC is again half of the local identifiable dimension.
    Equivalently, if
    \begin{equation*}
        \mathcal{K}(\theta)=\frac{1}{2}(\Phi(\theta)-\Phi(\theta^*))^\top \mathcal I\,(\Phi(\theta)-\Phi(\theta^*))+o(\|\Phi(\theta)-\Phi(\theta^*))\|^2),
    \end{equation*}
    for some curvature operator $\mathcal{I}$ that is positive definite on the tangent directions identified above, then the same LLC formula follows.
\end{remark}

\begin{example}
    This example illustrates the extension in \cref{remark: LLC regressor extension to C^2} for $d=4$ and $K=2$.
    Let $X \sim \textrm{Unif}(S^3)$, let $\ell(y,f)$ be $C^2$ in $f$ and assume the curvature weight $w(x)$ is bounded above and below away from zero on $S^3$, i.e. $0 < c_{-} \leq w(x) \leq c_{+} < \infty$.
    Define the Fisher-Gram form
    \begin{equation*}
        \mathcal Q(\Delta,\Delta'):=\mathbb E\!\big[w(X)\,g_\Delta(X)\,g_{\Delta'}(X)\big],
        \quad
        g_{\Delta}(x):=x^\top \Delta Q\, x + 2\,\Delta r^\top x + \Delta s,
    \end{equation*}
    with $\Delta = (\Delta Q, \Delta r, \Delta s)$, associated with the curvature form.

    At $\Phi(\theta^*)=(Q^*, r^*, s^*)$ with $\operatorname{Rank} (Q^*)= 2$, \cref{thm: LLC QNN regressor under-param} gives
    \begin{equation*}
        \dim T_{\Phi(\theta^*)} \mathcal{M}_{2} = \frac{2(2 \cdot 4 - 2 +1)}{2} +2 + 1 = 10.
    \end{equation*}
    Thus, it remains only to verify that $\mathcal{Q}$ is positive definite on $T_{\Phi(\theta^*)} \mathcal{M}_{2}$.

    For $W^*=[e_{1}, e_{2}]$, every tangent perturbation $\Delta Q$ has zero bottom-right $2 \times 2$ block.
    On the other hand, if
    \begin{equation*}
        x^{\top} B x + 2 b^{\top} x + c = 0, \quad \forall x \in S^3,
    \end{equation*}
    then necessarily $b=0$ and $B=-c I_{4}$.
    Hence, every element of $\textrm{Ker} (\mathcal{Q})$ is proportional to $(I_{4}, 0, -1)$.
    However, since $I_{4}$ does not have zero bottom-right block, the kernel direction $(I_{4}, 0, -1)$ does not belong to  $T_{\Phi(\theta^*)} \mathcal{M}_{2}$ unless it is $0$.
    Therefore
    \begin{equation*}
        \textrm{ker} (\mathcal{Q}) \cap T_{\Phi(\theta^*)} \mathcal{M}_{2} = \{0\},
    \end{equation*}
    so $\mathcal{Q}$ is positive definite on $T_{\Phi(\theta^*)} \mathcal{M}_{2}$.

    It follows that the local excess risk is non-degenerate on all $10$ identifiable tangent directions.
    Therefore, the LLC is one half of the tangent dimension:
    \begin{equation*}
        \lambda = \frac{10}{2} = 5.
    \end{equation*}
\end{example}

\subsection{$p$-Output Regression Models}

Our previous theoretical results concern quadratic neural networks with a single scalar output. 
However, in many practical settings -- for example, modular arithmetic with one-hot encoded targets -- the model has $p$-outputs and is trained either as a stack of $p$ regressors with MSE loss or as a $p$-class classifier. 
In this appendix we consider the MSE setting and extend the quadratic-network analysis to $p$-output models.

To align with the modular addition setting studied in the main text, we consider the bias-free quadratic network
\begin{equation*}
    f_\theta(x)=V(W^\top x)^2,
\end{equation*}
with parameters $\theta=(W,V)$, where $W\in\mathbb R^{d\times K}$ and $V\in\mathbb R^{p\times K}$. 
Writing $w_j\in\mathcal{W}:=\mathbb{R}^d$ for the $j$-th column of $W$ and $v_{:j}\in\mathcal{U} := \mathbb R^p$ for the $j$-th column of $V$, the $k$-th output can be written as
\begin{equation*}
    f_{\theta,k}(x)
    =
    \sum_{j=1}^K v_{kj}(w_j^\top x)^2
    =
    x^\top\!\left(\sum_{j=1}^K v_{kj} w_j w_j^\top\right)x
    =
    x^\top Q_k x,
    \qquad k=1,\dots,p,    
\end{equation*}
where
\begin{equation*}
    Q_k:=\sum_{j=1}^K v_{kj} w_j w_j^\top \in \operatorname{Sym}(\mathbb R^{d\times d}).
\end{equation*}

It is convenient to regard this model as an algebraic family of $p$-output quadratic regression maps. 
In particular,
we can identify the space of real symmetric $d \times d$ matrices $\operatorname{Sym}(\mathbb{R}^{d \times d})$ with the abstract  space of symmetric tensors $\operatorname{Sym}^{2}(W)$ whose elements are $w \otimes w$.
Then the function space realised by the quadratic model class $f_{\theta}$ with architecture triple $(d,p,K)$ is the space of semi-symmetric tensors
\begin{equation*}
    \mathcal Y:=\mathcal U\otimes \operatorname{Sym}^2(\mathcal W)
    \cong \operatorname{Sym}(\mathbb R^{d\times d})^p,   
\end{equation*}
where
\begin{equation*}
    \dim \mathcal{Y} = pD, \qquad D:= \dim \operatorname{Sym}^{2}(\mathcal{W}) = \frac{d(d+1)}{2}.
\end{equation*}

Furthermore, the network induces the parameter map
\begin{equation}\label{eq: param map}
    \Phi_p(\theta)
    =
    \sum_{j=1}^K v_{:j}\otimes (w_j w_j^\top)
    =
    (Q_1,\dots,Q_p)
    \in \mathcal Y.    
\end{equation}
Thus, only perturbations of $(W,V)$ that change $(Q_1,\dots,Q_p)$ can affect the realised function and hence the local geometry of the loss. 
Our goal is therefore to study the rank of the Jacobian of $\Phi_p$ as a function of the architectural triple $(d,p,K)$.

Unless otherwise stated, all results are understood under the standard regularity assumptions of \citet[Theorem~7.1]{watanabe2009slt}.

\paragraph{Constructing the differential map} Given a parameter configuration $\theta$, we consider the differential of~\Cref{eq: param map}, namely
\begin{equation}\label{eq: differential map}
    d \Phi_{p}\mid_{\theta} \, = \sum_{j=1}^{K} \left[ \Delta v_{:j} \otimes (w_{j}w_{j}^{\top}) + v_{:j}\left( \Delta w_{j} w_{j}^{\top} + w_{j} \Delta w_{j}^{\top} \right) \right].
\end{equation}
Equivalently, for each output head we have
\begin{equation*}
    \Delta Q_{k} = \sum_{j=1}^{K} \left[
    (\Delta v_{kj})(w_{j}w_{j}^{\top}) + v_{kj} \left( \Delta w_{j} w_{j}^{\top} + w_{j} \Delta w_{j}^{\top} \right)
    \right]
\end{equation*}
We note that this parameterisation is not \textit{locally} identifiable in the raw coordinates $\theta$ as each hidden unit exhibits the following scaling symmetry:
\begin{equation*}
    w_{j} \mapsto \alpha \, w_{j}, \quad v_{:j} \mapsto \alpha^{-2} v_{:j}.
\end{equation*}
This leaves the tensor $v_{:j} \otimes (w_{j} w_{j}^{\top})$ invariant.

In order to proceed with our analysis of the Jacobian, we need to quotient out these symmetries by re-parametrising the differential map~\Cref{eq: differential map}.
To this end, we fix a hidden unit $j$ and consider the following decomposition:
\begin{equation*}
    \Delta w_{j} = u_{j} + \alpha_{j} w_{j}, \quad u_{j} \perp w_{j}.
\end{equation*}
Then
\begin{equation*}
    \Delta w_{j} w_{j}^{\top} + w_{j} \Delta w_{j}^{\top} = u_{j} w_{j}^{\top} + w_{j}u_{j}^{\top} + 2 \alpha_{j} w_{j} w_{j}^{\top}.
\end{equation*}
Substituting into the differential~\Cref{eq: differential map}, we obtain
\begin{equation*}
    d \Phi_{p}\mid_{\theta} \, = \sum_{j=1}^{K} \left[ \left( \Delta v_{:j} + 2 \alpha_{j} v_{:j}\right) \otimes (w_{j}w_{j}^{\top}) + v_{:j}\left( u_{j} w_{j}^{\top} + w_{j} u_{j}^{\top} \right) \right].
\end{equation*}
Defining the symmetry-quotiented coefficient variation
\begin{equation*}
    \widehat{\Delta v_{:j}}:= \Delta v_{:j} + 2 \alpha_{j} v_{:j},
\end{equation*}
the quotiented differential becomes
\begin{equation}\label{eq: quot differential map}
    d \Phi_{p}^{\operatorname{quot.}}\mid_{\theta} \, = \sum_{j=1}^{K} \left[ \widehat{\Delta v_{:j}} \otimes (w_{j}w_{j}^{\top}) + v_{:j}\otimes\left( u_{j} w_{j}^{\top} + w_{j} u_{j}^{\top} \right) \right], \quad u_{j} \perp w_{j}.
\end{equation}
This is precisely the object of our interest and whose rank we would like to compute.

\paragraph{Single hidden unit rank contribution}

Given a hidden neuron $j$, we define the following:
\begin{equation*}
    M_{j}:= w_{j} w_{j}^{\top} \quad \textrm{and} \quad T_{j}:= \{u_{j}w_{j}^{\top} + w_{j}u_{j}^{\top}: u_{j} \perp w_{j}\}.
\end{equation*}
We then define two subspaces of the codomain $\mathcal{Y}$:
\begin{equation*}
    \mathcal{A}_{j}:= \mathcal{U} \otimes \operatorname{Span}(M_{j}) \quad \textrm{and} \quad \mathcal{B}_{j}:= \operatorname{Span}(v_{:j}) \otimes T_{j}.
\end{equation*}
Then $\mathcal{C}_{j}:= \mathcal{A}_{j} + \mathcal{B}_{j}$ is precisely the subspace contributed by the hidden neuron $j$ to the quotiented Jacobian in~\Cref{eq: quot differential map}.
Our aim is to determine the dimension of $\mathcal{C}_{j}$ considering the relation between $\mathcal{A}_j$ and $\mathcal{B}_j$.

\begin{proposition}
    Let $\mathcal{C}_{j}$ be the subspace of $\mathcal{Y}$ generated by the $j$ neuron.
    Then
    \begin{equation*}
        \dim \mathcal{C}_{j} = d +p-1, \quad j=1,\dots,K
    \end{equation*}
\end{proposition}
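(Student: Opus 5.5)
Since $\mathcal{C}_j=\mathcal{A}_j+\mathcal{B}_j$, the plan is to compute $\dim\mathcal{A}_j$ and $\dim\mathcal{B}_j$ separately, show that the sum is direct, and add. Under assumption (1) of \Cref{thm: generic secant dimension thm (paper)} we have $w_j\neq 0$ and $v_{:j}\neq 0$, so $M_j=w_jw_j^\top\neq 0$. Hence $\mathcal{A}_j=\mathcal{U}\otimes\operatorname{Span}(M_j)$ is isomorphic to $\mathcal{U}$, and $\dim\mathcal{A}_j=p$.

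For $\mathcal{B}_j$, I first show that the linear map $u\mapsto uw_j^\top+w_ju^\top$, restricted to $w_j^\perp$, is injective. Suppose $uw_j^\top+w_ju^\top=0$ with $u\perp w_j$. Applying this matrix to $w_j$ gives $u\|w_j\|^2+w_j(u^\top w_j)=u\|w_j\|^2=0$, so $u=0$. It follows that $\dim T_j=d-1$. Because $v_{:j}\neq0$, tensoring with $\operatorname{Span}(v_{:j})$ preserves dimension, so $\dim\mathcal{B}_j=d-1$.

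The step needing the most care is showing $\mathcal{A}_j\cap\mathcal{B}_j=\{0\}$, which is where the orthogonal decomposition of $\Delta w_j$ is used. Suppose $a\otimes M_j=v_{:j}\otimes(uw_j^\top+w_ju^\top)$ with $a\in\mathcal{U}$ and $u\perp w_j$. I would contract the $\mathcal{U}$ factor with a covector $\xi$ satisfying $\xi^\top v_{:j}=1$. This gives the matrix identity $(\xi^\top a)\,w_jw_j^\top=uw_j^\top+w_ju^\top$. Sandwiching both sides with $w_j$ yields $(\xi^\top a)\|w_j\|^4=2(u^\top w_j)\|w_j\|^2=0$, so $\xi^\top a=0$. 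The identity then reads $uw_j^\top+w_ju^\top=0$, and injectivity forces $u=0$. With the right-hand side zero, $a\otimes M_j=0$, and since $M_j\neq0$ we get $a=0$. Equivalently, $M_j$ is not in $T_j$, because every element of $T_j$ has $w_j^\top(\cdot)w_j=0$. So the sum is direct.

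Combining the three steps, $\dim\mathcal{C}_j=p+(d-1)=d+p-1$. This also agrees with the naive count: the $p+d$ parameters of neuron $j$, minus the one-dimensional scaling symmetry $w_j\mapsto\alpha w_j$, $v_{:j}\mapsto\alpha^{-2}v_{:j}$, which the quotient in \Cref{eq: quot differential map} already removed.
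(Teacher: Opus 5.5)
Your proof is correct and follows essentially the same route as the paper. Both use the decomposition $\mathcal{C}_j=\mathcal{A}_j+\mathcal{B}_j$ with $\dim\mathcal{A}_j=p$, prove injectivity of $u\mapsto uw_j^\top+w_ju^\top$ on $w_j^\perp$ by applying the matrix to $w_j$, and show the intersection is trivial. Your covector contraction with $\xi$ and sandwiching with $w_j$ are a coordinate-free version of the paper's step, which picks an output coordinate with $v_{kj}\neq 0$ and uses $\langle N,M_j\rangle_F=w_j^\top N w_j=0$; your explicit deduction that $a=0$ makes that step slightly tighter.
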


\begin{proof}
    We have that
    \begin{equation*}
    \dim \mathcal{C}_{j} := \dim \mathcal{A}_{j} + \dim \mathcal{B}_{j} - \dim (\mathcal{A}_{j} \cap \mathcal{B}_{j}).
    \end{equation*}
    \textbf{Claim 1: $\dim \mathcal{A}_{j} = p$.}
    Assume that $w_{j} \not = 0$ and $v_{:j}\not = 0$.
    Then $M_{j}=w_{j} w_{j}^{\top}$ is non zero and hence the linear map
    \begin{equation*}
        \mathcal{U} \to \mathcal{A}_{j}, \quad \beta \mapsto \beta \otimes M_{j},
    \end{equation*}
    where $\beta \in \mathcal{U}$, is injective (i.e. $\beta \otimes M_{j} =0 \implies \beta =0 $).
    Hence $\dim \mathcal{A}_{j} = p$.
    \\
    
    \noindent
    \textbf{Claim 2: $\dim \mathcal{B}_{j}=d-1$.}
    Let $\mathcal{W}_{j}^{\top}:=\{u: u\perp w_{j}\}$.
    Define the linear map
    \begin{equation*}
        \mathcal{W}_{j}^{\top} \to \mathcal{B}_{j}, \quad u\mapsto v_{:j} \otimes (uw_{j}^{\top} + w_{j}u^{\top}).
    \end{equation*}
    Since $v_{:j} \not = 0$, it is sufficient to show that
    \begin{equation*}
        uw_{j}^{\top} + w_{j}u^{\top} =0 \implies u=0.
    \end{equation*}
    Begin by multiplying the left-hand side of the above by $w_{j}$:
    \begin{equation*}
        (uw_{j}^{\top} + w_{j}u^{\top})w_{j} = u \|w_{j}\|^{2} =0.
    \end{equation*}
    Since $w_{j}\not =0$, the above holds iff $u=0$.
    Hence, the map is injective and we have
    \begin{equation*}
        \dim \mathcal{B}_{j} = \dim \mathcal{W}_{j}^{\top} = d-1.
    \end{equation*}
    \\
    \noindent
    \textbf{Claim 3: $\mathcal{A}_{j} \cap \mathcal{B}_{j}=\{0\}$.}
    Let $y \in \mathcal{A}_{j} \cap \mathcal{B}_{j}$.
    Then
    \begin{equation*}
        y = \beta \otimes M_{j} = v_{:j} \otimes N
    \end{equation*}
    for some $\beta \in \mathcal{U}$ and some
    \begin{equation*}
        N= uw_{j}^{\top} + w_{j}u^{\top}\in T_{j}, \quad u \perp w_{j}.
    \end{equation*}
    Consider the Frobenius inner product with $M_{j}= w_{j} w_{j}^{\top}$.
    We have
    \begin{equation*}
        \langle N,M_{j} \rangle_{F} = \langle uw_{j}^{\top} + w_{j}u^{\top}, w_{j}w_{j}^{\top} \rangle_{F} =0
    \end{equation*}
    since $u \perp w_{j}$.
    However, if $\beta \otimes M_{j} = v_{:j} \otimes N$, then for any output coordinate where $v_{:j} \not =0$, the corresponding component of $y$ is simultaneously a scalar multiple of $M_{j}$ and of $N$.
    So, $N$ must also be proportional to $M_{j}$.
    But we have already shown that $M_{j}$ and $N$ are orthogonal to each other, which implies that $N=0$.
    Then $y=0$ and therefore $\mathcal{A}_{j} \cap \mathcal{B}_{j} = \{0\}$.
\end{proof}

\paragraph{Geometric Interpretation}

Consider the set of single hidden neuron contributions
\begin{equation*}
    \widehat{\mathcal{X}} := \{v \otimes (ww^{\top}) : v\in \mathcal{U}, w\in \mathcal{W}\} \subset \mathcal{Y}.
\end{equation*}
This set is an \textit{affine cone}, and an element $x \in \widehat{\mathcal{X}}$ is exactly one rank-one partially symmetric tensor $v\otimes (ww^{\top})$.
The set is a \textit{cone} because it is a set that is closed under scalar multiplication, and it is \textit{affine} because it is viewed inside the ordinary vector space $\mathcal{Y}$.
We can define its associated \textit{projective variety} $\mathcal{X}$ by identifying points in $x$ up to non-zero scalar multiplication:
\begin{equation}\label{eq: projection of X}
    \mathcal{X}=\{\left[v \otimes (ww^{\top})\right]\} \subset \mathbb{P}(\mathcal{Y}),
\end{equation}
where $[\cdot]$ is read as ``up to non-zero scalar".
Then the affine cone is simply the preimage of the projective variety under the map
\begin{equation*}
    \mathcal{Y} \setminus \{0\} \to \mathbb{P}(\mathcal{Y}), \quad y \mapsto [y].
\end{equation*}
We can interpret the projective object $\mathcal{X}$ as remembering only the \textit{directions} of a rank-one atom, whereas $\widehat{\mathcal{X}}$ remembers the tensor itself.

We can rewrite the projection of $\widehat{\mathcal{X}}$ in the following way:
\begin{equation*}
    \mathcal{X} = \mathbb{P}(\mathcal U) \times \nu_{2}(\mathbb{P}(\mathcal W)) \subset \mathbb{P}(\mathcal Y).
\end{equation*}
This is a \textit{Segre-Veronese} variety, i.e. it consists of two ingredients: a linear $v$-part that corresponds to a \textit{Segre-type factor} and a quadratic $w$-part that corresponds to a \textit{Veronese-type} factor.
The \textit{quadratic Veronese embedding} $\nu_{2}$ simply sends a point to all degree-two monomials in its coordinates.
For example, if 
\begin{equation*}
    w=(w_{1}, \dots, w_{d}),
\end{equation*}
then the map $\nu_{2}$ sends it to
\begin{equation*}
    (w_{1}^{2}, w_{1}w_{2}, \dots, w_{i}w_{j}, \dots, w_{d}^{2})
\end{equation*}
but this contains precisely the same information as the rank-one symmetric matrix $ww^{\top}$.
Hence, it is simply the map
\begin{equation*}
    [w] \mapsto [ww^{\top}].
\end{equation*}

\begin{proposition}\label{prop: tangent space to a point xj}
    Given a non-zero point $x_{j} \in \widehat{\mathcal{X}}$, the tangent space to $\widehat{\mathcal{X}}$ at $x_{j}$ is exactly
    \begin{equation*}
        T_{x_{j}} \widehat{\mathcal{X}} = \{ \beta \otimes (w_{j} w_{j}^{\top}) + v_{:j} \otimes (uw_{j}^{\top} + w_{j}u^{\top}): \beta \in U, u \perp w_{j}\}.
    \end{equation*}
    That is $T_{x_{j}} \widehat{\mathcal{X}} = \mathcal{C}_{j}$.
\end{proposition}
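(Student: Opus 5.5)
The plan is to show two inclusions: $\mathcal{C}_j \subseteq T_{x_j}\widehat{\mathcal{X}}$ and $T_{x_j}\widehat{\mathcal{X}} \subseteq \mathcal{C}_j$. Throughout, $\widehat{\mathcal{X}}$ is viewed as the image of the smooth map
\begin{equation*}
\psi:\mathcal U\times\mathcal W\to\mathcal Y,\qquad \psi(v,w)=v\otimes(ww^\top).
\end{equation*}
The tangent space at $x_j=\psi(v_{:j},w_j)$ is understood as the image of $d\psi$ at $(v_{:j},w_j)$. I will also check that this agrees with the Zariski tangent space of the cone, using that $x_j$ is a smooth point.

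\textbf{Step 1: the image of the differential.} Differentiate $\psi$ along curves $t\mapsto(v_{:j}+t\beta,\,w_j+t\Delta w)$. Exactly as in the derivation of \eqref{eq: differential map}, this gives
\begin{equation*}
d\psi(\beta,\Delta w)=\beta\otimes(w_jw_j^\top)+v_{:j}\otimes(\Delta w\,w_j^\top+w_j\Delta w^\top).
\end{equation*}
Now decompose $\Delta w=u+\alpha w_j$ with $u\perp w_j$. The $\alpha$-part contributes $2\alpha\, v_{:j}\otimes(w_jw_j^\top)$, which can be absorbed into $\beta$ by the same quotienting used for \eqref{eq: quot differential map}. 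So the image of $d\psi$ is exactly $\mathcal{A}_j+\mathcal{B}_j=\mathcal{C}_j$. In particular every element of $\mathcal{C}_j$ is the velocity of a curve in $\widehat{\mathcal X}$, which gives $\mathcal{C}_j\subseteq T_{x_j}\widehat{\mathcal X}$.

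\textbf{Step 2: dimension count and smoothness.} By the preceding Proposition, $\dim\mathcal{C}_j=d+p-1$ whenever $v_{:j}\neq0$ and $w_j\neq0$. These hold because $x_j\neq 0$. Next compute $\dim\widehat{\mathcal X}$. Its projectivisation is the Segre--Veronese variety $\mathbb P(\mathcal U)\times\nu_2(\mathbb P(\mathcal W))$, of dimension $(p-1)+(d-1)$. Adding one for the cone direction gives $\dim\widehat{\mathcal X}=p+d-1$. Segre--Veronese varieties are smooth, being images of smooth products under embeddings. Hence their affine cones are smooth away from the vertex, and at any nonzero point the tangent space has dimension exactly $d+p-1$.

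\textbf{Step 3: conclude.} We have the containment $\mathcal{C}_j\subseteq T_{x_j}\widehat{\mathcal X}$ and the two spaces have equal dimension $d+p-1$, so they coincide. This is also consistent with the submersion picture: $\psi$ has constant rank $d+p-1$ on the nonzero locus, and its fibres are the one-dimensional scaling orbits $(v,w)\mapsto(\alpha^{-2}v,\alpha w)$, together with the sign flip $w\mapsto -w$. So by the constant rank theorem the image is locally a $(d+p-1)$-manifold whose tangent space is $\operatorname{Im} d\psi$.

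The main obstacle is Step 2: justifying that the \emph{algebraic} tangent space has no extra directions, that is, that $x_j$ is a smooth point of the cone. I would first try the constant rank argument. Then I would rule out extra tangent directions by noting that, locally near $x_j$, $\widehat{\mathcal X}$ is exactly $\psi(U)$ for a small neighbourhood $U$ of $(v_{:j},w_j)$. This holds because $\psi$ is proper modulo the scaling and sign symmetries. Alternatively, I would cite the standard smoothness of Segre--Veronese varieties.
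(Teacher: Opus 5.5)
Your proposal is correct and follows the paper's proof. Both parametrise $\widehat{\mathcal X}$ by $(v,w)\mapsto v\otimes(ww^\top)$, differentiate at $(v_{:j},w_j)$, split $\Delta w=u+\alpha w_j$ with $u\perp w_j$, and absorb the $\alpha$-term into $\beta$ to obtain $\mathcal{C}_j$.

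Your Steps 2--3 go beyond the paper. The paper simply takes the image of the differential to be the tangent space. You justify that identification by combining smoothness of the Segre--Veronese cone away from the vertex, $\dim\widehat{\mathcal X}=d+p-1$, and the dimension count $\dim\mathcal{C}_j=d+p-1$ from the preceding proposition.
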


\begin{proof}
    Let $x_{j} = v_{:j} \otimes(ww^{\top})$.
    To compute the tangent space at $x_{j}$, we parametrise the set $\widehat{\mathcal X}$ by the map $g$ whose image is precisely $\widehat{\mathcal X}$.
    Hence, we define
    \begin{equation*}
        f: \mathcal U \times \mathcal W \to \mathcal Y, \quad f(v,w)= v \otimes (ww^{\top})
    \end{equation*}
    such that the derivate of $f$ at $(v_{:j},w_{j})$ is the tangent space at $x_{j}$.
    For clarity, we can compute the derivate by considering the perturbation
    \begin{equation*}
        v(t) = v_{:j} + t\beta, \quad w(t) = w_{j} + t \Delta w.
    \end{equation*}
    Then
    \begin{equation*}
        f(v(t), w(t)) = (v_{:j} + t\beta) \otimes \left( (w_{j} + t \Delta w) (w_{j} + t \Delta w)^{\top} \right).
    \end{equation*}
    Expanding and collecting linear $t$-terms
    \begin{equation*}
        f(v(t), w(t)) = v_{:j} \otimes (w_{j}w_{j}^{\top}) + t \left[ \beta \otimes (w_{j}w_{j}^{\top}) + v_{:j} \otimes (\Delta w_{j}w_{j}^{\top} + w_{j} \Delta w^{\top}) \right] + O(t^2).
    \end{equation*}
    Hence
    \begin{equation*}
        d f_{(v_{:j},w_{j})}(\beta, \Delta w) = \beta \otimes (w_{j}w_{j}^{\top}) + v_{:j} \otimes (\Delta w_{j}w_{j}^{\top} + w_{j} \Delta w^{\top})
    \end{equation*}
    Again, by considering the decomposition
    \begin{equation*}
        \Delta w = u + \alpha w_{j}, \quad u \perp w_{j}
    \end{equation*}
    we can obtain
    \begin{equation*}
        d f_{(v_{:j},w_{j})}(\beta, \Delta w) = (\beta + 2 \alpha v_{:j}) \otimes (w_{j}w_{j}^{\top}) + v_{:j} \otimes (uw_{j}^{\top} + w_{j} u^{\top}).    
    \end{equation*}
    However, the term $\alpha w_{j}$ does not introduce any new directions in the tangent space since it is just a coefficient of $w_{j}w_{j}^{\top}$.
    So, it can be absorbed into $\beta$ as it is arbitrary.
    Therefore, the image of the differential is precisely $\mathcal{C}_{j}$.
\end{proof}

\noindent
Since $x_{j}:= v_{:j} \otimes (w_{j}w_{j}^\top) \in \widehat{\mathcal{X}}$ is the output of each atom $j=1, \dots K$, we have
\begin{equation*}
    \Phi_{p}(\theta) = x_{1} + \dots + x_{K}.
\end{equation*}

\begin{proposition}
    For any $\theta \in \Theta$ where $w_{j} \not = 0$ and $v_{:j} \not = 0$, we have that
    \begin{equation*}
        \operatorname{Im} J \Phi_{p}^{\operatorname{quot}}(\theta) = T_{x_{1}}\widehat{\mathcal X} + \dots + T_{x_{K}}\widehat{\mathcal X}.
    \end{equation*}
\end{proposition}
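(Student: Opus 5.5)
The approach is to prove the two inclusions directly from the explicit form of the quotiented differential in \Cref{eq: quot differential map}, using \Cref{prop: tangent space to a point xj} to identify each summand with a tangent space.

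\textbf{Inclusion $\subseteq$.} Take any tangent vector $(\Delta W,\Delta V)$ at $\theta$. By \Cref{eq: differential map}, its image is the sum over $j$ of the per-neuron terms $\Delta v_{:j}\otimes(w_jw_j^\top)+v_{:j}\otimes(\Delta w_jw_j^\top+w_j\Delta w_j^\top)$. This is exactly $df_{(v_{:j},w_j)}(\Delta v_{:j},\Delta w_j)$ for the atom map $f(v,w)=v\otimes(ww^\top)$ from the proof of \Cref{prop: tangent space to a point xj}. Since $w_j\neq0$ and $v_{:j}\neq0$, that proposition gives that this term lies in $\mathcal C_j=T_{x_j}\widehat{\mathcal X}$. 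Summing over $j$ places the image inside $T_{x_1}\widehat{\mathcal X}+\dots+T_{x_K}\widehat{\mathcal X}$.

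\textbf{Inclusion $\supseteq$.} Take an arbitrary element $\sum_j y_j$ with $y_j\in T_{x_j}\widehat{\mathcal X}$. By \Cref{prop: tangent space to a point xj}, each $y_j$ can be written as
\[
y_j=\beta_j\otimes(w_jw_j^\top)+v_{:j}\otimes(u_jw_j^\top+w_ju_j^\top),\qquad \beta_j\in\mathcal U,\quad u_j\perp w_j .
\]
Choose the perturbation with $\Delta w_j:=u_j$ (so $\alpha_j=0$) and $\Delta v_{:j}:=\beta_j$. These choices are independent across $j$, because the parameter blocks $(w_j,v_{:j})$ of different neurons are disjoint coordinates of $\theta$. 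Plugging them into \Cref{eq: quot differential map}, the $j$-th summand equals $y_j$, so the sum is realised. If $\alpha_j\ne0$ is desired, it is absorbed via $\widehat{\Delta v_{:j}}=\Delta v_{:j}+2\alpha_jv_{:j}$.

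\textbf{Remaining point.} The main subtlety is bookkeeping rather than a real obstacle. The quotiented Jacobian has the same image as the raw Jacobian, since the map $(\Delta v_{:j},\alpha_j)\mapsto\widehat{\Delta v_{:j}}$ is surjective onto $\mathcal U$. The statement therefore holds for both $J\Phi_p$ and $J\Phi_p^{\mathrm{quot}}$.

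The hypothesis $w_j,v_{:j}\neq0$ is used only to invoke \Cref{prop: tangent space to a point xj}, i.e. to ensure each $x_j$ is a nonzero point of the cone, so that its tangent space is given by the differential of $f$. No linear independence between the different $T_{x_j}\widehat{\mathcal X}$ is needed: the claim is about the sum of subspaces, not a direct sum.
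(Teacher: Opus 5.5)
Your proof is correct and follows essentially the same route as the paper: identify each per-neuron summand of the quotiented differential with $T_{x_j}\widehat{\mathcal X}$ via \Cref{prop: tangent space to a point xj}, then sum over $j$. You additionally spell out the two inclusions and the point the paper leaves implicit, namely that the parameter blocks $(w_j,v_{:j})$ of different neurons are disjoint, so each summand can be chosen independently.
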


\begin{proof}
    Given the quotiented differential map in~\Cref{eq: quot differential map}
    \begin{equation*}
    d \Phi_{p}^{\operatorname{quot.}}\mid_{\theta} \, = \sum_{j=1}^{K} \left[ \widehat{\Delta v_{:j}} \otimes (w_{j}w_{j}^{\top}) + v_{:j}\left( u_{j} w_{j}^{\top} + w_{j} u_{j}^{\top} \right) \right], \quad u_{j} \perp w_{j},        
    \end{equation*}
    we have that the $j$-th summand ranges over $T_{x_{j}}\widehat{\mathcal X}$ by~\Cref{prop: tangent space to a point xj}.
    Hence, the image of the full quotient Jacobian is
    \begin{equation*}
        \operatorname{Im} J \Phi_{p}^{\textrm{quot}}(\theta)=\sum_{j=1}^{K} T_{x_{j}}\widehat{\mathcal X}.
    \end{equation*}
\end{proof}

\paragraph{Why secant-varieties are relevant}
As previously discussed, each hidden neuron realises a semi-symmetric tensor of the form $v \otimes (ww^{\top})$ and is a point in the affine cone $\widehat{\mathcal X}$.
A width-$K$ network therefore produces a sum of $K$ such one-atom tensors
\begin{equation*}
    \Phi_{p}(\theta) = \sum_{j=1}^{K} v_{:j} \otimes (w_{j} w_{j}^{\top}).
\end{equation*}
Hence, a width-$K$ model class is precisely the set of all sums of $K$ points in $\widehat{\mathcal X}$.

To study this sum-set geometrically, it is convenient for us to pass to the projective space where tensors differing only by an overall non-zero scalar are identified.
This projectivisation of $\widehat{\mathcal X}$ is the Segre-Varonese variety
\begin{equation*}
    \mathcal X \subset \mathbb{P}(\mathcal Y)
\end{equation*}
whose points are precisely the projective classes of single hidden neuron atoms $[v \otimes (ww^{\top})].$
The corresponding width-$K$ model class is then encoded by \textit{$K$-th secant variety of $\mathcal X$}, defined by:
\begin{equation*}
    \sigma_{K}(\mathcal X) := \overline{\bigcup_{x_{1}, \dots, x_{K}\in \mathcal X} \langle x_{1}, \dots, x_{k} \rangle}.
\end{equation*}
Here $\langle x_{1}, \dots, x_{k} \rangle$ denotes the \textit{projective linear span} of the points $x_{1}, \dots, x_{K}\in \mathcal X$, i.e. all projective classes of linear combinations $[c_{1}x_{1} + \dots + c_{K}x_{K}]$.
The closure is included so that \textit{limiting sums} are also captured.
In this sense, the secant variety is the natural projective geometric model of width-$K$ outputs that we would like to study.

The relevance of this construction is that it allows us to relate the rank of the quotient Jacobian $\Phi_{p}$ to the dimension of the corresponding secant variety.
We can obtain this relationship in two steps.

First, let
\begin{equation*}
    x_{j}:= v_{:j} \otimes (w_{j}w_{j}^{\top}) \in \widehat{\mathcal X}, \quad y:= x_{1} + \dots x_{K} = \Phi_{p}(\theta).
\end{equation*}
By~\cref{prop: tangent space to a point xj}, we know that the $j$-th hidden neuron contributes exactly the tangent space $T_{x_{j}}\widehat{\mathcal X}$ to the quotient Jacobian.
Hence, the image of the quotient Jacobian is precisely
\begin{equation*}
    \operatorname{Im} J \Phi_{p}^{\textrm{quot}}(\theta) = T_{x_{1}}\widehat{\mathcal X} + \dots + T_{x_{K}}\widehat{\mathcal X}.
\end{equation*}

Second, \textit{Terracini's lemma} identifies this same tangent space sum with the tangent space to the secant model class at a general point.
More precisely:
\begin{theorem}[Terracini's lemma (informal)]
    If $x_{1}, \dots, x_{K} \in \widehat{\mathcal X}$ are in a general position and $y=x_{1} + \dots + x_{K}$ is a generic smooth point of the affine secant cone over $\sigma_{K}(\mathcal X)$, then
    \begin{equation*}
        T_{y} \widehat{\sigma_{K}(\mathcal X)} = T_{x_{1}}\widehat{\mathcal X} + \dots + T_{x_{K}}\widehat{\mathcal X}.
    \end{equation*}
    Equivalently, the tangent space at the generic point $y$ of the width-$K$ model class is simply the sum of the tangent spaces of the constituent one-atom varieties.
\end{theorem}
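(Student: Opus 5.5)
The plan is to prove Terracini's lemma through the addition map, following the same pattern as \Cref{prop: tangent space to a point xj}. Consider
\begin{equation*}
    \Sigma : \widehat{\mathcal X}^{K} \to \mathcal Y, \qquad (x_1,\dots,x_K) \mapsto x_1+\dots+x_K .
\end{equation*}
Its image is a dense, constructible subset of the affine cone $\widehat{\sigma_K(\mathcal X)}$, because $\widehat{\sigma_K(\mathcal X)}$ is by definition the Zariski closure of this image. Since $\widehat{\mathcal X}$ is irreducible (it is the image of the irreducible variety $\mathcal U\times\mathcal W$ under $f(v,w)=v\otimes ww^\top$), the product $\widehat{\mathcal X}^K$ and the closure of its image are irreducible as well.

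\textbf{Step 1: inclusion of the tangent-space sum.} At a point $(x_1,\dots,x_K)$ where every $x_j$ is a smooth point of $\widehat{\mathcal X}$, differentiate $\Sigma$ along curves $x_j(t)\subset\widehat{\mathcal X}$. This gives $d\Sigma(\dot x_1,\dots,\dot x_K)=\sum_j \dot x_j$. The image of $d\Sigma$ is therefore exactly $T_{x_1}\widehat{\mathcal X}+\dots+T_{x_K}\widehat{\mathcal X}$, and every such curve $\sum_j x_j(t)$ lies in $\widehat{\sigma_K(\mathcal X)}$. So whenever $y=\Sigma(x)$ is a smooth point we obtain the inclusion $\sum_j T_{x_j}\widehat{\mathcal X}\subseteq T_y\widehat{\sigma_K(\mathcal X)}$. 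Concretely, by \Cref{prop: tangent space to a point xj} the left-hand side is spanned by the explicit vectors $\beta\otimes w_jw_j^\top + v_{:j}\otimes(uw_j^\top+w_ju^\top)$.

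\textbf{Step 2: equality via generic submersivity.} For the reverse inclusion I will use generic smoothness (Sard's theorem in characteristic zero, or equivalently the constant-rank theorem on a dense open set). The rank of $d\Sigma$ is lower semicontinuous on the smooth locus of $\widehat{\mathcal X}^K$, so it attains its maximum $r$ on a nonempty Zariski-open set $O$. On $O$ the map has constant rank $r$, so by the constant-rank theorem the image of a neighbourhood is locally an $r$-dimensional submanifold. Since the image is dense in the irreducible variety $\widehat{\sigma_K(\mathcal X)}$, this gives $r=\dim\widehat{\sigma_K(\mathcal X)}$. At a general point $x\in O$ whose image $y$ is a smooth point, the subspace $\operatorname{Im} d\Sigma_x$ has dimension $r=\dim T_y\widehat{\sigma_K(\mathcal X)}$. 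Combined with the inclusion from Step 1, this yields equality. The phrases ``general position'' and ``generic smooth point'' in the statement are interpreted precisely as $x\in O$ and $y\in\Sigma(O)\cap\operatorname{Sm}(\widehat{\sigma_K(\mathcal X)})$. This intersection is nonempty because $\Sigma(O)$ contains a dense constructible set and the smooth locus is open and dense.

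\textbf{Main obstacle.} The delicate step is Step 2: showing that the dimension of the image equals the generic rank of $d\Sigma$. This needs the dominant map $\Sigma$ to be a submersion onto its image generically, i.e.\ the algebraic fact $\dim \overline{\Sigma(Z)}=\max\operatorname{rank} d\Sigma$ on the smooth locus of $Z$. I would cite generic smoothness over $\mathbb C$ and then pass to the real points. For the real points this requires that the real locus be Zariski dense, which holds here because $\mathcal U\times\mathcal W$ is real affine space and $\Sigma\circ f^{K}$ is a real polynomial map. Finally, Terracini is stated in the paper only informally for the real affine cone. Working with the polynomial parametrisation $\mathcal U^K\times\mathcal W^K\to\mathcal Y$ directly avoids all projective subtleties: the Jacobian rank of a real polynomial map is generically equal to the dimension of the Zariski closure of its image.
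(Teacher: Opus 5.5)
The paper gives no proof of this statement. It quotes Terracini's lemma informally as a classical black box, placed between \Cref{prop: tangent space to a point xj} (which is essentially your Step 1, $\operatorname{Im} J\Phi_p^{\operatorname{quot}}(\theta)=\sum_j T_{x_j}\widehat{\mathcal X}$) and the rank identity \eqref{eq: secant dimension}.

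Your argument is the standard proof via the addition map, or more cleanly via the polynomial parametrisation $(V,W)\mapsto\Phi_p(\theta)$, and it is correct. The differential has image $\sum_j T_{x_j}\widehat{\mathcal X}$. Curves in the image give the inclusion into the Zariski tangent space at $y$. Equality then follows once the generic rank $r$ equals $\dim\widehat{\sigma_K(\mathcal X)}$.

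Compared with the paper's bare citation, your route buys three things:
\begin{itemize}
\item A precise meaning for the hypotheses: ``general position'' becomes membership in the nonempty Zariski-open maximal-rank locus $O$, intersected with the preimage of the smooth locus. That intersection is nonempty because the source $\mathbb R^{K(p+d)}$ is irreducible and the image is Zariski dense.
\item A clean treatment of the real-versus-complex issue.
\item The observation that the generic-rank fact alone already gives $\operatorname{Rank} J\Phi_p^{\operatorname{quot}}(\theta)=\dim\widehat{\sigma_K(\mathcal X)}$. That rank statement is all \Cref{thm: generic secant-dim LLC formula} actually uses; smoothness of $y$ is needed only to identify the image with $T_y\widehat{\sigma_K(\mathcal X)}$.
\end{itemize}

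One step needs to be re-attributed. In Step 2, the constant-rank theorem only shows that $\widehat{\sigma_K(\mathcal X)}$ contains an $r$-dimensional submanifold, i.e.\ $r\le\dim\widehat{\sigma_K(\mathcal X)}$. Step 1 already gives this at smooth points.

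The direction that actually closes the argument is $\dim\widehat{\sigma_K(\mathcal X)}\le r$, and it does not follow from density plus constant rank on $O$. You have to control the whole image, including the image of the rank-deficient locus. That is a Sard-type statement: a map whose differential has rank at most $r$ everywhere cannot have an image of dimension larger than $r$. Algebraically, it is the Jacobian criterion: the generic Jacobian rank of a polynomial map equals the transcendence degree of its coordinate functions, i.e.\ the dimension of the Zariski closure of its image.

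So drop ``equivalently the constant-rank theorem'' and let the fact you cite in your final paragraph carry Step 2. Also read ``constructible'' as ``semialgebraic'' over $\mathbb R$.
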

Hence, we obtain the result that
\begin{equation*}
    \operatorname{Im} J \Phi_{p}^{\operatorname{quot}}(\theta) = T_{y} \widehat{\sigma_{K}(\mathcal X)},
\end{equation*}
and therefore
\begin{equation}\label{eq: secant dimension}
    \operatorname{Rank} J \Phi_{p}^{\operatorname{quot}}(\theta) = \dim T_{y} \widehat{\sigma_{K}(\mathcal X)} = \dim \widehat{\sigma_{K}(\mathcal X)},
\end{equation}
where we have used the assumption that $y$ is a generic smooth point.
Equivalently, the generic rank of the quotient Jacobian is reduced to the problem of computing the dimension of the secant variety associated with the one-atom model class.

\paragraph{LLC results}

\Cref{eq: secant dimension} provides us with the identifiable dimension of the local parameter space at a generic $\theta$.
By applying the reparametrisation theorem (~\Cref{thm: reparam}) that links the LLC to half the number of local identifiable dimensions under MSE, we obtain the following generic formula.

\begin{theorem}[Generic secant-dimension formula for LLC]\label{thm: generic secant-dim LLC formula}
    Let $f_{\theta}$ be a quadratic network with architecture triple $(d,p,K)$, parametrised by $\theta =(V,W)$, that realises the parameter map:
    \begin{equation*}
            \Phi_p(\theta)
    =
    \sum_{j=1}^K v_{:j}\otimes (w_j w_j^\top)
    \in \mathcal Y, \quad \mathcal{Y}:= \mathbb{R}^{p} \otimes \operatorname{Sym} (\mathbb{R}^{d \times d}).
    \end{equation*}
    Let
    \begin{equation*}
        \mathcal{X}= \{[v\otimes(ww^{\top})]: v\in \mathcal{U}, w\in \mathcal{W}\}
        \subset \mathbb{P}(\mathcal{Y})
    \end{equation*}
    denote the projective variety of one-hidden neuron atoms, and $\widehat{\sigma_{K}(\mathcal{X})}$ denote the affine cone over its $K$-th secant variety.
    Assume the following:
    \begin{enumerate}
        \item $\theta$ is a generic parameter point;
        \item $w_{j} \not = 0$ and $v_{:j} \not =0$ for all $j=1,\dots, K$;
        \item the atoms $x_{1},\dots, x_{K}$ are in a general position and $y=x_{1}+\dots + x_{K}$ is a generic smooth point on $\widehat{\sigma_{K}(\mathcal X)}$;
        \item the assumptions of the reparameterisation theorem for MSE (\Cref{thm: reparam}) hold at $y=\Phi_{p}(\theta)$.
    \end{enumerate}
    Then
    \begin{equation*}
        \operatorname{Rank} J \Phi_{p}^{\operatorname{quot}}(\theta) = \dim \widehat{\sigma_{K}(\mathcal X)},
    \end{equation*}
    and consequently
    \begin{equation*}
        \lambda = \frac{1}{2} \dim \widehat{\sigma_{K}(\mathcal X)}.
    \end{equation*}
\end{theorem}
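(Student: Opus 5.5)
The plan has two parts: identify the rank of the quotient Jacobian with the secant dimension, then pass from that rank to the LLC through the reparametrisation theorem. The ingredients are already in place in this appendix. By the proposition computing $\operatorname{Im} J\Phi_p^{\operatorname{quot}}(\theta)$, assumption 2 ($w_j\neq 0$, $v_{:j}\neq 0$) gives
\begin{equation*}
\operatorname{Im} J\Phi_p^{\operatorname{quot}}(\theta)=\sum_{j=1}^K T_{x_j}\widehat{\mathcal X},
\end{equation*}
where each $T_{x_j}\widehat{\mathcal X}=\mathcal C_j$ by \cref{prop: tangent space to a point xj}. The quotient by the scaling symmetry $w_j\mapsto\alpha w_j$, $v_{:j}\mapsto\alpha^{-2}v_{:j}$ removes no directions from the image. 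This is because $\widehat{\Delta v_{:j}}=\Delta v_{:j}+2\alpha_j v_{:j}$ ranges over all of $\mathcal U$. Hence $\operatorname{Im} J\Phi_p(\theta)=\operatorname{Im} J\Phi_p^{\operatorname{quot}}(\theta)$.

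Next I would invoke Terracini's lemma under assumption 3. It identifies $\sum_j T_{x_j}\widehat{\mathcal X}$ with $T_y\widehat{\sigma_K(\mathcal X)}$ at $y=\Phi_p(\theta)$. Since $y$ is a smooth point, $\dim T_y\widehat{\sigma_K(\mathcal X)}=\dim\widehat{\sigma_K(\mathcal X)}$, which yields the first claim, $\operatorname{Rank} J\Phi_p^{\operatorname{quot}}(\theta)=\dim\widehat{\sigma_K(\mathcal X)}$. This is \cref{eq: secant dimension}. To be self-contained, I would also justify the inclusion ``$\subseteq$'' in Terracini directly. The map $(v_{:j},w_j)_j\mapsto\sum_j v_{:j}\otimes w_jw_j^\top$ has image dense in $\widehat{\sigma_K(\mathcal X)}$. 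Its differential therefore lands in the tangent space at smooth image points. Genericity of $\theta$ (assumption 1) together with Sard's theorem gives equality of rank with the dimension of the image.

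For the LLC, I would replace $\Theta$ near $\theta$ by the manifold $\widehat{\sigma_K(\mathcal X)}$ near $y$. By the constant rank theorem, $\Phi_p$ has locally constant rank $r:=\dim\widehat{\sigma_K(\mathcal X)}$ near a generic $\theta$. Lower semicontinuity of rank, together with the fact that rank is bounded by the dimension of the smooth image, forces this. Consequently $\Phi_p$ is a submersion from a neighbourhood of $\theta$ onto a neighbourhood $N$ of $y$ in the smooth $r$-dimensional manifold $\widehat{\sigma_K(\mathcal X)}$. Under MSE the excess risk factors as $K(\theta)=\tilde K(\Phi_p(\theta))$, where $\tilde K(Q)=\tfrac12\mathbb E\|(\text{outputs of }Q)-f^\star\|^2$ is a quadratic form in $Q-y$. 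Applying \cref{thm: reparam} with $\phi=\Phi_p$ then reduces the computation to the LLC of $\tilde K$ restricted to $N$.

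The main obstacle is showing that this reparametrised model is regular, so that \cref{cor: reparam LLC} gives $\lambda=r/2$. This requires the Hessian of $\tilde K$ to be positive definite on $T_y\widehat{\sigma_K(\mathcal X)}$, i.e.\ that no nonzero tangent direction $\Delta=(\Delta Q_1,\dots,\Delta Q_p)$ satisfies $x^\top\Delta Q_k x=0$ for all data $x$ and all $k$. I would take this to be exactly the content of assumption 4, as in the remark following \cref{thm: LLC QNN regressor under-param}: the Fisher–Gram form $\mathcal Q(\Delta,\Delta)=\mathbb E\sum_k(x^\top\Delta Q_k x)^2$ is non-degenerate on the tangent space. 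Granting this, $\tilde K$ restricted to $N$ has a nondegenerate quadratic minimum in $r$ local coordinates. The zeta function then has its largest pole at $-r/2$ with multiplicity one, and so $\lambda=\tfrac12\dim\widehat{\sigma_K(\mathcal X)}$.
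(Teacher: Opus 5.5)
Your proposal is correct and follows the same route as the paper. You show $\operatorname{Im} J\Phi_p^{\operatorname{quot}}(\theta)=\sum_j T_{x_j}\widehat{\mathcal X}$, apply Terracini's lemma at the generic smooth point $y$, and then use \Cref{thm: reparam} with \Cref{cor: reparam LLC} to get $\lambda=\tfrac12\dim\widehat{\sigma_K(\mathcal X)}$. Two of your steps are left implicit in the paper: the constant-rank argument that makes $\Phi_p$ a submersion onto a local chart of $\widehat{\sigma_K(\mathcal X)}$, and your reading of assumption 4 as positive definiteness of the Fisher--Gram form on $T_y\widehat{\sigma_K(\mathcal X)}$. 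The latter is a genuine hypothesis rather than a formality: with one-hot modular-addition inputs, the within-block off-diagonal entries of each $Q_k$ do not affect the loss, so this form is degenerate on $\mathcal Y$.
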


So determining the LLC boils down to determining the expected dimension of the secant variety $\widehat{\sigma_{k}(\mathcal X)}$ at a generic smooth point $y$.
Consider the following: each hidden neuron contributes $d+p-1$ quotient directions while the ambient space has dimension
\begin{equation*}
    pD = p \frac{d(d+1)}{2}.
\end{equation*}
Hence the expected dimension at a generic point is given by
\begin{equation*}
    r_{\textrm{exp}} := \min \left( K(p+d-1), p \frac{d(d+1)}{2} \right).
\end{equation*}
We say that the architecture $(p,d,K)$ is \textit{non-defective} if
\begin{equation*}
    \dim \widehat{\sigma_{K}(\mathcal{X})} = r_{\textrm{exp}},
\end{equation*}
and defective otherwise.
Under this assumption, the LLC takes the explicit closed form.
\begin{corollary}\label{cor: generic LLC}
    Under the assumptions of~\Cref{thm: generic secant-dim LLC formula}, and assuming that the secant variety $\sigma_{K}(\mathcal X)$ is non-defective, then
    \begin{equation*}
        \operatorname{Rank} J \Phi_{p}^{\operatorname{quot}}(\theta) = \min \left( K(p+d-1), p \frac{d(d+1)}{2} \right),
    \end{equation*}
    and consequently
    \begin{equation*}
        \lambda = \frac{1}{2} \min \left( K(p+d-1), p \frac{d(d+1)}{2} \right).
    \end{equation*}    
\end{corollary}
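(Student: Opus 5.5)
The plan is to obtain the corollary as a direct specialisation of \Cref{thm: generic secant-dim LLC formula}, substituting the non-defectivity hypothesis for the unknown secant dimension. First, I invoke \Cref{thm: generic secant-dim LLC formula} under its four hypotheses. Those hypotheses are genericity of $\theta$, non-vanishing atoms, general position with $y$ a generic smooth point of $\widehat{\sigma_K(\mathcal X)}$, and the assumptions of \Cref{thm: reparam}. The theorem then gives
\[
\operatorname{Rank} J\Phi_p^{\operatorname{quot}}(\theta)=\dim \widehat{\sigma_K(\mathcal X)}
\qquad\text{and}\qquad
\lambda=\tfrac12\dim\widehat{\sigma_K(\mathcal X)}.
\]
So the whole task reduces to identifying $\dim\widehat{\sigma_K(\mathcal X)}$ with $r_{\textrm{exp}}=\min\bigl(K(p+d-1),\,pD\bigr)$.

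Second, I would check that $r_{\textrm{exp}}$ is the correct upper bound, so that the non-defectivity assumption is a meaningful equality rather than an arbitrary definition. This uses two facts already established. By the proposition computing $\dim\mathcal C_j=d+p-1$, together with \Cref{prop: tangent space to a point xj} and the proposition $\operatorname{Im}J\Phi_p^{\operatorname{quot}}=\sum_j T_{x_j}\widehat{\mathcal X}$, this image is a sum of $K$ subspaces, each of dimension $d+p-1$. Its dimension is therefore at most $K(d+p-1)$. Since it also sits inside $\mathcal Y$, its dimension is at most $\dim\mathcal Y=pD$. Combining these with Terracini's lemma (the step already used inside \Cref{thm: generic secant-dim LLC formula}) gives $\dim\widehat{\sigma_K(\mathcal X)}\le r_{\textrm{exp}}$.

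Third, by definition non-defectivity means that this bound is attained, $\dim\widehat{\sigma_K(\mathcal X)}=r_{\textrm{exp}}$. Substituting into the first step yields both displayed equalities. Halving gives the stated formula for $\lambda$. The two-case form in \Cref{cor: main LLC corollary (paper)} is just this $\min$ split according to whether $K(d+p-1)<p\,d(d+1)/2$.

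The only point needing care is the consistency of the hypotheses. Non-defectivity is a statement about the generic point of the secant cone, whereas \Cref{thm: generic secant-dim LLC formula} is applied at the specific $y=\Phi_p(\theta)$. This is reconciled by hypothesis 3: because $y$ is a generic smooth point, $\dim T_y\widehat{\sigma_K(\mathcal X)}$ equals the dimension of the irreducible cone $\widehat{\sigma_K(\mathcal X)}$. I expect this to be the main subtlety, but since genericity is assumed outright, no genuine obstacle remains and the corollary follows formally.
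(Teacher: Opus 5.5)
Your proposal is correct and matches the paper's argument, which is simply substituting the non-defectivity definition $\dim\widehat{\sigma_K(\mathcal X)}=r_{\textrm{exp}}$ into \Cref{thm: generic secant-dim LLC formula}. Your extra check that $\dim\widehat{\sigma_K(\mathcal X)}\le r_{\textrm{exp}}$ is not needed for the corollary, but it is a sound sanity check that the paper leaves implicit; it uses $\dim\mathcal C_j=d+p-1$, the ambient bound $pD$, and Terracini's lemma.
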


\paragraph{Regime splitting}
Considering this projective geometry approach, two \textit{regimes} naturally emerge in our analysis according to the comparison between the expected quotient dimension $K(p+d-1)$ and the ambient dimension $pD$.

\begin{corollary}[Subabundant regime]
    Suppose that the assumptions of \Cref{cor: generic LLC} hold, where the architecture is non-defective and satisfies
    \begin{equation*}
        K(d+p-1) < p \frac{d(d+1)}{2}.
    \end{equation*}
    Then
    \begin{equation*}
        \operatorname{Rank} J \Phi_{p}^{\operatorname{quot}}(\theta) = K(p+d-1), \quad \lambda = \frac{K(p+d-1)}{2}.
    \end{equation*}
\end{corollary}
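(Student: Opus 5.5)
This is a specialisation of \Cref{cor: generic LLC}, so the plan is to evaluate the minimum there under the stated inequality and then appeal to \Cref{thm: generic secant-dim LLC formula}. Under the assumptions of \Cref{cor: generic LLC}, non-defectiveness gives
\begin{equation*}
\dim \widehat{\sigma_{K}(\mathcal X)} = r_{\textrm{exp}} = \min\left(K(p+d-1),\, p\tfrac{d(d+1)}{2}\right).
\end{equation*}
The subabundant hypothesis $K(d+p-1) < p\,d(d+1)/2$ says the first argument is the strictly smaller one, so $r_{\textrm{exp}} = K(p+d-1)$.

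Next I would use \Cref{thm: generic secant-dim LLC formula} to pass from this dimension to the Jacobian rank and to the LLC. Its proof gives
\begin{equation*}
\operatorname{Rank} J\Phi_p^{\operatorname{quot}}(\theta) = \dim T_y\widehat{\sigma_K(\mathcal X)} = \dim\widehat{\sigma_K(\mathcal X)}.
\end{equation*}
Here the image of the quotient Jacobian equals $\sum_j T_{x_j}\widehat{\mathcal X}$, and Terracini's lemma together with the genericity and smoothness of $y$ identifies this sum with $T_y\widehat{\sigma_K(\mathcal X)}$. Substituting $r_{\textrm{exp}}$ gives $\operatorname{Rank} J\Phi_p^{\operatorname{quot}}(\theta)=K(p+d-1)$.

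The reparametrisation theorem (\Cref{thm: reparam} with \Cref{cor: reparam LLC}) then gives $\lambda = \tfrac12 K(p+d-1)$. It applies because the map to the identifiable coordinates is a submersion onto the smooth secant cone near $y$, and the MSE excess risk is nondegenerate on that tangent space by assumption 4.

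As a sanity check, the subabundant count is consistent with the direct-sum picture. Each $T_{x_j}\widehat{\mathcal X}=\mathcal C_j$ has dimension exactly $d+p-1$ by the single-neuron proposition. The total $K(d+p-1)$ is therefore attained precisely when the $K$ tangent spaces are independent, which is what non-defectiveness below the ambient dimension asserts.

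No step here is a genuine obstacle, since the non-defectiveness assumption carries all the algebraic-geometric content. The only point needing care is the reparametrisation step. One must check that the image of $\Phi_p$ near $\theta$ is locally a smooth manifold of dimension $K(p+d-1)$, so that \Cref{cor: reparam LLC} applies with $\dim V = K(p+d-1)$ rather than the ambient $pD$. This follows from the constant-rank theorem, because the rank $K(p+d-1)$ is maximal and hence locally constant at generic $\theta$.
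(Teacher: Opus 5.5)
Your proposal is correct and follows the paper's argument. Under the subabundant inequality the minimum in \Cref{cor: generic LLC} is $K(p+d-1)$, and non-defectiveness together with \Cref{thm: generic secant-dim LLC formula} then gives both the Jacobian rank and $\lambda=\tfrac12 K(p+d-1)$. Your extra remarks (the direct-sum reading of the tangent spaces $\mathcal C_j$, and the constant-rank check that the image is locally a smooth manifold of that dimension) are sound details that the paper leaves implicit.
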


\begin{proof}
    In the subabundant regime, the expected secant dimension is precisely $K(d+p-1)$.
    Non-defectivity implies that the actual secant dimension is equal to the expected secant dimension.
    The result follows from the previous corollary.
\end{proof}

\begin{corollary}[Superabundant Regime]
    Suppose that the assumptions of \Cref{cor: generic LLC} hold, where the architecture is non-defective and satisfies
    \begin{equation*}
        K(d+p-1) \geq p \frac{d(d+1)}{2}.
    \end{equation*}
    Then
    \begin{equation*}
        \operatorname{Rank} J \Phi_{p}^{\operatorname{quot}}(\theta) = p\frac{d(d+1)}{2}, \quad \lambda = p\frac{d(d+1)}{4}.
    \end{equation*}    
\end{corollary}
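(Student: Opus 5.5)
The plan is to derive the Superabundant Regime corollary directly from \Cref{cor: generic LLC}, mirroring the proof of the subabundant case. By \Cref{thm: generic secant-dim LLC formula}, under the stated genericity and smoothness assumptions, $\operatorname{Rank} J\Phi_p^{\operatorname{quot}}(\theta) = \dim \widehat{\sigma_K(\mathcal X)}$. Non-defectivity then gives $\dim \widehat{\sigma_K(\mathcal X)} = r_{\textrm{exp}} = \min\bigl(K(p+d-1),\, pD\bigr)$ with $D = d(d+1)/2$. Under the hypothesis $K(d+p-1) \geq pD$, this minimum equals $pD$. Hence the rank is $p\,d(d+1)/2$, and $\lambda = \tfrac12 \cdot p\,d(d+1)/2 = p\,d(d+1)/4$.

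To make the argument robust rather than purely a bookkeeping step, I would add a consistency check. The inclusion $\operatorname{Im} J\Phi_p^{\operatorname{quot}}(\theta) = \sum_j T_{x_j}\widehat{\mathcal X} \subseteq \mathcal Y$ bounds the rank above by $\dim\mathcal Y = pD$ for every $\theta$. So in this regime, non-defectivity means exactly that the tangent spaces of the atoms span the whole ambient space. Equivalently, $\widehat{\sigma_K(\mathcal X)}$ fills $\mathcal Y$, and $\Phi_p$ is a submersion at $\theta$.

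This lets us apply \Cref{cor: reparam LLC} with $V = \mathcal Y$: the reparametrised model in the identifiable coordinates $(Q_1,\dots,Q_p)$ is a linear-in-parameters MSE regression. That model is regular provided the Gram form $\mathbb E[\,\|(x^\top \Delta Q_k x)_k\|^2]$ is positive definite on $\mathcal Y$, which is part of assumption (4), the hypotheses of \Cref{thm: reparam}. The LLC is then $\dim\mathcal Y/2 = pD/2$.

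The only genuine obstacle is this last point: whether the population excess risk is non-degenerate on all of $\mathcal Y$. For modular-addition inputs the data span only a $(2p-1)$-dimensional subspace, so some quadratic forms are invisible to the data, and $\tilde K$ would fail to be regular in these coordinates. I would not try to resolve this here. Instead I would state explicitly that assumption (4) is where the curvature form is required to be positive definite on $\operatorname{Im} J\Phi_p^{\operatorname{quot}}(\theta) = \mathcal Y$, exactly as in \cref{remark: LLC regressor extension to C^2}. Under that assumption the corollary follows immediately.
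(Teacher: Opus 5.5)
Your proof is correct and matches the paper's: in this regime $r_{\textrm{exp}}=\min\bigl(K(p+d-1),pD\bigr)$ saturates at $pD$, non-defectivity makes the secant dimension equal to $pD$, and the generic secant-dimension theorem then gives the rank and $\lambda = pD/2$. Your further remarks go beyond the paper's one-line proof but are accurate: here $\Phi_p$ is a genuine submersion onto $\mathcal Y$, and regularity of the MSE model on all of $\mathcal Y$ has to be read into assumption (4), which in fact fails for modular-addition data, since their quadratic evaluations cannot detect every element of $\operatorname{Sym}(\mathbb{R}^{2p\times 2p})$.
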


\begin{proof}
    In the superabundant regime, the expected secant dimension saturates the ambient space dimension $pD$.
    Again, non-defectivity implies that the actual secant dimension equals $pD$, and the LLC formula follows immediately.
\end{proof}

\section{Additional Experiments} \label{app: additional experiments}
Here we include more experiments conducted to further support the claims made in the main part of the paper as well as presenting, for reproducibility purposes, the set-up used in the experiments.
\subsection{Experimental setup}\label{app: experiments/setup}
All of the specific set-ups can be found in the repository \url{https://anonymous.4open.science/r/geom_phase_transitions-DF59/}. Indeed, under the results folder, each run folder contains two csv files. The params.csv file contains a list of the all model hyperparameters, seeds, dataset information, etc. The loss\_data.csv file contains a summary of the training and validation losses, training and validation accuracies, and LLC values during training. 

As a baseline, we use dataset group size $p = 53$, $100{,}000$ training epochs with checkpoints every $100$ epochs, weight decay $10^{-5}$, training fraction $0.4$, batch size $128$, and random seed $0$.

We use SGLD with step size $\epsilon = 10^{-4}$, inverse temperature $\beta = 30/n$, localization strength $\gamma = 5$, $3$ independent chains, $100$ burn-in steps, and $600$ draws per chain. According to the sensitivity of the LLC estimator, we keep sampler hyperparameters fixed across all runs and interpret absolute LLC values up to an estimator-dependent scale and focus on predicted scaling relationships and within-estimator comparisons.

\subsection{LLC tracks the emergence of generalisation- varying hyperparameters}
In the main paper we show an example of a plot showing the training and validation loss, as well as the LLC curve, during training for a model trained for $p=53$, learning rate 0.0001, weight decay 0.00001, batch size 128 and hidden dimension 1024. Here we present similar plots to show that the relationships explained in section \ref{sec::experiments} are robust to variations in datasets, model hyperparameters and training parameters. 
\subsubsection{Varying the datasets: $p$}
    \begin{figure}[H]
    \centering
      \begin{subfigure}[t]{0.32\linewidth}
        \centering
        \includegraphics[width=\linewidth]{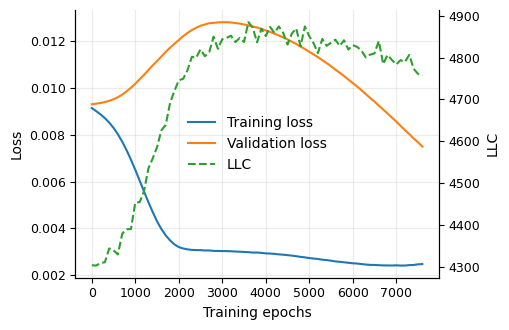}
        \caption{ $p=53$}
      \end{subfigure}\hfill
      \begin{subfigure}[t]{0.32\linewidth}
        \centering
        \includegraphics[width=\linewidth]{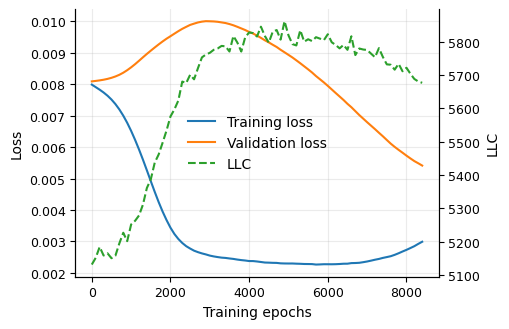}
        \caption{$p=61$}
      \end{subfigure}\hfill
      \begin{subfigure}[t]{0.32\linewidth}
        \centering
        \includegraphics[width=\linewidth]{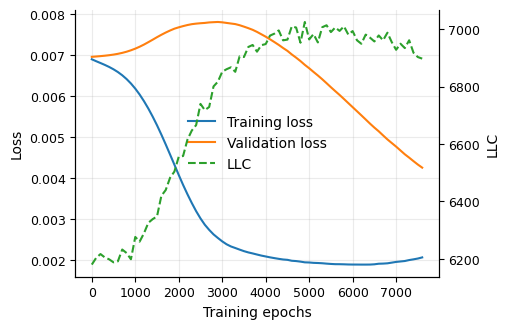}
        \caption{ $p=71$}
      \end{subfigure}
      \caption{Tracking LLC curves and the parallel evolution of generalisation for different values of $p$. }
    \end{figure}
\subsubsection{Varying the model: dimension of hidden layer}
    \begin{figure}[H]
    \centering
      \begin{subfigure}[t]{0.32\linewidth}
        \centering
        \includegraphics[width=\linewidth]{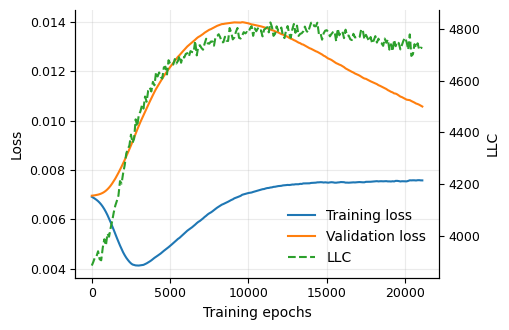}
        \caption{ Dimension of hidden layer $=600$}
      \end{subfigure}\hfill
      \begin{subfigure}[t]{0.32\linewidth}
        \centering
        \includegraphics[width=\linewidth]{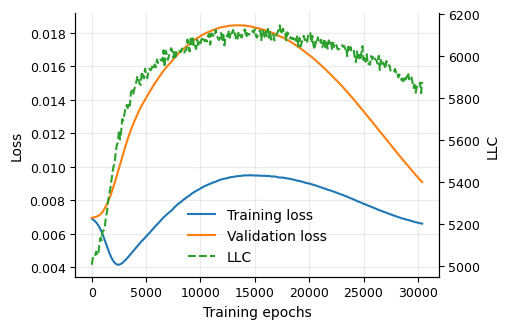}
        \caption{Dimension of hidden layer $=800$}
      \end{subfigure}\hfill
      \begin{subfigure}[t]{0.32\linewidth}
        \centering
        \includegraphics[width=\linewidth]{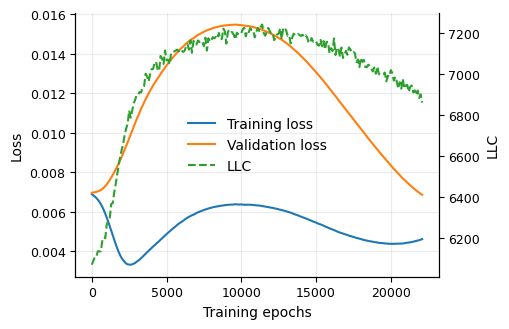}
        \caption{Dimension of hidden layer $=1000$}
      \end{subfigure}
      \caption{Tracking LLC curves and the parallel evolution of generalisation for different dimensions of the hidden layer. }
    \end{figure}

\subsubsection{Varying the training parameters: learning rate and weight decay}
    \begin{figure}[H]
    \centering
      \begin{subfigure}[t]{0.32\linewidth}
        \centering
        \includegraphics[width=\linewidth]{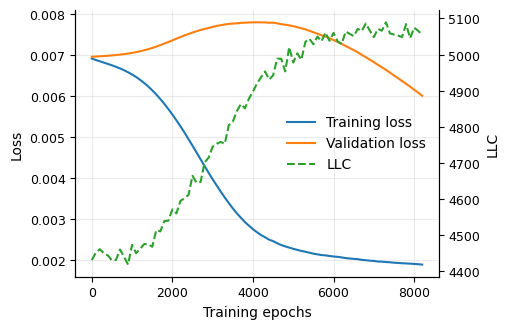}
        \caption{ Weight decay $=0.0001$}
      \end{subfigure}\hfill
      \begin{subfigure}[t]{0.32\linewidth}
        \centering
        \includegraphics[width=\linewidth]{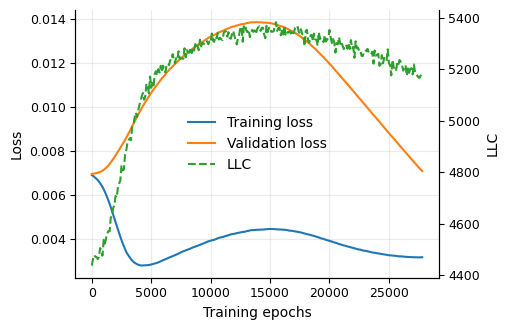}
        \caption{Weight decay $=0.00005$}
      \end{subfigure}\hfill
      \begin{subfigure}[t]{0.32\linewidth}
        \centering
        \includegraphics[width=\linewidth]{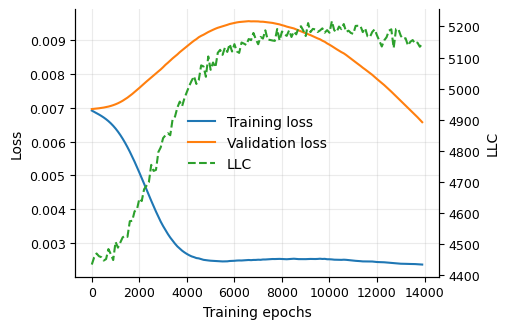}
        \caption{ Weight decay$=0.00001$}
      \end{subfigure}
      \caption{Tracking LLC curves and the parallel evolution of generalisation for different weight decays. }
    \end{figure}

    \begin{figure}[H]
    \centering
      \begin{subfigure}[t]{0.32\linewidth}
        \centering
        \includegraphics[width=\linewidth]{images/tracking/lr=0.0001_weight_decay=1e-05.png}
        \caption{ Learning rate $=0.0001$ }
      \end{subfigure}\hfill
      \begin{subfigure}[t]{0.32\linewidth}
        \centering
        \includegraphics[width=\linewidth]{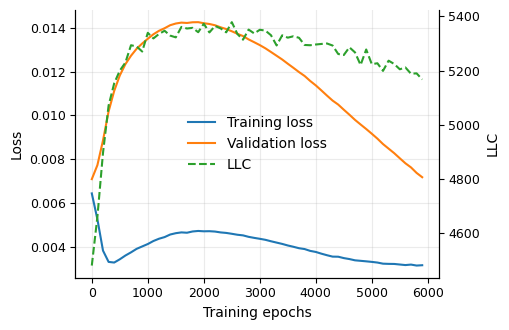}
        \caption{Learning rate $=0.001$}
      \end{subfigure}\hfill
      \begin{subfigure}[t]{0.32\linewidth}
        \centering
        \includegraphics[width=\linewidth]{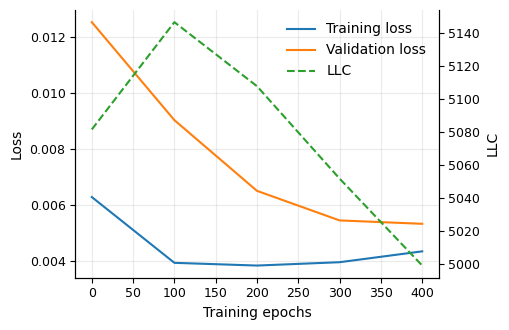}
        \caption{Learning rate $=0.01$}
      \end{subfigure}
      \caption{Tracking LLC curves and the parallel evolution of generalisation for different learning rates. }
    \end{figure}

\subsection{LLC explains the effect of learning rate on grokking severity}\label{appendix: subsection: grokking severity}

\begin{figure}[h]
    \centering
    \includegraphics[width=0.5\linewidth]{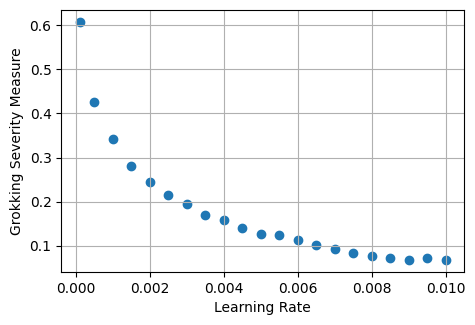}
    \caption{The same model is trained with different learning rates. The learning rate is plotted against its GSM.}
    \label{fig: lr_vs_grokking}
\end{figure}

To quantify the effect of optimisation hyperparameters on delayed generalisation, we introduce a grokking severity measure (GSM), which captures the cumulative gap between memorisation and generalisation, conditional on eventual successful generalisation. If $a_T(x),a_V(x)$ are accuracies of the model at training epoch $x\in [0,T]$ with respect to the training and validation datasets respectively, we define 
\begin{align*}
    \operatorname{GSM} = \frac{\mathbbm{1}_{\{a_V(T) \geq 0.95\}}}{T}\sum_{x=1}^T|a_T(x)-a_V(x)|.
\end{align*} 
Thus, GSM is large when a model memorises early but spends a long time before achieving strong validation performance, and vanishes for runs that do not eventually generalise.
Figure \ref{fig: lr_vs_grokking} shows that GSM decreases markedly as the learning rate increases. Similar trends for other hyperparameter settings are reported in Appendix \ref{appendix: lr_vs_maxLLC}. We interpret this trend through the lens of SLT. In Appendix \ref{appendix: lr_vs_maxLLC}, we show that larger learning rates are also associated with smaller peak LLC values during training. Taken together, these observations are consistent with an SLT-based picture in which larger learning rates bias optimisation away from sharper, higher-LLC low-loss regions, thereby reducing the time spent in a memorising-but-not-yet-generalising regime. We stress, however, that this is an interpretive connection rather than a direct derivation of SGD dynamics from SLT.

\subsection{LLC explains the effect of learning rate on grokking severity- further experiments} \label{appendix: lr_vs_maxLLC}
We repeated the experiment with varying values of weight decay to give some robustness to the analysis conducted. 
    \begin{figure}[H]
    \centering
      \begin{subfigure}[t]{0.32\linewidth}
        \centering
        \includegraphics[width=\linewidth]{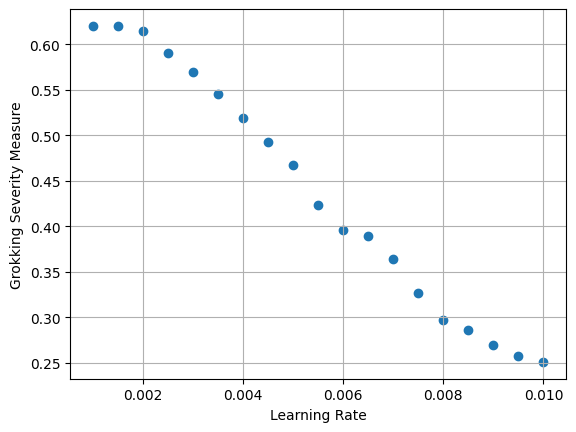}
        \caption{ Weight decay $=0$}
      \end{subfigure}\hfill
      \begin{subfigure}[t]{0.32\linewidth}
        \centering
        \includegraphics[width=\linewidth]{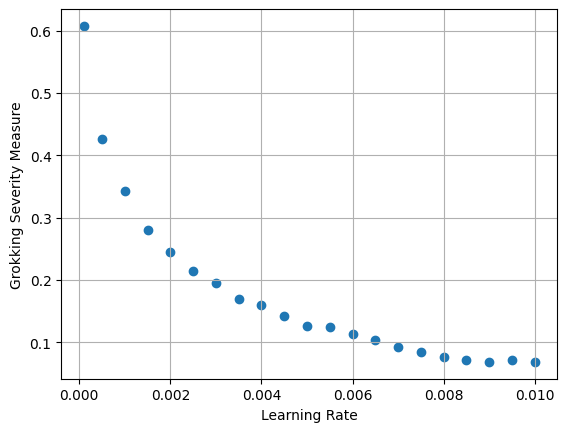}
        \caption{Weight decay $=0.0001$}
      \end{subfigure}\hfill
      \begin{subfigure}[t]{0.32\linewidth}
        \centering
        \includegraphics[width=\linewidth]{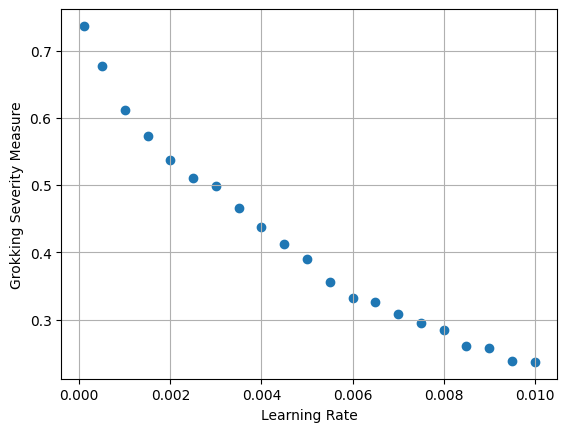}
        \caption{ Weight decay$=0.00001$}
      \end{subfigure}
      \caption{Relationship between the learning rate and the severity of grokking for different weight decays. }
      \end{figure}

We also examined the relationship between the varying learning rate and the maximum point of the LLC curve.
      
    \begin{figure}[H]
    \centering
      \begin{subfigure}[t]{0.32\linewidth}
        \centering
        \includegraphics[width=\linewidth]{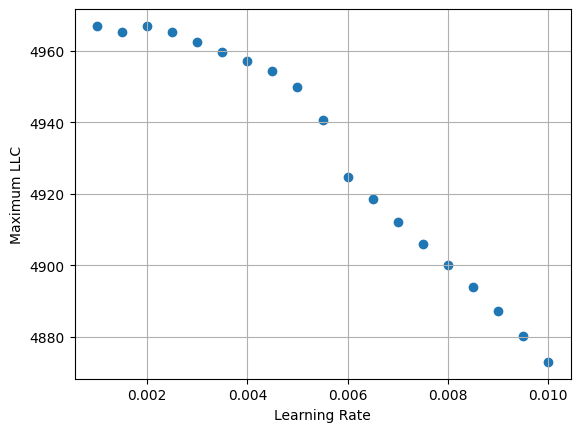}
        \caption{ Weight decay $=0$}
      \end{subfigure}\hfill
      \begin{subfigure}[t]{0.32\linewidth}
        \centering
        \includegraphics[width=\linewidth]{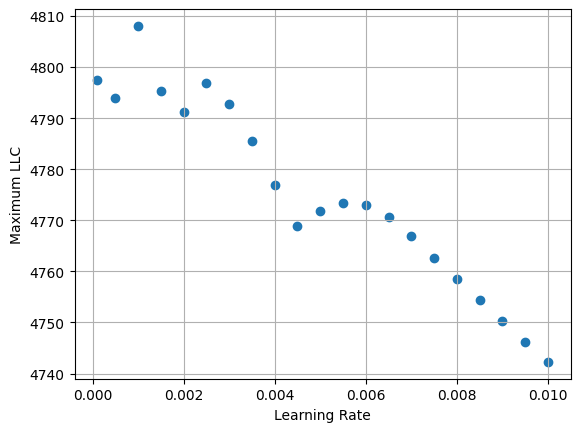}
        \caption{Weight decay $=0.0001$}
      \end{subfigure}\hfill
      \begin{subfigure}[t]{0.32\linewidth}
        \centering
        \includegraphics[width=\linewidth]{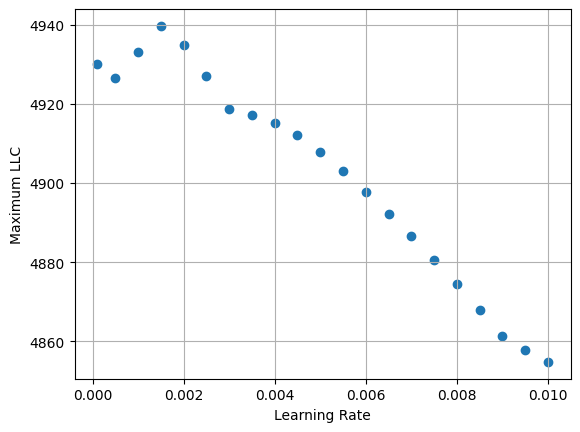}
        \caption{ Weight decay$=0.00001$}
      \end{subfigure}
      \caption{Relationship between the learning rate and the maximum LLC achieved during training. }
    \end{figure}

\section{Hutchinson trace as a proxy for average local curvature}\label{appen:hutchinson}

To complement the LLC analysis, we also track the \emph{Hutchinson trace estimator} of the Hessian trace \citep{hutchinson1990stochastic}. Let
\[
H(\theta) := \nabla_\theta^2 \mathcal{L}(\theta) \in \mathbb{R}^{P \times P}
\]
denote the Hessian of the loss with respect to the model parameters. If $\mathcal{L}$ is twice continuously differentiable, then for any perturbation $\delta \in \mathbb{R}^P$ we have the second-order expansion
\[
\mathcal{L}(\theta+\delta)
=
\mathcal{L}(\theta)
+
\nabla_\theta \mathcal{L}(\theta)^\top \delta
+
\frac{1}{2}\delta^\top H(\theta)\delta
+
o(\|\delta\|^2).
\]
Hence, near a critical point $\theta^\star$ with $\nabla_\theta \mathcal{L}(\theta^\star)\approx 0$, the local geometry of the loss is governed to second order by $H(\theta^\star)$.

Let $\lambda_1,\dots,\lambda_P$ denote the eigenvalues of $H(\theta^\star)$. If $\theta^\star$ is a local minimum and $H(\theta^\star)\succeq 0$, then
\[
\Tr(H(\theta^\star)) = \sum_{i=1}^P \lambda_i \geq 0,
\]
so the trace measures the \emph{total local curvature} of the basin. Equivalently, if $u$ is uniformly distributed on the unit sphere $\mathbb{S}^{P-1}$, then
\[
\mathbb{E}\big[u^\top H(\theta^\star)u\big] = \frac{1}{P}\Tr(H(\theta^\star)),
\]
and if $\varepsilon \sim \mathcal{N}(0,\sigma^2 I_P)$, then
\[
\mathbb{E}\big[\varepsilon^\top H(\theta^\star)\varepsilon\big] = \sigma^2 \Tr(H(\theta^\star)).
\]
Therefore, in the near-minimum regime, $\Tr(H)$ can be interpreted as the \emph{average curvature under isotropic perturbations}. In particular, smaller values of $\Tr(H)$ correspond to flatter local basins on average.

We stress, however, that $\Tr(H)$ is not a complete notion of flatness. Away from a local minimum it is only a \emph{signed} average curvature, so positive and negative eigenvalues may cancel. For example, the function $f(x,y)=x^2-y^2$ has Hessian
\[
\nabla^2 f(x,y)=
\begin{pmatrix}
2 & 0\\
0 & -2
\end{pmatrix},
\qquad
\Tr(\nabla^2 f)=0,
\]
despite being highly non-flat. Thus, throughout this appendix, we interpret the Hessian trace as a proxy for average local curvature, rather than as a universal notion of flatness.

The trace can be estimated efficiently without forming the Hessian explicitly. Let $v\in\mathbb{R}^P$ be a random vector satisfying
\[
\mathbb{E}[vv^\top]=I_P,
\]
for example a standard Gaussian vector or a Rademacher vector with independent coordinates taking values $\pm1$ with equal probability. Then
\[
\mathbb{E}[v^\top H v]
=
\mathbb{E}[\Tr(v^\top H v)]
=
\mathbb{E}[\Tr(Hvv^\top)]
=
\Tr\!\big(H\,\mathbb{E}[vv^\top]\big)
=
\Tr(H).
\]
Hence,
\[
\widehat{\Tr}(H)
:=
\frac{1}{m}\sum_{k=1}^m v_k^\top H v_k
\]
is an unbiased Monte Carlo estimator of $\Tr(H)$. In our experiments, we utilise $m = 100$ random vectors. 

Crucially, this only requires Hessian--vector products. Indeed, if $g(\theta)=\nabla_\theta \mathcal{L}(\theta)$, then
\[
H(\theta)v
=
\nabla_\theta\!\big(g(\theta)^\top v\big),
\]
so $Hv$ can be computed using automatic differentiation without explicitly materialising $H$ \citep{pearlmutter1994fast}. This makes the Hutchinson estimator practical for tracking curvature during training, even when repeated many times.

In summary, the Hutchinson trace should be viewed in our experiments as a computationally efficient estimator of the \emph{average local curvature} of the loss landscape. In the near-minimum regime relevant to grokking, decreases in $\Tr(H)$ indicate that optimisation is moving toward broader, less curved basins, which is precisely the qualitative phenomenon that the LLC is intended to capture.

\newpage


\end{document}